\newcommand{\eqnum}{\leavevmode\hfill\refstepcounter{equation}\textup{\tagform@{\theequation}}}
\newcommand{\mb}[1]{\boldsymbol{\mathbf{#1}}}
\newcommand{\EA}[1]{\textnormal{EA}_{#1}}
\newcommand{\PP}[3][P]{\mb{#1}_{#2}^{#3}}
\newcommand{\Pkn}{\PP{k}{n}}
\newcommand{\Qkn}{\PP[Q]{k}{n}}
\newcommand{\Pkinf}{\PP{k}{\infty}}
\newcommand{\Qkinf}{\PP[Q]{k}{\infty}}
\newcommand{\Xkin}[4][x]{\mb{#1}_{{#2},{#3}}^{#4}}
\newcommand{\Xkn}[3][x]{\mb{#1}_{#2}^{#3}}
\newcommand{\EXP}{\operatorname{E}}
\newcommand{\VAR}{\operatorname{V}}
\newcommand{\COV}{\operatorname{C}}
\newcommand{\xrarrow}[1]{\xrightarrow{\textnormal{#1}}}
\newcommand{\tod}{\xrarrow{d}}
\newcommand{\tompw}{\xrarrow{m.p.w.}}
\newcommand{\toas}{\xrarrow{a.s.}}
\newcommand{\totv}{\xrarrow{tv}}
\newcommand{\dist}[1]{\rho_{\textnormal{#1}}}
\newcommand{\disttv}{\dist{tv}}
\newcommand{\distd}{\dist{d}}
\newcommand{\identity}{\textnormal{I}}
\newcommand{\Prob}{\textnormal{P}}
\newcommand{\UI}{\mathbb{U}_\textnormal{I}}
\newcommand{\Law}{\mathcal{L}}
\newcommand{\sigmafield}{}
\def\sigmafield/{$\sigma$-field}
\newtheorem{theorem}{Theorem}
\newtheorem{lemma}{Lemma}
\newtheorem{definition}{Definition}
\begin{document}
%
\title{A Revisit of Infinite Population Models for Evolutionary Algorithms on Continuous Optimization Problems}
%
%
%

\author{Bo~Song
        and~Victor~O.K.~Li,~\IEEEmembership{Fellow,~IEEE}
\thanks{Bo Song and Victor O.K. Li are with the Department of Electrical and Electronic Engineering, the University of Hong Kong, Pokfulam, Hong Kong (e-mail: bsong@connect.hku.hk; vli@eee.hku.hk).}
}
%
%

\markboth{IEEE TRANSACTIONS ON EVOLUTIONARY COMPUTATION}%
{Song \MakeLowercase{\textit{et al.}}: A Revisit of Infinite Population Models for Evolutionary Algorithms on Continuous Optimization Problems}
%



\maketitle

\begin{abstract}
Infinite population models are important tools for studying population dynamics of evolutionary algorithms. They describe how the distributions of populations change between consecutive generations. In general, infinite population models are derived from Markov chains by exploiting symmetries between individuals in the population and analyzing the limit as the population size goes to infinity. In this paper, we study the theoretical foundations of infinite population models of evolutionary algorithms on continuous optimization problems. First, we show that the convergence proofs in a widely cited study were in fact problematic and incomplete. We further show that the modeling assumption of exchangeability of individuals cannot yield the transition equation. Then, in order to analyze infinite population models, we build an analytical framework based on convergence in distribution of random elements which take values in the metric space of infinite sequences. The framework is concise and mathematically rigorous. It also provides an infrastructure for studying the convergence of the stacking of operators and of iterating the algorithm which previous studies failed to address. Finally, we use the framework to prove the convergence of infinite population models for the mutation operator and the $k$-ary recombination operator. We show that these operators can provide accurate predictions for real population dynamics as the population size goes to infinity, provided that the initial population is identically and independently distributed.
\end{abstract}

\begin{IEEEkeywords}
Evolutionary algorithms, infinite population models, population dynamics, convergence in distribution, theoretical analysis
\end{IEEEkeywords}

%
\IEEEpeerreviewmaketitle

\section{Introduction}
%
%
%
%

\label{intro}

\IEEEPARstart{E}{volutionary} algorithms (EAs) are general purpose optimization algorithms which saw great successes in real-world applications. They are inspired by the evolutionary process in nature. A certain number of candidate solutions to the problem at hand are modeled as individuals in a population, and through generations the algorithm evolves the population by producing new individuals and selectively replacing the old ones. The idea is that the survival probabilities of individuals in the population are related to their objective function values, or fitness values in this context. In general, individuals with more preferable objective function values or higher fitness values are more likely to survive and remain in the next generation. As a result, by the ``survival of the fittest'' principle, it is likely that after many generations the population will contain individuals with sufficiently high fitness values, such that these individuals are satisfactory solutions to the problem at hand.

Though conceptually simple, the underlying evolutionary processes and the behaviors of EAs remain to be fully understood. The difficulties lie in the fact that EAs are customizable population-based iterative stochastic algorithms, and the objective function also has great influence on their behaviors. A successful model of EAs should account for both the mechanisms of the algorithm and the influence from the objective function. One way to derive such models is to study EAs as dynamical systems. The idea is to pick a certain quantity of interest first, such as the distribution of the population or a certain statistic about it. Then, transitions in the state space of all possible outcomes about the picked quantity are studied. A Markov chain described by a transition matrix (when the state space is finite) or a difference equation (when the state space is not finite) is derived to describe how the picked quantity changes between consecutive generations.

Although dynamical system approach brings many insights about EAs, the state spaces of the models tend to grow rapidly as the population size increases. This is because in order to characterize the population dynamics accurately, the state space in the model has to be large enough to describe all the interdependencies between individuals in the current and next generations. As a result, even for time-homogeneous EAs with moderate population size, the dynamical system model is often too large and too complex to be analyzed or simulated. To overcome this issue, some researchers instead turn to studying the limiting behaviors of EAs as the population size goes to infinity. The idea is to exploit some kind of symmetry in the state space (such as all individuals have the same marginal distribution), and prove that in the limit the Markov chain can be described by a more compact model. The models built in this way are called infinite population models (IPMs).


In this paper, we follow this line of research and study IPMs of EAs on \emph{continuous} space. More specifically, we aim at rigorously proving the convergence of IPMs. Notice that in this study by convergence we usually mean a certain property of IPMs. That an IPM converges loosely means that as the population size goes to infinity, the population dynamics of the real EA converge in a sense to the population dynamics predicted by this model. This usage is different from conventional ones where it means that the EA eventually locates and gets stuck in some local or global optima. Convergence results guarantee that IPMs characterize some kind of limiting behaviors of real EAs. They are the foundations and justifications of IPMs. 

To our knowledge, there are very few research efforts which directly studied the convergence of IPMs. Among them, the studies of Qi et al. in\cite{Qi1,Qi2} are the classic and most relevant ones. Qi et al. studied the population dynamics of simple EA on continuous space. In the first part of their research\cite{Qi1}, the authors built an IPM to analyze the population dynamics of simple EA with proportionate selection and mutation. Traditionally, a transition equation is constructed to describe how the probability density functions (p.d.f.s) of the \emph{joint} distributions of individuals change between consecutive generations. The novelty of the authors' research lies in their introduction of the modeling assumption that individuals in the same generation are exchangeable, and therefore they all have the same \emph{marginal} distribution. Then, as a key result, the authors proved that as the population size goes to infinity, the marginal p.d.f.s of the populations produced by the real algorithm converge \emph{point-wisely} to the p.d.f.s. predicted by the following transition equation:
\begin{equation} \label{eqn:intro-conve}
	f_{\mb{x}_{k+1}}(x) = \frac{\int_{\mathbb{F}} \! f_{\mb{x}_{k}}(y) g(y) f_{w}(x|y)  \mathrm{d}y}{\int_{\mathbb{F}} \! f_{\mb{x}_{k}}(y)g(y)  \mathrm{d}y},
\end{equation}
where $\mathbb{F}$ is the solution space, $f_{\mb{x}_k}$ is the predicted marginal p.d.f. of the $k$th generation, $g$ is the objective function to be maximized and $f_{w}(x|y)$ is the conditional p.d.f. decided by the mutation operator. Though the transition equation of marginal distributions loses information of interdependency between individuals, it has simpler form and can still provide a relatively complete description of the population. Moreover, as proved in\cite{Qi1}, it is accurate in the limiting case when the population size goes to infinity. Furthermore, in the second part of the research\cite{Qi2}, the authors analyzed the crossover operator and modified the transition equation to include all three operators in the simple EA. Overall, the studies of Qi et al. are inspiring, especially the idea of combining the modeling assumption that individuals are exchangeable with the mathematical analysis of point-wise convergence of p.d.f.s as the population size goes to infinity.

However, as will be shown in Section \ref{qiwrong}, the convergence proof for (\ref{eqn:intro-conve}) in\cite{Qi1} is \emph{problematic}. We provide a counterexample to show that in the authors' proof a key assertion about the law of large numbers (LLN) for exchangeable random vectors is generally not true. Therefore the whole proof is unsound. Furthermore, we show that the modeling assumption of exchangeability of individuals can not yield the transition equation in general. This means that under the authors' modeling assumption, the conclusion (\ref{eqn:intro-conve}) cannot be reached.

In addition to the aforementioned problems, we also show that the authors' proofs in both\cite{Qi1} and\cite{Qi2} are \emph{incomplete}. The authors did not address the convergence of the stacking of operators and of recursively iterating the algorithm. In essence, the authors only attempted to prove the convergence of the IPM for \emph{one} iteration step. Even if the proof for (\ref{eqn:intro-conve}) is correct, it only shows that the marginal p.d.f. of the $(k+1)$th population produced by the \emph{real} algorithm converges point-wisely to $f_{\mb{x}_{k+1}}(x)$ calculated by (\ref{eqn:intro-conve}), provided that the marginal p.d.f. of the $k$th generation is $f_{\mb{x}_{k}}(x)$ and assuming that the population size goes to infinity. However, this convergence does not automatically hold for all subsequent generations. In fact, it rarely holds because $f_{\mb{x}_{k+1}}(x)$ is only accurate in the limit. Compared with finite-sized populations produced by the real algorithm, it inevitably encompasses errors. As a result, (\ref{eqn:intro-conve}) cannot be iterated to make predictions for subsequent ($>k+1$) generations.

Besides \cite{Qi1,Qi2}, we found no other studies which attempted to prove the convergence of IPMs for EAs on \emph{continuous} space. Therefore, to fill the research gap, in Section \ref{framework} we propose a general analytical framework. The novelty of our framework is that from the very start of the analysis, we model generations of the population as random elements taking values in the metric space of infinite sequences, and we use convergence in distribution instead of point-wise convergence to define the convergence of IPMs.

To understand the issues and appreciate our framework, consider an EA operating in $\mathbb{R}^d$ on a fixed continuous optimization problem with different population sizes. When the population size is $n$, denote the algorithm by $\EA{n}$. The $k$th generation produced by $\EA{n}$ can be described by the joint distribution of $n$ random vectors of $\mathbb{R}^d$, with each random vector representing an individual. Denote the random element modeling the $k$th generation by $\Pkn=(\Xkin{k}{1}{n},\Xkin{k}{2}{n},\dots,\Xkin{k}{n}{n})$. Similarly, the same EA with population size $(n+1)$ is denoted by $\EA{n+1}$, and the $k$th generation it produces is modeled by $\PP{k}{n+1}=(\Xkin{k}{1}{n+1},\Xkin{k}{2}{n+1},\dots,\Xkin{k}{n+1}{n+1})$. Finally, denote the IPM for this EA by $\EA{\infty}$, and the generations it produces by $(\Pkinf)_{k=0,1,\dots}$. Notice that each $\Pkinf$ is a random sequence. Essentially, the convergence of IPMs requires that $\EA{\infty}$ predicts \emph{every} generation produced by $\EA{n}$ as $n\to \infty$. Mathematically, this corresponds to the requirement that for each generation $k$, the sequence $\PP{k}{1},\PP{k}{2},\dots$ converges to $\Pkinf$ in some sense as $n\to \infty$.

However, it is not obvious how one can rigorously define the convergence for the sequence $(\Pkn)_{n=1,2,\dots}$. This is because $\Pkn,n=1,2,\dots$ and the limit $\Pkinf$ are all random elements taking values in different metric spaces. The range of $\Pkn$ is the Cartesian product of $n$ copies of $\mathbb{R}^d$, whereas the range of $\Pkinf$ is the infinite product space $\mathbb{R}^d\times \mathbb{R}^d \times \dots$. To overcome this issue, Qi et al. essentially defined the convergence of IPMs as $\PP{k}{n} \tompw \PP{k}{\infty}$, where $\tompw$ stands for point-wise convergence of marginal p.d.f.s. However, as mentioned, we believe their proofs are problematic and incomplete.

In this research, we took a different approach. We extended $\Pkn$, unified the ranges of random elements in a common metric space and gave a mathematically rigorous definition of sequence convergence. We assume for each generation $k$, $\EA{n}$ first generates an intermediate infinite sequence of individuals $\Qkn=(\Xkin[y]{k}{1}{n}, \Xkin[y]{k}{2}{n},\dots)$ based on the previous generation $\PP{k-1}{n}$. Here $\Qkn$ is a random sequence whose elements are conditionally independent and identically distributed (c.i.i.d.) given $\PP{k-1}{n}$. Then, $\EA{n}$ preserves the first $n$ elements of $\Qkn$ to form the new generation $\Pkn$, i.e. $\Pkn = (\Xkin[y]{k}{1}{n},\Xkin[y]{k}{2}{n},\dots,\Xkin[y]{k}{n}{n})$. Basically the modified $\EA{n}$ progresses in the order of $\dots,\Qkn,\Pkn,\PP[Q]{k+1}{n},\PP{k+1}{n},\dots$. For $\EA{\infty}$, because $\Pkinf$ is already a random sequence, we just let $\Qkinf=\Pkinf$. Then, we define that $\EA{\infty}$ is convergent if and only if for every $k$, $\Qkn \tod \Qkinf$ as $n\to \infty$, where $\tod$ represents convergence in distribution, or equivalently weak convergence. Our design has several advantages. Firstly, for every population size $n$, the sequence $\Pkn,k=1,2,\dots$ coincides exactly with the population dynamics produced by $\EA{n}$ without the intermediate sequence $\Qkn,k=1,2,\dots$. In other words, our model is a faithful model and the intermediate step does not change the population dynamics. Secondly, the ranges of $\Qkn,n=1,2,\dots$ and $\Qkinf$ are unified in the same metric space. Therefore we can rigorously define the convergence of IPMs. Finally, in our proposed framework, the convergence of the stacking of operators and of iterating the algorithm can be proved. All these benefits come from the interplay between the finite-dimensional population dynamics $\Pkn$ and its infinite dimensional extensions $\Qkn$. The only modeling assumption in our framework is that new individuals are generated c.i.i.d. given the current generation. This is a reasonable assumption because exchangeability and c.i.i.d. are equivalent given the current population. We will present the framework and related topics in Section \ref{framework}.

To illustrate the effectiveness of our framework, we perform convergence analysis of IPM of the simple EA. As our analyses show, the modeling assumption of exchangeability cannot yield the transition equation. Therefore, to obtain meaningful results, we adopt a ``stronger'' modeling assumption that individuals of the same generation in the IPM are identically and independently distributed (i.i.d.). This assumption seems restricted at first sight, but it turns out to be a reasonable one. We analyze the mutation operator and the $k$-ary recombination operator. We show that these commonly used operators have the property of producing i.i.d. populations, in the sense that if the initial population is i.i.d., as the population size goes to infinity, in the limit all subsequent generations are also i.i.d.. This means that for these operators, the transition equation in the IPM can predict the real population dynamics as the population size goes to infinity. We also show that our results hold even if these operators are stacked together and iterated repeatedly by the algorithm. These results are presented in Section \ref{analysis}. Finally, in Section \ref{conclusion} we conclude the paper and propose future research.

To be complete, regarding \cite{Qi1,Qi2}, there is a comment from Yong\cite{Qi3} with reply. However, the comment was mainly about the latter part of \cite{Qi1}, where the authors analyzed the properties of EAs based on the IPM. It did not discuss the proof for the model itself. For IPMs of EAs on \emph{discrete} optimization problems, extensive research were done by Vose et al. in a series of studies\cite{vose1,vose2,vose3,vose4}. The problems under consideration were discrete optimization problems with \emph{finite} solution space. The staring point of the authors' analysis was to model each generation of the population as an ``incidence vector'', which describes for each point in the solution space the proportion of the population it occupies. Based on this representation the authors derived transition equations between incidence vectors of consecutive generations and analyzed their properties as the population size goes to infinity. However, for EAs on \emph{continuous} solution space, the analyses of Vose et al. are not immediately applicable. This is because for continuous optimization problems the solution space is not denumerable. Therefore, the population cannot be described by a finite-dimensional incidence vector.

\section{Discussion of the Works of Qi et al.}
\label{qiwrong}

\ifpdf
\graphicspath{{Chapter3/Figs/Raster/}{Chapter3/Figs/PDF/}{Chapter3/Figs/}}
\else
\graphicspath{{Chapter3/Figs/Vector/}{Chapter3/Figs/}}
\fi

In this section we analyze the results of Qi et al. in\cite{Qi1,Qi2}. We begin by introducing some preliminaries for the analysis. Then, in Section \ref{chapter:Qi-conve}, following the notations and derivations in the authors' papers, we provide a counterexample to show that the convergence proof for the transition equation in\cite{Qi1} is problematic. We further show that the modeling assumption of exchangeability cannot yield the transition equation in general. In Section \ref{chapter:Qi-stack}, we show that the analyses in both\cite{Qi1} and\cite{Qi2} are incomplete. The authors did not prove the convergence of IPMs in the cases where operators are stacked together and the algorithm is iterated for multiple generations.

\subsection{Preliminaries}
\label{chapter:Qi-preli}

In the authors' paper\cite{Qi1}, the problem to be optimized is
\begin{equation}
\label{eqn:Qi-obj}
\arg \max\limits_{x} g(x) \textnormal{ s.t. } x\in \mathbb{F} \subseteq \mathbb{R}^m,
\end{equation}
where $\mathbb{F}$ is the solution space and $g$ is some given objective function. The analysis intends to be general; therefore no explicit form of $g$ is assumed. The algorithm to be analyzed is the simple EA with proportionate selection and mutation. Let $\mb{X}_k=(\Xkn{k}{j})_{j=1}^N$ denote the $k$th generation produced by the EA, where $N$ is the population size. To generate the $(k+1)$th population, an intermediate population $\mb{X}_k^{\prime}=(\Xkn{k}{\prime j})_{j=1}^N$ is firstly generated based on $\mb{X}_k$ by the proportionate selection operator. The elements in $\mb{X}_k^{\prime}$ are c.i.i.d given $\mb{X}_k$. The distribution of $\mb{X}_k^{\prime}$ follows the conditional probability that
\begin{equation}
\label{eqn:Qi-sel}
\textnormal{P}(\Xkn{k}{\prime i}=\Xkn{k}{j}|\mb{X}_k)=\frac{g(\Xkn{k}{j})}{\sum_{l=1}^{N}g(\Xkn{k}{l})}, \textnormal{ for all } i,j=1,2,\dots,N.
\end{equation}
After selection, each individual in $\mb{X}_k^{\prime}$ is mutated to generate individuals in $\mb{X}_{k+1}$. The mutation is conducted following the conditional p.d.f.
\begin{equation}
\label{eqn:Qi-mutat}
f(\Xkn{k+1}{i}=x|\Xkn{k}{\prime i}=y)=f_w(x|y).
\end{equation}
Overall the algorithm is illustrated in Fig. \ref{fig:Qi-simpleea}.

\begin{figure}[ht]
	\centering
	\small
	\begin{framed}
		\begin{algorithmic}[1]
			\Require{population size $N$; p.d.f. of the initial population $f_{\mb{x}_0}$}
			\State $k\gets 0$
			\State{sample the $N$ i.i.d. individuals $\Xkn{k}{1},\Xkn{k}{2},\ldots,\Xkn{k}{N}$ according to $f_{\mb{x}_0}$}
			\While{stopping criteria is not satisfied}
			\State \begin{varwidth}[t]{\linewidth-\algorithmicindent}
				select $\Xkn{k}{\prime 1},\Xkn{k}{\prime 2},\dots,\Xkn{k}{\prime N}$ from $\Xkn{k}{1},\Xkn{k}{2},\ldots,\Xkn{k}{N}$ identically and independently according to the probability that
				\[\textnormal{P}(\Xkn{k}{\prime i}=\Xkn{k}{j}|\mb{X}_k)=\frac{g(\Xkn{k}{j})}{\sum_{l=1}^{N}g(\Xkn{k}{l})}, \forall i,j=1,2,\dots,N\]\par
				\Comment Selection
			\end{varwidth}
			\State
			\begin{varwidth}[t]{\linewidth-\algorithmicindent}
				perturb $\Xkn{k}{\prime 1},\Xkn{k}{\prime 2},\dots,\Xkn{k}{\prime N}$ to form the new generation $\Xkn{k+1}{1},\Xkn{k+1}{2},\ldots,\Xkn{k+1}{N}$ according to the common conditional p.d.f.
				\[f(\Xkn{k+1}{i}=x|\Xkn{k+1}{\prime i}=y)=f_w(x|y), \forall i=1,2,\dots,N\]\par
				\Comment Mutation
			\end{varwidth}
			\State $k\gets k+1$
			\EndWhile
		\end{algorithmic}
	\end{framed}
	\caption{The pseudocode of the simple EA}
	\label{fig:Qi-simpleea}
\end{figure}

After presenting the optimization problem and the algorithm, the authors proved the convergence of the IPM. It is the main result in\cite{Qi1}. It can be reiterated as follows.

\begin{theorem}[Theorem 1 in Qi et al.\cite{Qi1}]
	\label{thm:Qi-conv}
	Assume that the fitness function $g(x)$ in (\ref{eqn:Qi-obj}) and the mutation operator of simple EA described by (\ref{eqn:Qi-mutat}) satisfy the following conditions:
	\begin{enumerate}
		\item $0<g_{\min}\leq g(x) \leq g_{\max}< \infty,\forall x \in \mathbb{F}$. \label{eqn:Qi-condi1}
		\item $\sup\limits_{x,y\in \mathbb{R}^d} f_w(x|y)\leq M<\infty$. \label{eqn:Qi-condi2}
	\end{enumerate}
	Then as $n\to \infty$, the time history of the simple EA can be described by a sequence of random vectors $(\mb{x}_k)_{k=0}^{\infty}$ with densities
	\begin{equation}
	\label{eqn:Qi-trans}
	f_{\mb{x}_{k+1}}(x) = \frac{\int_{\mathbb{F}} \! f_{\mb{x}_{k}}(y) g(y) f_{w}(x|y) \, \mathrm{d}y}{\int_{\mathbb{F}} \! f_{\mb{x}_{k}}(y)g(y) \, \mathrm{d}y}.
	\end{equation}
\end{theorem}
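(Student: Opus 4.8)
The plan is to prove the statement by induction on the generation index $k$, expressing the conditional law of each new individual (given the current generation) as an explicit weighted mixture of mutation kernels and then passing finite-population averages to the integrals in~(\ref{eqn:Qi-trans}) via the law of large numbers, with the boundedness hypotheses~\ref{eqn:Qi-condi1} and~\ref{eqn:Qi-condi2} supplying the domination needed to interchange limit and expectation. For population size $n$, write $\mb{X}_k=(\Xkn{k}{1},\dots,\Xkn{k}{n})$ for the $k$th generation. As the induction hypothesis I would carry the statement that, as $n\to\infty$, the coordinates of $\mb{X}_k$ are exchangeable and their common marginal density converges pointwise on $\mathbb{F}$ to $f_{\mb{x}_k}$, where $f_{\mb{x}_0}$ is the prescribed initial density and $f_{\mb{x}_{k+1}}$ is built from $f_{\mb{x}_k}$ by~(\ref{eqn:Qi-trans}). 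The base case $k=0$ is immediate, since the initial population is i.i.d.\ with density $f_{\mb{x}_0}$; moreover exchangeability is automatically preserved along the recursion, because each generation is conditionally i.i.d.\ given the previous one.

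For the inductive step I would condition on $\mb{X}_k$. Composing the selection rule~(\ref{eqn:Qi-sel}) with the mutation kernel~(\ref{eqn:Qi-mutat}), every coordinate $\Xkn{k+1}{i}$ of the next generation has, given $\mb{X}_k$, the common conditional density
\begin{equation}
\label{eqn:mix-step}
f(\Xkn{k+1}{i}=x|\mb{X}_k) = \frac{\sum_{j=1}^{n} g(\Xkn{k}{j})\,f_w(x|\Xkn{k}{j})}{\sum_{j=1}^{n} g(\Xkn{k}{j})},
\end{equation}
so the marginal density of $\Xkn{k+1}{i}$ is $\EXP[f(\Xkn{k+1}{i}=x|\mb{X}_k)]$, the expectation of~(\ref{eqn:mix-step}) over $\mb{X}_k$. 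Dividing the numerator and denominator of~(\ref{eqn:mix-step}) by $n$ and invoking the law of large numbers for $(\Xkn{k}{j})_{j=1}^{n}$, the numerator average would converge to $\int_{\mathbb{F}} f_{\mb{x}_k}(y)\,g(y)\,f_w(x|y)\,\mathrm{d}y$ and the denominator average to $\int_{\mathbb{F}} f_{\mb{x}_k}(y)\,g(y)\,\mathrm{d}y$, which is $\geq g_{\min}>0$ by condition~\ref{eqn:Qi-condi1}; hence~(\ref{eqn:mix-step}) would converge to $f_{\mb{x}_{k+1}}(x)$. Since condition~\ref{eqn:Qi-condi2} makes~(\ref{eqn:mix-step}) bounded above by $M$, the bounded convergence theorem would let me move this limit through the outer expectation, giving $f_{\Xkn{k+1}{i}}(x)\to f_{\mb{x}_{k+1}}(x)$ pointwise and closing the induction; re-running the argument at each $k$ would then describe the whole time history of the EA in the limit by $(\mb{x}_k)_{k=0}^{\infty}$.

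The step I expect to be the main obstacle is precisely this appeal to the law of large numbers for $(\Xkn{k}{j})_{j=1}^{n}$ inside~(\ref{eqn:mix-step}). The coordinates of $\mb{X}_k$ are only \emph{exchangeable}, and for an exchangeable sequence de Finetti's theorem represents the law as a mixture of i.i.d.\ laws, so $\tfrac1n\sum_j g(\Xkn{k}{j})$ and $\tfrac1n\sum_j g(\Xkn{k}{j})\,f_w(x|\Xkn{k}{j})$ converge only to the corresponding conditional expectations given the tail $\sigma$-field, which are genuinely \emph{random} unless the directing mixing measure is a point mass; consequently the ratio need not converge to the deterministic $f_{\mb{x}_{k+1}}(x)$, and merely knowing that the marginals of $\mb{X}_k$ tend to $f_{\mb{x}_k}$ is not enough. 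Repairing the argument seems to require carrying a strictly stronger property than convergence of marginals --- for instance, that the empirical distribution of $\mb{X}_k$ converges to the \emph{deterministic} measure with density $f_{\mb{x}_k}$, equivalently genuine asymptotic independence of the coordinates --- and then propagating it through selection and mutation, which at the $(k+1)$th step amounts to controlling a triangular array of conditionally i.i.d.\ draws from the data-dependent density~(\ref{eqn:mix-step}) that only tends to $f_{\mb{x}_{k+1}}$ as $n\to\infty$; the i.i.d.\ initial population built into the algorithm is what lets this induction start. A further, subtler difficulty is that pointwise convergence of marginal densities is too weak a notion to iterate cleanly --- after one step the real population is only \emph{approximately} $f_{\mb{x}_{k+1}}$-distributed --- so a more robust mode of convergence, such as weak convergence of suitable infinite-sequence extensions of the populations, would really be needed to carry the conclusion forward across all generations.
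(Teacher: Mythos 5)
Your inductive scheme is essentially the strategy of Qi et al., and the obstacle you flag at the end is not a repairable technicality but a fatal one: Section~\ref{chapter:Qi-conve} of this paper is devoted to showing that Theorem~\ref{thm:Qi-conv} \emph{cannot} be established under the exchangeability assumption, so no completion of your argument as written exists. Concretely, writing $\mb{\eta}_k^N=\frac{1}{N}\sum_{l}g(\Xkn{k}{l})$, the law of large numbers for exchangeable variables gives $\mb{\eta}_k^N\toas\mb{\eta}_k$ where $\mb{\eta}_k$ is a genuinely \emph{random} limit (the conditional expectation given the directing measure), exactly as your de~Finetti remark predicts. The paper exhibits a counterexample ($\mb{y}_l=\mb{z}_l+\mb{y}$ with $\{\mb{z}_l\}$ i.i.d.\ centered and $\mb{y}$ independent of them) showing that this limit is not independent of the finite averages, and computes $\VAR(\mb{\eta}_k)=\COV[g(\Xkn{k}{1}),g(\Xkn{k}{2})]$, which exchangeability does not force to vanish, so $\mb{\eta}_k$ is not degenerate in general. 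Consequently $\EXP[\mb{\xi}_k(x)/\mb{\eta}_k^N]\to\EXP[\mb{\xi}_k(x)/\mb{\eta}_k]$ (Lemma~\ref{lemma:Qi-convExietak}), and this limit differs in general from $\EXP[\mb{\xi}_k(x)]/\EXP[\mb{\eta}_k]$, which is the right-hand side of (\ref{eqn:Qi-trans}); your ratio of empirical averages in the inductive step therefore converges to a random quantity, not to the deterministic density the theorem asserts.

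Your two proposed repairs are precisely the program the paper then carries out: replace exchangeability by the i.i.d.\ assumption on the limiting population (Section~\ref{chapter:analy-iid}), and replace pointwise convergence of marginal densities by convergence in distribution of infinite-sequence extensions, $\Qkn\tod\Qkinf$, so that the one-step limit can legitimately be stacked and iterated across generations via the stacking property of Section~\ref{chapter:analy-sufficient}. So the verdict is: as a proof of the stated theorem your argument has an unfixable gap at the law-of-large-numbers step, but your diagnosis of where and why it breaks, and of the stronger hypotheses needed to rescue a corrected statement, coincides with the paper's own analysis.
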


In Theorem \ref{thm:Qi-conv}, $f_{\mb{x}_k}$ is the marginal p.d.f. of the $k$th generation predicted by the IPM.

As the proof for Theorem \ref{thm:Qi-conv} in\cite{Qi1} and the analyses in this paper use the concept of exchangeability in probability theory, we list its definition and some basic facts.

\begin{definition}[Exchangeable random variables, Definition 1.1.1 in\cite{exch1}]
	\label{def:Qi-exchangeable}
	A finite set of random variables $\{\mb{x}_i\}_{i=1}^n$ is said to be exchangeable if the joint distribution of $(\mb{x}_i)_{i=1}^n$ is invariant with respect to permutations of the indices $1,2,\dots,n$. A collection of random variables $\{\mb{x}_\alpha:\alpha \in \operatorname{\Gamma}\}$ is said to be exchangeable if every finite subset of $\{\mb{x}_\alpha:\alpha \in \operatorname{\Gamma}\}$ is exchangeable.
\end{definition}

Definition \ref{def:Qi-exchangeable} can also be extended to cover exchangeable random vectors or exchangeable random elements by replacing the term ``random variables'' in the definition with the respective term. One property of exchangeability is that if $\{\mb{x}_i\}_{i=1}^n$ are $n$ exchangeable random elements, then the joint distributions of any $1\leq k\leq n$ distinct ones of them are always the same (Proposition 1.1.1 in\cite{exch1}). When $k=1$ this property indicates that $\{\mb{x}_i\}_{i=1}^n$ have the same marginal distribution. Another property is that a collection of random elements are exchangeable if and only if they are c.i.i.d. given some \sigmafield/ $\mathcal{G}$ (Theorem 1.2.2 in\cite{exch1}). Conversely, a collection of c.i.i.d. random elements are always exchangeable. Finally, an obvious fact is that i.i.d. random elements are exchangeable, but the converse is not necessarily true.

It can be seen that the simple EA generates c.i.i.d. individuals given the current population. Therefore, the individuals within the same generation are exchangeable, and they have the same marginal distribution. This leads to the transition equation of marginal p.d.f.s in Theorem \ref{thm:Qi-conv}. To analyze its proof and construct our framework, we will also use the definition and properties of exchangeability.

\subsection{Convergence Proof of the Transition Equation}
\label{chapter:Qi-conve}

In this section we analyze the proof of Theorem \ref{thm:Qi-conv} and show that it is incorrect. The proof by Qi et al. is in Appendix A of\cite{Qi1}. In the proof the authors assumed that individuals in the same generation are exchangeable, therefore they have the same marginal distribution. After a series of derivation steps, the authors managed to obtain a transition equation between the density functions of $\Xkn{k+1}{i}$ and $\mb{X}_k$:
\begin{align}
f_{\Xkn{k+1}{i}}(x)
=&\iint_{\mathbb{F}^N} \! \frac{{g({y_j}){f_w}(x|{y_j})}}{{\frac{1}{N}\sum\limits_{l = 1}^N {g({y_l})} }}{f_{\mb{X}_k}}({y_1},{y_2}, \ldots ,{y_n})\,\nonumber\\
&\mathrm{d}{y_1} \mathrm{d}{y_2} \ldots \mathrm{d}{y_n} \textnormal{ for any fixed } i,j \label{eqn:Qi-tr1}\\
=&\EXP\left[ {\frac{{{\mb{\xi} _k}(x)}}{{\mb{\eta} _k^N}}} \right], \label{eqn:Qi-tr2}
\end{align}
where in (\ref{eqn:Qi-tr2}),
\begin{align}
\mb{\xi}_k(x) &\triangleq g(\Xkn{k}{j})f_w(x|\Xkn{k}{j}) \textnormal{ for any fixed }j, \label{eqn:Qi-xik}\\
\mb{\eta}_k^N &\triangleq \frac{1}{N} \sum\limits_{l=1}^{N}g(\Xkn{k}{l}). \label{eqn:Qi-etakn}
\end{align}

(\ref{eqn:Qi-tr1}) and (\ref{eqn:Qi-tr2}) are exact. They accurately describe how the marginal p.d.f. for any individual in the next generation can be calculated from the joint p.d.f. of individuals in the current generation. Noticing that $\mb{\eta}_k^N$ is the average of the exchangeable random variables $\{g(\Xkn{k}{j})\}_{j=1}^{N}$, by the LLN for exchangeable random variables, the authors asserted that
\begin{equation}
\label{eqn:Qi-etakn=etak}
\lim\limits_{N\to \infty}\mb{\eta}_k^N=\mb{\eta}_k \textnormal{ almost surely (a.s.)}.
\end{equation}
The authors further asserted that $\mb{\eta}_k$ is itself a random variable, satisfying
\begin{equation}
\label{eqn:Qi-Eetak}
\EXP [ \mb{\eta}_k ]=\EXP [ g(\Xkn{k}{j}) ] \textnormal{ for any }j.
\end{equation}

(\ref{eqn:Qi-etakn=etak}) and (\ref{eqn:Qi-Eetak}) correspond to (A13) and (A14) in Appendix A of\cite{Qi1}, respectively. The authors' proof is  \emph{correct} until this step. However, the authors then asserted that
\begin{quotation}
	\noindent $\mb{\eta}_k$ is independent of $\mb{\eta}_k^N$ for any finite N. In particular, $\mb{\eta}_k$ is independent of $\mb{\eta}_k^1=g(\mb{x}_k^j)$ for all $j=1,2,\dots,N$.
\end{quotation}\eqnum\label{eqn:Qi-assertX}

Based on this assertion the authors then proved that for all $k$ and $x$,
\begin{equation}
\label{eqn:Qi-qilimit}
\lim_{N\to 0}
\left|
\EXP \left[ \frac{\mb{\xi}_k (x)}{\mb{\eta}_k^N} \right] -
\frac{ \EXP \left[ \mb{\xi}_k (x) \right] }{\EXP \left[ \mb{\eta}_k \right]}
\right| = 0.
\end{equation}
Therefore, the p.d.f. in (\ref{eqn:Qi-tr2}) converges point-wisely to $\frac{ \EXP \left[ \mb{\xi}_k (x) \right] }{\EXP \left[ \mb{\eta}_k \right]}$. Noticing that the expression of $\frac{ \EXP \left[ \mb{\xi}_k (x) \right] }{\EXP \left[ \mb{\eta}_k \right]}$ is equal to the right hand side of (\ref{eqn:Qi-trans}), the authors claimed that Theorem \ref{thm:Qi-conv} is proved.  

In the following, we provide a counterexample to show that assertion (\ref{eqn:Qi-assertX}) is not true. Then, we carry out further analysis to show that under the modeling assumption of exchangeability, the conclusion in (\ref{eqn:Qi-qilimit}), or equivalently Theorem \ref{thm:Qi-conv}, cannot be true in general.

\subsubsection{On Assertion (\ref{eqn:Qi-assertX})}\label{chapter:Qi-onassert}
We first reformulate the assertion. Since $\{\Xkn{k}{l}\}_{l=1}^N$ are exchangeable, $\{g(\Xkn{k}{l})\}_{l=1}^N$ are exchangeable (Property 1.1.2 in\cite{exch1}). 
Let $\mb{y}_l = g(\mb{x}_k^l),l=1,\dots,N$. Then the premises of Theorem \ref{thm:Qi-conv} are equivalent to
\begin{equation}
\label{eqn:Qi-ylcondi1}
\{\mb{y}_l\}_{l=1}^N \textnormal{ are exchangeable and }g_{\min} \leq \mb{y}_l \leq g_{\max}.
\end{equation}
Let $\mb{y}=\mb{\eta}_k$. According to (\ref{eqn:Qi-etakn}), (\ref{eqn:Qi-etakn=etak}) and (\ref{eqn:Qi-Eetak}), $\mb{y}$ has the properties that
\begin{empheq}[left=\empheqlbrace]{align}
\lim\limits_{N\to \infty} \frac{1}{N} \sum_{l=1}^{N} \mb{y}_l = \mb{y}, \textnormal{ a.s.}, \label{eqn:Qi-ylcondi2}
\\
\EXP(\mb{y})=\EXP(\mb{y}_l) \textnormal{ for any }l. \label{eqn:Qi-ylcondi3}
\end{empheq}
Since $g$ is a general function, there is no other restrictions for $\{\mb{y}_l\}_{l=1}^N$ and $\mb{y}$. Therefore, (\ref{eqn:Qi-assertX}) is equivalent to the following assertion:
\begin{quote}
	For any $\{\mb{y}_l\}_{l=1}^N$ and $\mb{y}$ satisfying (\ref{eqn:Qi-ylcondi1}), (\ref{eqn:Qi-ylcondi2}) and (\ref{eqn:Qi-ylcondi3}), $\mb{y}$ and $\frac{1}{N} \sum_{l=1}^{N} \mb{y}_l$ are independent for any finite $N$. In particular, $\mb{y}$ is independent of $\mb{y}_l$ for any $l$.
\end{quote}\eqnum\label{eqn:Qi-assertY}

However, we use the following counterexample (modified from Example 1.1.1 and related discussions on pages 11-12 in\cite{exch1}) to show that assertion (\ref{eqn:Qi-assertY}) is not true. Therefore (\ref{eqn:Qi-assertX}) is not true.

\subsubsection{Counterexample} Let $\{\mb{z}_l\}_{l=1}^{\infty}$ be a sequence of i.i.d. random variables satisfying
\[-\frac{g_{\max}-g_{\min}}{4} \leq \mb{z}_l \leq \frac{g_{\max}-g_{\min}}{4} \textnormal{ and } \EXP(\mb{z}_l)=0\]
for all $l$. Let $\mb{y}$ be a random variable \emph{independent} of $\{\mb{z}_l\}_{l=1}^{\infty}$ satisfying
\[\frac{g_{\max}+3 g_{\min}}{4} \leq \mb{y} \leq \frac{3 g_{\max}+g_{\min}}{4}\] and \[\EXP(\mb{y})=\frac{g_{\max}+g_{\min}}{2}.\]
Finally, let $\mb{y}_l=\mb{z}_l+\mb{y}$ for all $l$.

It can easily be verified that $\{\mb{y}_l\}_{l=1}^{\infty}$ and $\mb{y}$ satisfy (\ref{eqn:Qi-ylcondi1}) and (\ref{eqn:Qi-ylcondi3}). Since $\mb{z}_l$ is bounded, $\EXP(|\mb{z}_l |)<\infty$ for any $l$. By the strong law of large numbers (SLLN) for i.i.d. random variables,
\[\frac{1}{N} \sum_{l=1}^{N} \mb{z}_l\to 0 \textnormal{ a.s. as } N \to \infty.\]
Therefore (\ref{eqn:Qi-ylcondi2}) is also satisfied, i.e. $\mb{y}$ is the limit of $\frac{1}{N} \sum_{l=1}^{N} \mb{y}_l$ as $N\to \infty$. However, because $\frac{1}{N} \sum_{l=1}^{N} \mb{y}_l=\mb{y}+\frac{1}{N} \sum_{l=1}^{N} \mb{z}_l$ and $\mb{y}$ is independent of $\{\mb{z}_l\}_{l=1}^{\infty}$, it can be seen that $\frac{1}{N} \sum_{l=1}^{N} \mb{y}_l$ is \emph{not} independent of $\mb{y}$ except for some degenerate cases (for example when $\mb{y}$ equals to a constant). In particular, in general $\mb{y}_l=\mb{y}+\mb{z}_l$ is \emph{not} independent of $\mb{y}$ for any $l$. Therefore, assertion (\ref{eqn:Qi-assertY}) is not true. Equivalently, assertion (\ref{eqn:Qi-assertX}) is not true.

\subsubsection{Further Analysis}
\label{chapter:Qi-futher}

In\cite{Qi1} the authors intended to prove Theorem \ref{thm:Qi-conv}, or equivalently (\ref{eqn:Qi-qilimit}). As shown by the counterexample, assertion (\ref{eqn:Qi-assertX}) is not true. This renders the authors' proof for (\ref{eqn:Qi-qilimit}) invalid.

In the following, we carry out further analysis to show that (\ref{eqn:Qi-qilimit}) cannot be true even considering other methods of proof and adding new sufficient conditions. Therefore, in general, Theorem \ref{thm:Qi-conv} cannot be true.

To begin with, consider the random variable ${\frac{{{\mb{\xi} _k}(x)}}{{\mb{\eta} _k^N}}}$. We prove the following lemma.

\begin{lemma}
	\label{lemma:Qi-convExietak}
	$\EXP\left( {\frac{{{\mb{\xi} _k}(x)}}{{\mb{\eta} _k^N}}} \right) \to \EXP\left( {\frac{{{\mb{\xi} _k}(x)}}{{\mb{\eta} _k}}} \right)$ as $N \to \infty$.
\end{lemma}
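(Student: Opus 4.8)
The plan is to prove this by the bounded convergence theorem, using the hypotheses of Theorem~\ref{thm:Qi-conv} to supply a uniform bound on the integrands and the already-established almost-sure limit (\ref{eqn:Qi-etakn=etak}) to supply their pointwise convergence. First I would record the elementary bounds coming from conditions (\ref{eqn:Qi-condi1}) and (\ref{eqn:Qi-condi2}). Since $g_{\min}\le g(\Xkn{k}{j})\le g_{\max}$ and $0\le f_w(x|\Xkn{k}{j})\le M$, the definition (\ref{eqn:Qi-xik}) gives $0\le \mb{\xi}_k(x)\le g_{\max}M$ for every $x$. Averaging $g_{\min}\le g(\Xkn{k}{l})\le g_{\max}$ over $l=1,\dots,N$ in (\ref{eqn:Qi-etakn}) yields $g_{\min}\le\mb{\eta}_k^N\le g_{\max}$ for every $N$; passing to the limit via (\ref{eqn:Qi-etakn=etak}) gives $g_{\min}\le\mb{\eta}_k\le g_{\max}$ almost surely, so in particular $\mb{\eta}_k>0$ a.s.\ and every quotient in the statement is well defined.

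Combining these, $0\le \mb{\xi}_k(x)/\mb{\eta}_k^N\le g_{\max}M/g_{\min}$ for all $N$, so the random variables $\mb{\xi}_k(x)/\mb{\eta}_k^N$ are uniformly bounded by the constant $g_{\max}M/g_{\min}$, which is integrable on the underlying probability space. Moreover, fixing a sample point, $\mb{\eta}_k^N\to\mb{\eta}_k$ a.s.\ with the limit lying in $[g_{\min},g_{\max}]$, and $\mb{\xi}_k(x)$ does not depend on $N$; since $t\mapsto \mb{\xi}_k(x)/t$ is continuous at $t=\mb{\eta}_k\ge g_{\min}>0$, we get $\mb{\xi}_k(x)/\mb{\eta}_k^N\to \mb{\xi}_k(x)/\mb{\eta}_k$ a.s. The bounded convergence theorem then gives $\EXP\left(\mb{\xi}_k(x)/\mb{\eta}_k^N\right)\to\EXP\left(\mb{\xi}_k(x)/\mb{\eta}_k\right)$, which is the claim.

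The only point needing care — a matter of setup rather than a genuine obstacle — is that $\Xkn{k}{1},\Xkn{k}{2},\dots$ and the limit $\mb{\eta}_k$ must be realized on a common probability space for the statement $\mb{\eta}_k^N\to\mb{\eta}_k$ a.s.\ to be meaningful; this is precisely the setting in which (\ref{eqn:Qi-etakn=etak}) was asserted (the LLN for exchangeable sequences, via de Finetti's representation), which the proof of Theorem~\ref{thm:Qi-conv} already accepts, so I would simply inherit it. I expect no substantive difficulty in the argument itself. The significance of the lemma lies not in its proof but in its conclusion: the limit of $\EXP\left(\mb{\xi}_k(x)/\mb{\eta}_k^N\right)$ is $\EXP\left(\mb{\xi}_k(x)/\mb{\eta}_k\right)$, which in general differs from $\EXP\left[\mb{\xi}_k(x)\right]/\EXP\left[\mb{\eta}_k\right]$, the quantity appearing on the right-hand side of (\ref{eqn:Qi-trans}); this sets up the subsequent argument that Theorem~\ref{thm:Qi-conv} fails in general.
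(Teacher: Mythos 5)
Your proposal is correct and follows essentially the same route as the paper's proof: establish $g_{\min}\leq \mb{\eta}_k \leq g_{\max}$ a.s.\ from (\ref{eqn:Qi-etakn=etak}), use continuity of $t\mapsto 1/t$ on $(0,\infty)$ to get $\mb{\xi}_k(x)/\mb{\eta}_k^N \toas \mb{\xi}_k(x)/\mb{\eta}_k$, and conclude via the uniform bound $Mg_{\max}/g_{\min}$ and dominated (here, bounded) convergence. The only cosmetic difference is that the paper invokes Lebesgue's Dominated Convergence Theorem where you cite the bounded convergence theorem, which is the same argument with a constant dominating function.
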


\begin{proof}
	According to (\ref{eqn:Qi-etakn=etak}), $\mb{\eta}_k^N \toas \mb{\eta}_k$. Since $g_{\min} \leq \mb{\eta}_k^N \leq g_{\max}$, $0<g_{\min} \leq \mb{\eta}_k \leq g_{\max}$ almost surely.
	
	Since $h(x)=\frac{1}{x}$ is continuous on $(0,\infty)$, we have
	\[		
	h(\mb{\eta}_k^N)={\frac{1}{{\mb{\eta} _k^N}}} \toas 
	h(\mb{\eta}_k)={\frac{1}{{\mb{\eta} _k}}} \textnormal{ (Proposition 47.2 in\cite{prob})}.
	\]
	
	Then we have
	\[
	{\frac{\mb{\xi}_k (x)}{{\mb{\eta} _k^N}}} \toas {\frac{\mb{\xi}_k (x)}{{\mb{\eta} _k}}}
	\textnormal{ (Proposition 47.4 (ii) in\cite{prob})}.
	\]
	
	Finally, by the conditions in Theorem \ref{thm:Qi-conv}, $0 \leq \frac{\mb{\xi}_k (x)}{{\mb{\eta} _k^N}} \leq \frac{M g_{\max}}{g_{\min}}$. By the Lebesgue's Dominated Convergence Theorem (Proposition 11.30 in\cite{prob}), we have $\EXP\left( {\frac{\mb{\xi}_k (x)}{{\mb{\eta} _k^N}}} \right) \to
	\EXP\left( {\frac{\mb{\xi}_k (x)}{{\mb{\eta} _k}}} \right)$ as $N\to \infty$.	
\end{proof}

Now by Lemma \ref{lemma:Qi-convExietak}, (\ref{eqn:Qi-qilimit}) is equivalent to
\[
\label{eqn:Qi-ExikDivEetak}
\EXP\left( {\frac{\mb{\xi}_k (x)}{{\mb{\eta} _k}}} \right) = \frac{ \EXP \left[ \mb{\xi}_k (x) \right] }{\EXP \left[ \mb{\eta}_k \right]}. \tag{$\triangle$}
\]

Now it is clear that if the only assumption is exchangeability, ($\triangle$) is \emph{not} true even considering other methods of proof. Of course, if (\ref{eqn:Qi-assertX}) is true, $\mb{\xi}_k(x)$ and $\mb{\eta}_k$ are independent, then (\ref{eqn:Qi-ExikDivEetak}) is true. However, as already shown by the counterexample, (\ref{eqn:Qi-assertX}) is not true in general. Therefore, (\ref{eqn:Qi-ExikDivEetak}), and equivalently Theorem \ref{thm:Qi-conv}, are in general not true.

A natural question then arises: is it possible to introduce some reasonable sufficient conditions such that ($\triangle$) can be proved? One of such conditions frequently used is that $\mb{\eta}_k=\EXP[ g(\Xkn{k}{j})]$, i.e. $\mb{\eta}_k^N$ converges to its expectation, a constant which equals $\EXP[ g(\Xkn{k}{j})]$ for any $j$. However, the following analysis shows that given the modeling assumption of exchangeability, this condition is not true in general. Therefore it cannot be introduced.

For exchangeable random variables $\{g(\Xkn{k}{l})\}_{l=1}^N$, we have
\begin{align}
\VAR(\mb{\eta}_k) =& \lim\limits_{N\to \infty} \VAR(\mb{\eta}_k^N) \label{eqn:Qi-further-step1}\\
=&\lim\limits_{N\to \infty} \left\{
\EXP{\left[ \frac{\sum\limits_{l=1}^{N} g(\Xkn{k}{l})}{N} \right]}^2-
{\left[ \EXP\frac{\sum\limits_{l=1}^{N} g(\Xkn{k}{l})}{N} \right]}^2
\right\} \nonumber\\
=&\lim\limits_{N\to \infty} \frac{1}{N^2}\left\{
\sum\limits_{l=1}^{N} \VAR \left[ g(\Xkn{k}{l}) \right] \right. \nonumber +\\
&\left. \sum\limits_{i = 1 }^{N} \sum\limits_{j = 1,j\neq i }^{N} \COV \left[ g(\Xkn{k}{i}), g(\Xkn{k}{j}) \right]
\right\} \nonumber\\
=&\lim\limits_{N \to \infty} \left\{
\frac{\VAR \left[ g(\Xkn{k}{1}) \right]}{N} +
\frac{N-1}{N} \COV \left[ g(\Xkn{k}{1}),g(\Xkn{k}{2}) \right]
\right\} \label{eqn:Qi-further-step2}\\
=&\COV \left[ g(\Xkn{k}{1}),g(\Xkn{k}{2}) \right], \label{eqn:Qi-further-step3}
\end{align}
where $\VAR(\mb{x})$ is the variance of $\mb{x}$ and $\COV(\mb{x},\mb{y})$ is the covariance of $\mb{x}$ and $\mb{y}$. (\ref{eqn:Qi-further-step1}) is by the boundedness of $\mb{\eta}_k^N$ and the Lebesgue's Dominated Convergence Theorem, (\ref{eqn:Qi-further-step2}) is by the exchangeability of $\{ \Xkn{k}{j} \}_{j=1}^N$, and (\ref{eqn:Qi-further-step3}) is by the boundedness of $g$ and pushing $N$ to infinity. Now it is clear that if the only modeling assumption is exchangeability, there is no guarantee that $\COV \left[ g(\Xkn{k}{1}),g(\Xkn{k}{2}) \right] = 0$. Therefore, in general $\mb{\eta}_k^N$ does not converge to a constant. Thus this condition cannot be introduced as a sufficient condition in order to prove (\ref{eqn:Qi-ExikDivEetak}).

\subsubsection{Summary}
\label{chapter:Qi-conv-summary}

As the analyses in this section show, the transition equation (\ref{eqn:Qi-trans}) does not hold under the modeling assumption of exchangeability. However, it does not preclude the possibility of enhancing the modeling assumption so that it can yield analytical results similar to the transition equation (\ref{eqn:Qi-trans}). We deal with this issue by adopting the ``stronger'' i.i.d. assumption when building IPMs. However, before presenting our framework and analyses, we show why the proofs in both\cite{Qi1} and\cite{Qi2} are \emph{incomplete}.

\subsection{The Issue of the Stacking of Operators and Iterating the Algorithm}
\label{chapter:Qi-stack}

In the following, we discuss IPMs from another perspective. Consider an EA with only one operator. Let the operator be denoted by $\operatorname{H}$. When the population size is $n$, denote this EA by $\EA{n}$ and the operator it \emph{actually} uses by $\operatorname{H}_n$. Let $\Pkn=(\Xkin{k}{i}{n})_{i=1}^n$ denote the $k$th generation produced by $\EA{n}$. Then the transition rules between consecutive generations produced by $\EA{n}$ can be described by $\PP{k+1}{n}=\operatorname{H}_n (\Pkn)$. In Table \ref{tab:Qi-iterating}, we write down the population dynamics of $\EA{n}$. Each row in Table \ref{tab:Qi-iterating} shows the population dynamics produced by $\EA{n}$. In the table $\PP{k}{n}$ is expanded as $[\operatorname{H}_n]^k(\PP{0}{n})$. Let $\EA{\infty}$ denote the IPM, and $\Pkinf=[\operatorname{H}_{\infty}]^k(\PP{0}{\infty})$ denote the populations predicted by $\EA{\infty}$. Then we can summarize the results in\cite{Qi1} in the following way.
\begin{table}
	\renewcommand{\arraystretch}{1.5}
	\caption{Population dynamics of $\EA{n}$ under operator $\operatorname{H}$}
	\label{tab:Qi-iterating}
	\centering
	\begin{tabular}{l | llll}
		\backslashbox{$\EA{n}$}{k} & $0$              & $1$                                         & $2$                                                                    & $\dots$ \\ \hline
		$\EA{1}$                   & $\PP{0}{1}$      & $\operatorname{H}_1(\PP{0}{1})$             & $\operatorname{H}_1(\operatorname{H}_1(\PP{0}{1}))$                    & \dots   \\
		\vdots                     & \vdots           & \vdots                                      & \vdots                                                                 &  \\
		$\EA{n}$                   & $\PP{0}{n}$      & $\operatorname{H}_n(\PP{0}{n})$             & $\operatorname{H}_n(\operatorname{H}_n(\PP{0}{n}))$                    & \dots   \\
		\vdots                     & \vdots           & \vdots                                      & \vdots                                                                 &  \\
		$\EA{\infty}$              & $\PP{0}{\infty}$ & $\operatorname{H}_{\infty}(\PP{0}{\infty})$ & $\operatorname{H}_{\infty}(\operatorname{H}_{\infty}(\PP{0}{\infty}))$ & \dots
	\end{tabular}
\end{table}

Let $\operatorname{H}$ represent the \emph{combined} operator of proportionate selection and mutation. Though the authors originally developed the transition equation from the $k$th to the $(k+1)$th generation, without loss of generality we can consider only the populations from the initial generation to the onward ones. Assume that the initial population comes from a \emph{known} sequence of individuals, represented by $\mb{P}_0={(\mb{x}_i)}_{i=1}^{\infty}$. For $\EA{n}$, its initial population $\PP{0}{n}$ consists of the first $n$ elements of $\mb{P}_0$, i.e. $\PP{0}{n}=(\mb{x}_i)_{i=1}^n$. Let $\PP{0}{\infty}=\mb{P}_0$. This setting represents the fact that $\EA{n}$ uses the same initial population, and $\EA{\infty}$ knows this exact initial population. The aim of $\EA{\infty}$ is to predict the subsequent populations. Considering that $\PP{0}{n}$ and $\PP{0}{\infty}$ are all from $\mb{P}_0$, if we redefine $\operatorname{H}_n$ to be operators on $\mb{P}_0$ which only takes the first $n$ elements to produce the next generation, then the authors essentially proved that
\begin{equation}
\label{eqn:Qi-onestep}
\operatorname{H}_{n}(\mb{P}_0) \tompw \operatorname{H}_{\infty}(\mb{P}_0) \textnormal{ as } n \to \infty,
\end{equation}
where m.p.w. stands for point-wise convergence of marginal p.d.f.s.

However, apart from the fact that this proof is problematic, the authors' proof covers only \emph{one} iteration step, corresponding to the column-wise convergence of the $k=1$ column in Table \ref{tab:Qi-iterating}. The problem is that even if (\ref{eqn:Qi-onestep}) is true, it does not automatically lead to the conclusion that for the arbitrary $k$th step, $[\operatorname{H}_{n}]^k(\mb{P}_0)\tompw [\operatorname{H}_{\infty}]^k(\mb{P}_0)$ as $n\to \infty$. In other words, one has to study whether the transition equation for one step can be iterated recursively to predict the populations after multiple steps. In Table \ref{tab:Qi-iterating}, this problem corresponds to whether other columns have similar column-wise convergence property when the convergence of the $k=1$ column is proved.

To give an example, consider the column of $k=2$ in Table \ref{tab:Qi-iterating}. To prove column-wise convergence, the authors need to prove that given (\ref{eqn:Qi-onestep}),
\begin{align}
&\operatorname{H}_{n}(\PP{1}{n})\tompw \operatorname{H}_{\infty}(\PP{1}{\infty}) \textnormal{, or equivalently} \label{eqn:Qi-diffinput}\\
&[\operatorname{H}_{n}]^2(\mb{P}_0)\tompw [\operatorname{H}_{\infty}]^2(\mb{P}_0) \label{eqn:Qi-diffop}
\end{align}
as $n \to \infty$. Comparing (\ref{eqn:Qi-onestep}) with (\ref{eqn:Qi-diffinput}) and (\ref{eqn:Qi-diffop}), (\ref{eqn:Qi-diffinput}) has the same sequence of operators but with a sequence of converging inputs, (\ref{eqn:Qi-diffop}) has the same input but with a sequence of different operators. Therefore, they are not necessarily true even if (\ref{eqn:Qi-onestep}) is proved. In fact, different techniques may have to be adopted to prove (\ref{eqn:Qi-onestep}) and (\ref{eqn:Qi-diffinput}), or equivalently (\ref{eqn:Qi-onestep}) and (\ref{eqn:Qi-diffop}). Similar problem exists when considering the arbitrary $k$th generation. We call this problem the issue of iterating the algorithm. As Qi et al. in both\cite{Qi1,Qi2} ignored this issue, we believe their proofs are \emph{incomplete}.

The issue of the stacking of operators is similar. Given some operator $\operatorname{H}$ satisfying (\ref{eqn:Qi-onestep}) and some operator $\operatorname{G}$ satisfying
\[\operatorname{G}_n(\mb{P}_0) \tompw \operatorname{G}_{\infty}(\mb{P}_0)\]
as $n \to \infty$, it is \emph{not} necessarily true that
\[\operatorname{H}_n(\operatorname{G}_n(\mb{P}_0)) \tompw \operatorname{H}_{\infty}(\operatorname{G}_{\infty}(\mb{P}_0))\]
as $n\to \infty$. However, the authors in\cite{Qi2} totally ignored this issue and combined the transition equations for selection, mutation and crossover together (in Section III of\cite{Qi2}) without any justification.

In addition, there are several statements in the authors' proofs in\cite{Qi2} that are questionable. First, in the first paragraph of Appendix A (the proof for Theorem 1 in that paper), the authors considered a pair of parents $\mb{x}_k$ and $\mb{x}_k^\prime$ for the uniform crossover operator. $\mb{x}_k$ and $\mb{x}_k^\prime$ are ``drawn from the population independently with the same density of $f_{\mb{x}_k} \equiv f_{\mb{x}_k^\prime}$''. Then, the authors claimed that ``the joint density of $\mb{x}_k$ and $\mb{x}_k^\prime$ is therefore $f_{\mb{x}_k} \cdot f_{\mb{x}_k^\prime}$''. This is simply not true. Two individuals drawn independently from the same population are \emph{conditionally} independent, they are not necessarily independent, unless the modeling assumption is that all individuals in the same population are independent. In fact, without the i.i.d. assumption, it is very likely that individuals in the same population are dependent. Therefore, the joint density function of $\mb{x}_k$ and $\mb{x}_k^\prime$ is not necessarily $f_{\mb{x}_k} \cdot f_{\mb{x}_k^\prime}$, and the authors' proof for Theorem 1 in\cite{Qi2} is dubious at best. On the other hand, even if the authors' modeling assumption is independence of individuals for the uniform crossover operator, this assumption is incompatible with the modeling assumption of exchangeability in\cite{Qi1} for the operators of selection and mutation. Therefore, combining the transition equations for all these three operators is problematic, because the assumption of independence cannot hold beyond one iteration step.

Another issue in\cite{Qi2} is that the uniform crossover operator produces two \emph{dependent} offspring at the same time. As a result, after uniform crossover, the intermediate population is not even exchangeable because it has pair-wise dependency between individuals. Then the same problem arises, that is the transition equation for the uniform crossover operator cannot be combined with the transition equations for selection and mutation. This is because the uniform crossover operator produces intermediate populations without exchangeability, but this property is required for modeling selection and mutation. Besides, the transition equation for the uniform crossover operator cannot be iterated beyond one step. This is because regardless of independence or exchangeability as its modeling assumption, this assumption will surely be corrupted beyond one iteration step.

In summary, several issues arise from previous studies on IPMs for EAs on continuous optimization problems. Therefore, new frameworks and proof methods are needed for analyzing the convergence of IPMs and studying the issue of the stacking of operators and iterating the algorithm.

\section{Proposed Framework}
\label{framework}

In this section, we present our proposed analytical framework. In constructing the framework we strive to achieve the following three goals.
\begin{enumerate}
	\item The framework should be general enough to cover real world operators and to characterize the evolutionary process of real EA.
	\item The framework should be able to define the convergence of IPMs and serve as justifications of using them. The definition should match people's intuitions and at the same time be mathematically rigorous.
	\item The framework should provide an infrastructure to study the issue of the stacking of operators and iterating the algorithm.
\end{enumerate}

The contents of this section roughly reflects the pursuit of the first two goals. The third goal is reflected in the analyses of the simple EA in Section \ref{analysis}. More specifically, in Section \ref{chapter:framework-preli}, we introduce notations and preliminaries for the remainder of this paper. In Section \ref{chapter:framework-model}, we present our framework. In the framework, each generation is modeled by a random sequence. This approach unifies the spaces of random elements modeling populations of different sizes. In Section \ref{chapter:framework-conv}, we define the convergence of the IPM as convergence in distribution on the space of random sequences. We summarize and discuss our framework in Section \ref{chapter:framework-summary}.

To appreciate the significance of our framework, it is worth reviewing the methodology in \cite{Qi1,Qi2} studying the convergence of IPMs. Implicitly, the authors in \cite{Qi1,Qi2} used point-wise convergence of marginal p.d.f. as the criteria of defining the convergence of IPMs. Apart from the proofs being problematic and incomplete, this definition does not consider the \emph{joint} distribution of individuals of the population. Thus, it loses information and cannot characterize the dynamics of the whole population. Besides, point-wise convergence of p.d.f.s depends on the existence and the explicit forms of the p.d.f.s. This fact limits the generality of the methodology. In addition, compared with convergence in distribution used in this paper, the criteria of point-wise convergence is unnecessarily strict. In essence, the core of the criteria should characterize the similarity between distributions of random elements. In this regard, convergence in distribution matches the intuition and suffices for the task. A stronger criteria, such as point-wise convergence, will inevitably increase the difficulty in analysis. Finally, in this paper we separate the framework (the definition of the convergence of IPMs) from the analyses of operators. The organization is logical and general. 

\subsection{Notations and Preliminaries}
\label{chapter:framework-preli}

In the remainder of this paper we focus on the unconstrained continuous optimization problem
\begin{equation}
\label{eqn:framework-obj}
\arg \max\limits_{x} g(x) \textnormal{ s.t. } x\in \mathbb{R}^d,
\end{equation}
where $g$ is some given objective function. Our framework is general enough such that it does not require other conditions on the objective function $g$. However, to prove the convergence of IPMs for mutation and recombination, conditions such as those in Theorem \ref{thm:Qi-conv} are sometimes needed. We will introduce them when they are required.

From now on we use $\mathbb{N}$ to denote the set of nonnegative integers and $\mathbb{N}_+$ the set of positive integers. For any two real numbers $a$ and $b$, let $a \wedge b$ be the smaller one of them and $a \vee b$ be the larger one of them. Let $\mb{x},\mb{y}$ be random elements of some measurable space $(\Omega,\mathcal{F})$. We use $\Law(\mb{x})$ to represent the law of $\mb{x}$. If $\mb{x}$ and $\mb{y}$ follow the same law, i.e. $\Prob(\mb{x}\in A)=\Prob(\mb{y}\in A)$ for every $A\in \mathcal{F}$, we write $\Law(\mb{x})=\Law(\mb{y})$. Note that $\Law(\mb{x})=\Law(\mb{y})$ and $\mb{x}=\mb{y}$ have different meanings. In particular, $\mb{x}=\mb{y}$ indicates dependency between $\mb{x}$ and $\mb{y}$.

We use the notation $(x_i)_{i=m}^n$ to represent the array $(x_m,x_{m+1},\dots,x_n)$. When $n=\infty$, $(x_i)_{i=m}^\infty$ represents the infinite sequence $(x_m,x_{m+1},\dots)$. We use $\{x_i\}_{i=m}^n$ and $\{x_i\}_{i=m}^\infty$ to represent the collections $\{x_m,x_{m+1},\dots,x_n\}$ and $\{x_i|i=m,m+1,\dots\}$, respectively. When the range is clear, we use $(x_i)_i$ and $\{x_i\}_i$ or $(x_i)$ and $\{x_i\}$ for short.

Let $\mathbb{S}$ denote the solution space $\mathbb{R}^d$. This simplifies our notation system when we discuss the spaces $\mathbb{S}^n$ and $\mathbb{S}^\infty$. In the following, we define metrics and \sigmafield/s on $\mathbb{S}$, $\mathbb{S}^n$ and $\mathbb{S}^\infty$ and state properties of the corresponding measurable spaces.

$\mathbb{S}$ is equipped with the ordinary metric $\rho(x,y)=[\sum_{i=1}^{d}(x_i-y_i)^2]^{\frac{1}{2}}$. Let $\mathcal{S}$ denote the Borel \sigmafield/ on $\mathbb{S}$ generated by the open sets under $\rho$. Together $(\mathbb{S},\mathcal{S})$ defines a measurable space.

Similarly, $\mathbb{S}^n$ is equipped with the metric $\rho_n(x,y)=[\sum_{i=1}^{n}\rho^2(x_i,y_i)]^{\frac{1}{2}}$, and the corresponding Borel \sigmafield/ under $\rho_n$ is denoted by $\mathcal{S}^{\prime n}$. Together $(\mathbb{S}^n,\mathcal{S}^{\prime n})$ is the measurable space for $n$ tuples.

Next, consider the space of infinite sequences $\mathbb{S}^{\infty}=\{(x_1,x_2,\dots) \, | \, x_i\in \mathbb{S},i\in {\mathbb{N}_+} \}$.
It is equipped with the metric 
\[\rho_{\infty}(x,y)=\sum\limits_{i=1}^{\infty} \frac{1}{2^i} \cdot \frac{\rho (x_i,y_i)}{1+\rho (x_i,y_i)}.\]
The Borel \sigmafield/ on $\mathbb{S}^\infty$ under $\rho_\infty$ is denoted by $\mathcal{S}^{\prime \infty}$. Then $(\mathbb{S}^n,\mathcal{S}^{\prime \infty})$ is the measurable space for infinite sequences.

Since $\mathbb{S}$ is separable and complete, it can be proved that $\mathbb{S}^n$ and $\mathbb{S}^\infty$ are also separable and complete (Appendix M6 in \cite{conv}). In addition, because of separability, the Borel \sigmafield/s $\mathcal{S}^{\prime n}$ and $\mathcal{S}^{\prime \infty}$ are equal to $\mathcal{S}^{n}$ and $\mathcal{S}^{\infty}$, respectively. In other words, the Borel \sigmafield/s $\mathcal{S}^{\prime n}$ and $\mathcal{S}^{\prime \infty}$ generated by the collection of open sets under the corresponding metrics coincide with the product \sigmafield/s generated by all measurable rectangles ($\mathcal{S}^n$) and all measurable cylinder sets ($\mathcal{S}^\infty$), respectively (Lemma 1.2 in \cite{foundation}). Therefore, from now on we write $\mathcal{S}^{n}$ and $\mathcal{S}^{\infty}$ for the corresponding Borel \sigmafield/s. Finally, let $\mathbb{M}$, $\mathbb{M}^n$ and $\mathbb{M^\infty}$ denote the set of all random elements of $\mathbb{S}$, $\mathbb{S}^n$ and $\mathbb{S}^\infty$, respectively.

Let $\pi_n: \mathbb{S}^\infty \to \mathbb{S}^n$ be the natural projection: $\pi_n(x)=(x_1,x_2,\dots,x_n)$. Since given $\mb{x}\in \mathbb{M}^\infty$, $(\pi_n \circ \mb{x}):\Omega \to \mathbb{S}^n$ defines a random element of $\mathbb{S}^n$ projected from $\mathbb{S}^\infty$, we also use $\pi_n$ to denote the mapping: $\pi_n: \mathbb{M}^\infty \to \mathbb{M}^n$ where $\pi_n(\mb{x})=(\mb{x}_1,\mb{x}_2,\dots,\mb{x}_n)$. By definition, $\pi_n$ is the operator which truncates random sequences to random vectors. Given $A\subset \mathbb{S}^\infty$, we use $\pi_n(A)$ to denote the projection of $A$, i.e. $\pi_n(A)=\{x\in \mathbb{S}^n:x=\pi_n(y)\textnormal{ for some } y\in A\}$. 

\subsection{Analytical Framework for EA and IPMs}
\label{chapter:framework-model}

In this section, we present an analytical framework for the EA and IPMs. First, the modeling assumptions are stated. We only deal with operators which generate c.i.i.d. individuals. Then, we present an abstraction of the EA and IPMs. This abstraction serves as the basis for building our framework. Finally, the framework is presented. It unifies the range spaces of the random elements and defines the convergence of IPMs.

\subsubsection{Modeling Assumptions}
\label{chapter:framework-model-assumption}

We assume that the EA on the problem (\ref{eqn:framework-obj}) is time homogeneous and Markovian, such that the next generation depends only on the current one, and the transition rule from the $k$th generation to the $(k+1)$th generation is invariant with respect to $k\in \mathbb{N}$. We further assume that individuals in the next generation are c.i.i.d. given the current generation. As this assumption is the only extra assumption introduced in the framework, it may need some further explanation.

The main reason for introducing this assumption is to simplify the analysis. Conditional independence implies exchangeability, therefore individuals in the same generation $k\in \mathbb{N}_+$ are always exchangeable. As a result, it is possible to exploit the symmetry in the population and study the transition equations of marginal distributions. Besides, it is because of conditional independence that we can easily expand the random elements modeling finite-sized populations to random sequences, and therefore define convergence in distribution for random elements of the corresponding metric space. In addition, many real world operators in EAs satisfy this assumption, such as the proportionate selection operator and the crossover operator analyzed in \cite{Qi1,Qi2}.

However, we admit that there are some exceptions to our assumption. A most notable one may be the mutation operator, though it does not pose significant difficulties. The mutation operator perturbs each individual in the current population independently, according to a common conditional p.d.f. If the current population is not exchangeable, then after mutation the resultant population is not exchangeable, either. Therefore, it seems that mutation does not produce c.i.i.d. individuals. However, considering the fact that mutation is often used along with other operators, as long as these other operators generate c.i.i.d. populations, the individuals after mutation will be c.i.i.d., too. Therefore, a combined operator of mutation and any other operator satisfying the c.i.i.d. assumption can satisfy our assumption. An example can be seen in \cite{Qi1}, where mutation is analyzed together with proportionate selection. On the other hand, an algorithm which only uses mutation is very simple. It can be readily modeled and analyzed without much difficulty. 
 
Perhaps more significant exceptions are operators such as selection \emph{without} replacement, or the crossover operator which produces two dependent offspring at the same time. In fact, for these operators not satisfying the c.i.i.d. assumption, it is still possible to expand the random elements modeling finite-sized population to random sequences. For example, the random elements can be padded with some fixed constants or random elements of known distributions to form the random sequences. In this way, our definition of the convergence of IPMs can still be applied. However, whether in this scenario convergence in distribution for these random sequences can still yield meaningful results similar to the transition equation is another research problem. It may need further investigation. Nonetheless, our assumption is equivalent to the exchangeability assumption generally used in previous studies. 

\subsubsection{The Abstraction of EA and IPMs}
\label{chapter:framework-model-abstraction}

Given the modeling assumptions, we develop an abstraction to describe the population dynamics of the EA and IPMs. 

Let the EA with population size $n$ be denoted by $\EA{n}$, and the $k$th $(k\in \mathbb{N})$ generation it produces be modeled as a random element $\Pkn=(\Xkin{k}{i}{n} )_{i=1}^n\in \mathbb{M}^n$, where $\Xkin{k}{i}{n}\in \mathbb{M}$ is a random element representing the $i$th individual in $\Pkn$. Without loss of generality, assume that the EA has two operators, $\operatorname{G}$ and $\operatorname{H}$. In each iteration, the EA first employs $\operatorname{G}$ on the current population to generate an intermediate population, on which it then employs $\operatorname{H}$ to generate the next population. Notice that here $\operatorname{G}$ and $\operatorname{H}$ are just terms representing the operators in the real EA. They facilitate describing the evolutionary process. For $\EA{n}$, $\operatorname{G}$ and $\operatorname{H}$ are actually \emph{instantiated} as functions from $\mathbb{M}^n$ to $\mathbb{M}^n$, denoted by $\operatorname{G}_n$ and $\operatorname{H}_n$, respectively. For example, if $\operatorname{G}$ represents proportionate selection, the function $\operatorname{G}_n:\mathbb{M}^n \to \mathbb{M}^n$ is the actual operator in $\EA{n}$ generating $n$ c.i.i.d. individuals according to the conditional probability (\ref{eqn:Qi-sel}). Of course, for the above abstraction to be valid, the operators used in $\EA{n}$ should actually produce random elements in $\mathbb{M}^n$, i.e. the newly generated population should be measurable on $(\mathbb{S}^n,\mathcal{S}^n)$. As most operators in real EAs satisfy this condition and this is the assumption implicitly taken in previous studies, we assume that this condition is automatically satisfied.

Given these notations, the evolutionary process of $\EA{n}$ can be described by the sequence $(\PP{k}{n})_{k=0}^{\infty}$, where the initial population $\PP{0}{n}$ is known and the generation of $\Pkn,k\in \mathbb{N}_+$ follows the recurrence equation
\begin{equation}
\label{eqn:framework-recurPkn}
\PP{k+1}{n}=(\operatorname{H}_n \circ \operatorname{G}_n) (\Pkn).
\end{equation}
Then understanding the population dynamics of the EA can be achieved by studying the distributions and properties of $\Pkn$.

Let the IPM of the EA be denoted by $\EA{\infty}$. The population dynamics it produces can be described by the sequence $(\Pkinf\in \mathbb{M}^\infty)_{k=0}^{\infty}$, where $\PP{0}{\infty}$ is known and the generation of $\Pkinf,k\in \mathbb{N}_+$ follows the recurrence equation
\begin{equation}
\label{eqn:framework-recurPkinf}
\PP{k+1}{\infty}=(\operatorname{H}_\infty \circ \operatorname{G}_\infty) (\Pkinf),
\end{equation}  
in which $\operatorname{G}_{\infty},\operatorname{H}_{\infty}:\mathbb{M}^{\infty}\to \mathbb{M}^{\infty}$ are operators in $\EA{\infty}$ modeled after $\operatorname{G}$ and $\operatorname{H}$. Then, the convergence of $\EA{\infty}$ basically requires that $(\Pkn)_{n=1}^{\infty}$ converges to $\Pkinf$ for every generation $k$.

\subsubsection{The Proposed Framework}
\label{chapter:framework-model-model}

As stated before, for each generation $k\in \mathbb{N}$, the elements of the sequence $(\PP{k}{1},\PP{k}{2},\dots)$ and the limit $\Pkinf$ are all random elements of different metric spaces. Therefore, the core of developing our model is to expand $\Pkn$ to random sequences, while ensuring that this expansion will not affect modeling the evolutionary process of the real EA. The result of this step is the sequence of random sequences $(\PP[Q]{k}{n}\in \mathbb{M}^\infty )_{k=0}^\infty$ for each $n\in \mathbb{N}_+$, which completely describes the population dynamics of $\EA{n}$. For the population dynamics of $\EA{\infty}$, we just let $\Qkinf=\Pkinf$.

The expansion of $\Pkn$ and the relationships between $\Pkn$, $\Qkn$ and $\Qkinf$ are the core of our framework. In the following, we present them rigorously.

\subsubsection{The Expansion of $\Pkn$}

We start by decomposing each of $\operatorname{G}_n$ and $\operatorname{H}_n$ to two operators. One operator is from $\mathbb{S}^\infty$ to $\mathbb{S}^n$. It corresponds to how to convert random sequences to random vectors. A natural choice is the projection operator $\pi_n$.

To model the evolutionary process, we also have to define how to expand random vectors to random sequences. In other words, we have to define the expansions of $\operatorname{G_n}$ and $\operatorname{H_n}$, which are functions from $\mathbb{S}^n$ to $\mathbb{S}^\infty$.

\begin{definition}[The expansion of operator]
	\label{def:framework-expansion}
	For an operator $\operatorname{T}_n:\mathbb{M}^n \to \mathbb{M}^n$ satisfying the condition that for any $\mb{x}\in \mathbb{M}^n$, the elements of $\operatorname{T}_n(\mb{x})$ are c.i.i.d. given $\mb{x}$, the expansion of $\operatorname{T}_n$ is the operator $\widetilde{\operatorname{T}}_n:\mathbb{M}^n \to \mathbb{M}^\infty$, satisfying that for any $\mb{x}\in \mathbb{M}^n$,
	\begin{enumerate}
		\item $\operatorname{T}_n(\mb{x}) = (\pi_n \circ \widetilde{\operatorname{T}}_n)(\mb{x})$. \label{def:framework-expansion-condi1}
		\item The elements of $\widetilde{\operatorname{T}}_n(\mb{x})$ are c.i.i.d. given $\mb{x}$. \label{def:framework-expansion-condi2}
	\end{enumerate}
\end{definition}

In Definition \ref{def:framework-expansion}, the operator $\widetilde{\operatorname{T}}_n$ is the expansion of $\operatorname{T}_n$. Condition \ref{def:framework-expansion-condi1}) ensures that $\operatorname{T}_n$ can be safely replaced by $\pi_n \circ \widetilde{\operatorname{T}}_n$. Condition \ref{def:framework-expansion-condi2}) ensures that the paddings for the sequence are generated according to the same conditional probability distribution as that used by $\operatorname{T}_n$ to generate new individuals. In other words, if the operator $\dot{\operatorname{T}}_n:\mathbb{M}^n \to \mathbb{M}$ describes how $\operatorname{T}_n$ generates each new individual from the current population, $\operatorname{T}_n$ is equivalent to invoking $\dot{\operatorname{T}}_n$ independently on the current population for $n$ times, and $\widetilde{\operatorname{T}}_n$ is equivalent to invoking $\dot{\operatorname{T}}_n$ independently for infinite times. Finally, because $\operatorname{T}_n$ satisfies the condition in the premise, the expansion $\widetilde{\operatorname{T}}_n$ always exists.

By Definition \ref{def:framework-expansion}, the operators in $\EA{n}$ can be decomposed as $\operatorname{G}_n=\pi_n \circ \widetilde{\operatorname{G}}_n$ and $\operatorname{H}_n=\pi_n \circ \widetilde{\operatorname{H}}_n$, respectively. Then, the evolutionary process of $\EA{n}$ can be described by the sequence of random sequences $[\Qkn=(\Xkin[y]{k}{i}{n})_{i=0}^\infty\in \mathbb{M}^\infty]_{k=0}^\infty$, satisfying the recurrence equation
\begin{equation}
\label{eqn:framework-qknpkn}
\PP[Q]{k+1}{n}=(\widetilde{\operatorname{H}}_n \circ \pi_n \circ \widetilde{\operatorname{G}}_n)(\Pkn),
\end{equation}
where $\Pkn$ follows the recurrence equation (\ref{eqn:framework-recurPkn}), and $\PP[Q]{0}{n}=(\PP{0}{n},0,0,\dots)$. It can also be proved that
\begin{equation}
\label{eqn:framework-pknqkn}
\Pkn=\pi_n(\Qkn).
\end{equation}

Essentially, (\ref{eqn:framework-qknpkn}) and (\ref{eqn:framework-pknqkn}) describe how the algorithm progresses in the order $\dots,\Qkn,\Pkn,\PP[Q]{k+1}{n},\PP{k+1}{n},\dots$. It fully characterizes the population dynamics $(\Pkn)_k$, and it is clear that the extra step of generating $\Qkn$ does not introduce modeling errors.

For $\EA{\infty}$, because $\Pkinf\in \mathbb{M}^\infty$, there is no need for expansion. For convenience we simply let 
\begin{equation}
\label{eqn:framework-qkninf}
\Qkinf=\Pkinf
\end{equation}
for $k\in \mathbb{N}$.

In summary, the relationships between $\Pkn$, $\Qkn$ and $\Qkinf$ are better illustrated in Fig. \ref{fig:framework-pknqknqkinf}. This is the core of our framework for modeling the EA and IPMs. For clarity, we also show the intermediate populations generated by $\operatorname{G}$ (denoted by $\PP[P^\prime]{k}{n}$), their expansions (denoted by $\PP[Q^\prime]{k}{n}$), and their counterparts generated by $\operatorname{G}_\infty$ (denoted by $\PP[Q^\prime]{k}{\infty}$), respectively. How they fit in the evolutionary process can be clearly seen in the figure.

In Fig. \ref{fig:framework-pknqknqkinf}, a solid arrow with an operator on it means that the item at the arrow head equals the result of applying the operator on the item at the arrow tail. For example, from the figure it can be read that $\PP[Q]{1}{n}=\widetilde{\operatorname{H}}_n(\PP[P^\prime]{0}{n})$. Dashed arrow with a question mark on it signals the place to check whether convergence in distribution holds. For example, when $k=2$, it should be checked whether $({\PP[Q]{2}{n}})_{n=1}^{\infty}$ converges to $\PP[Q]{2}{\infty}$ as $n\to \infty$.

\begin{figure*}[ht]
	\centering
	\[
	\xymatrix
	{
		\PP{0}{n} \ar[r]^{\widetilde{\operatorname{G}}_n}& \bullet \ar[r]^{\pi_n} \ar[d] & \PP[P^\prime]{0}{n} \ar[r]^{\widetilde{\operatorname{H}}_n}& \bullet \ar[r]^{\pi_n} \ar[d] & \PP{1}{n} \ar[r]^{\widetilde{\operatorname{G}}_n} & \bullet \ar[r]^{\pi_n} \ar[d] & \PP[P^\prime]{1}{n} \ar[r]^{\widetilde{\operatorname{H}}_n} & \bullet \ar[r] \ar[d] & \\	
		\PP[Q]{0}{n} \ar[u]^{\pi_n} \ar@{-->}[d]^{?} & \PP[Q^\prime]{0}{n}\ar@{-->}[d]^{?} &  & \PP[Q]{1}{n} \ar@{-->}[d]^{?}&  &  \PP[Q^\prime]{1}{n}\ar@{-->}[d]^{?} & & \PP[Q]{2}{n} \ar@{-->}[d]^{?}&	\\
		\PP[Q]{0}{\infty} \ar[r]^{\operatorname{G}_\infty}& \PP[Q^\prime]{0}{\infty} \ar[rr]^{\operatorname{H}_\infty}&  & \PP[Q]{1}{\infty} \ar[rr]^{\operatorname{G}_\infty}&  &  \PP[Q^\prime]{1}{\infty} \ar[rr]^{\operatorname{H}_\infty}& & \PP[Q]{2}{\infty} \ar[r]&
	}
	\]
	\caption{Relationships between $\Pkn$, $\Qkn$ and $\Qkinf$}
	\label{fig:framework-pknqknqkinf}
\end{figure*}

Finally, one distinction needs special notice. For $\EA{m}$ and $\EA{n}$ ($m\neq n$), consider the operators to generate $\PP{k}{m}$ and $\Pkn$. It is clear that $\operatorname{G}_m:\mathbb{M}^m \to \mathbb{M}^m$ and $\operatorname{G}_n:\mathbb{M}^n \to \mathbb{M}^n$ are two different operators because their domains and ranges are all different. The distinction still exists when we consider $\Qkn$, though it is more subtle and likely to be ignored. In Fig. \ref{fig:framework-pknqknqkinf}, if we consider the operator $\widehat{\operatorname{G}}_n=\pi_n \circ \widetilde{\operatorname{G}}_n:\mathbb{M}^\infty \to \mathbb{M}^\infty$, it is clear that $\widehat{\operatorname{G}}_n$ uses the same mechanism to generate new individuals as the one used in $\operatorname{G}_n=\widetilde{\operatorname{G}}_n\circ \pi_n$, and $\PP[Q^\prime]{k}{n}=\widehat{\operatorname{G}}_n(\Qkn)$ describes the same population dynamics as that generated by $\PP[P^\prime]{k}{n}=\operatorname{G}_n(\Pkn)$. However, if we choose $m\neq n$, $\widehat{\operatorname{G}}_m$ and $\widehat{\operatorname{G}}_n$ are both functions from $\mathbb{M}^\infty$ to $\mathbb{M}^\infty$. Therefore, checking domains and ranges are not enough to discern $\widehat{\operatorname{G}}_m$ and $\widehat{\operatorname{G}}_n$. It is important to realize that the distinction between $\widehat{\operatorname{G}}_m$ and $\widehat{\operatorname{G}}_n$ lies in the \emph{contents} of the functions. $\widehat{\operatorname{G}}_m$ and $\widehat{\operatorname{G}}_n$ use $m$ and $n$ individuals in the current population to generate the new population, respectively, although the new population contains infinite number of individuals. In short, $\EA{m}$ and $\EA{n}$ are the EA instantiated with different population sizes. Mathematically, the corresponding population dynamics are modeled by stochastic processes involving \emph{different} operators, even though their domains and ranges may be the same. The same conclusion also holds for the operator $\operatorname{H}$.

\subsection{Convergence of IPMs}
\label{chapter:framework-conv}

Given the framework modeling the EA and IPMs, first, we define convergence in distribution for random elements of $\mathbb{S}^\infty$. This is standard material. Then, the convergence of IPMs is defined by requiring that the sequence $(\PP[Q]{k}{1},\PP[Q]{k}{2},\dots)$ converges to $\Qkinf$ for every $k\in \mathbb{N}$.

\subsubsection{Convergence in Distribution}

As $\Qkn$ are random elements of $\mathbb{S}^\infty$, in the following we define convergence in distribution for sequences of $\mathbb{S}^\infty$-valued random elements. Convergence in distribution is equivalent to weak convergence of \emph{induced} probability measures of the random elements. We use the former theory because when modeling individuals and populations as random elements, the former theory is more intuitive and straightforward. The following materials are standard. They contain the definition of convergence in distribution for random elements, as well as some useful definitions and theorems which are used in our analysis of the simple EA. Most of the materials are collected from the theorems and examples in Sections 1-3 of \cite{conv}. The definition of Prokhorov metric is collected from Section 11.3 in \cite{dudley}. 

Let $\mb{x},\mb{y},\mb{x}_n,n\in {\mathbb{N}_+}$ be random elements defined on a hidden probability space $(\Omega,\mathcal{F},\Prob)$ taking values in some separable metric space $\mathbb{T}$. $\mathbb{T}$ is coupled with the Borel \sigmafield/ $\mathcal{T}$. Let $(\mathbb{T}^\prime,\mathcal{T}^\prime)$ be a separable measurable space other than $(\mathbb{T},\mathcal{T})$.

\begin{definition}[Convergence in distribution]
	\label{def:framework-cid}
	If the sequence $(\mb{x}_n)_{n=1}^\infty$ satisfies the condition that $\EXP\left[h(\mb{x}_n)\right]\to \EXP\left[h(\mb{x})\right]$ for every bounded, continuous function $h:\mathbb{T}\to \mathbb{R}$, we say $(\mb{x}_n)_{n=1}^\infty$ converges in distribution to $\mb{x}$, and write $\mb{x}_n \tod \mb{x}$.
\end{definition}

For $\epsilon>0$, let $A^\epsilon=\{y\in \mathbb{T}:d(x,y)<\epsilon \textnormal{ for some } x\in A\}$.
Then it is well known that convergence in distribution on separable metric spaces can be metricized by the Prokhorov metric.

\begin{definition}[Prokhorov metric]
	\label{def:prokhorov}
	For two random elements $\mb{x}$ and $\mb{y}$, the Prokhorov metric is defined as 
	\[\dist{d}(\mb{x},\mb{y})=\inf \{\epsilon>0:\Prob(\mb{x}\in A)\leq \Prob(\mb{y}\in A^\epsilon)+\epsilon,\forall A\in \mathcal{T} \}.\]
\end{definition}

Call a set $A$ in $\mathcal{T}$ an $\mb{x}$-continuity set if $\textnormal{P}(\mb{x}\in \partial A)=0$, where $\partial A$ is the boundary set of $A$.

\begin{theorem}[The Portmanteau theorem]
	\label{thm:framework-portmanteau}
	The following statements are equivalent.
	\begin{enumerate}
		\item $\mb{x}_n \tod \mb{x}$.
		\item $\limsup_n \Prob(\mb{x}_n \in F)\leq \Prob(\mb{x}\in F)$ for all closed set $F\in \mathcal{T}$.
		\item $\liminf_n \Prob(\mb{x}_n \in G)\geq \Prob(\mb{x}\in G)$ for all open $G\in \mathcal{T}$.
		\item $\Prob(\mb{x}_n \in A)\to \Prob(\mb{x}\in A)$ for all $\mb{x}$-continuity set $A\in \mathcal{T}$. \label{thm:framework-portmanteau-condi4}
	\end{enumerate}
\end{theorem}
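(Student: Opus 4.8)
The plan is to prove the four-way equivalence by running the cycle $(1)\Rightarrow(2)\Rightarrow(3)$, then $(2)\wedge(3)\Rightarrow(4)$, then $(4)\Rightarrow(1)$, throughout working in the separable metric space $(\mathbb{T},d)$ so that distances to sets and the enlargements $A^{\epsilon}$ are available. The implication $(2)\Rightarrow(3)$ (and its converse) is pure bookkeeping: a set $G$ is open exactly when $G^{c}$ is closed and $\liminf_{n}\Prob(\mb{x}_{n}\in G)=1-\limsup_{n}\Prob(\mb{x}_{n}\in G^{c})$, so $(2)$ applied to $G^{c}$ immediately gives $(3)$ for $G$; I shall therefore treat $(2)$ and $(3)$ as interchangeable once either is in hand.

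For $(1)\Rightarrow(2)$ I would fix a closed set $F\in\mathcal{T}$ and, for $\epsilon>0$, introduce the truncated-distance function $h_{\epsilon}(x)=\bigl(1-\epsilon^{-1}d(x,F)\bigr)\vee 0$, where $d(x,F)=\inf_{y\in F}d(x,y)$. Each $h_{\epsilon}$ is bounded and, being Lipschitz, continuous, and satisfies $\mathbf{1}_{F}\le h_{\epsilon}\le\mathbf{1}_{F^{\epsilon}}$. Hence $\Prob(\mb{x}_{n}\in F)\le\EXP[h_{\epsilon}(\mb{x}_{n})]$; letting $n\to\infty$ and invoking $(1)$ gives $\limsup_{n}\Prob(\mb{x}_{n}\in F)\le\EXP[h_{\epsilon}(\mb{x})]\le\Prob(\mb{x}\in F^{\epsilon})$. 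The last move is to let $\epsilon\downarrow0$: since $F$ is closed, $\bigcap_{\epsilon>0}F^{\epsilon}=F$, so continuity of the measure $\Prob(\mb{x}\in\cdot)$ from above forces $\Prob(\mb{x}\in F^{\epsilon})\downarrow\Prob(\mb{x}\in F)$, which is $(2)$.

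For $(2)\wedge(3)\Rightarrow(4)$, given an $\mb{x}$-continuity set $A$, write $A^{\circ}$ for its interior and $\overline{A}$ for its closure, so that $A^{\circ}\subseteq A\subseteq\overline{A}$ and $\Prob(\mb{x}\in\overline{A}\setminus A^{\circ})=\Prob(\mb{x}\in\partial A)=0$. Applying $(3)$ to the open set $A^{\circ}$ and $(2)$ to the closed set $\overline{A}$ sandwiches
\[
\Prob(\mb{x}\in A^{\circ})\le\liminf_{n}\Prob(\mb{x}_{n}\in A)\le\limsup_{n}\Prob(\mb{x}_{n}\in A)\le\Prob(\mb{x}\in\overline{A}),
\]
and since the two outer quantities both equal $\Prob(\mb{x}\in A)$, the limit exists with the stated value. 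For $(4)\Rightarrow(1)$, take bounded continuous $h$ and, after an affine normalization, assume $0\le h<1$; then use the layer-cake identity $\EXP[h(\mb{x}_{n})]=\int_{0}^{1}\Prob(h(\mb{x}_{n})>t)\,\mathrm{d}t$ and likewise for $\mb{x}$. Each $\{h>t\}$ is open with $\partial\{h>t\}\subseteq\{h=t\}$, and the slices $\{h=t\}$, $t\in(0,1)$, are pairwise disjoint, so only countably many carry positive $\Prob(\mb{x}\in\cdot)$-mass; hence for Lebesgue-a.e.\ $t$ the set $\{h>t\}$ is an $\mb{x}$-continuity set and $(4)$ yields $\Prob(h(\mb{x}_{n})>t)\to\Prob(h(\mb{x})>t)$. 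As the integrands are uniformly bounded by $1$ on $(0,1)$, the bounded convergence theorem transfers this to the integrals, giving $\EXP[h(\mb{x}_{n})]\to\EXP[h(\mb{x})]$, i.e.\ $(1)$.

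I expect the main obstacle to be the step $(1)\Rightarrow(2)$, where essentially all of the analytic content sits: one must write down the explicit Lipschitz approximants $h_{\epsilon}$ to $\mathbf{1}_{F}$ and manage a genuine double limit, first $n\to\infty$ using $(1)$ and then $\epsilon\downarrow0$ using closedness of $F$ together with monotone continuity of probability, rather than a single interchange. The only other point needing real care is the countability argument in $(4)\Rightarrow(1)$ guaranteeing that $\{h>t\}$ is a continuity set for almost every level $t$; the remaining implications are formal. Since these arguments use only that $\mathbb{T}$ is a separable metric space, they apply verbatim to $\mathbb{T}=\mathbb{S}^{\infty}$ with the metric $\rho_{\infty}$, which is the case relevant to the rest of the paper.
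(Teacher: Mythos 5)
Your proof is correct; the paper itself offers no proof of this theorem, stating only that it is standard material collected from Sections 1--3 of the cited reference on weak convergence, and your argument is precisely the canonical one found there (Lipschitz approximants $h_{\epsilon}$ of $\mathbf{1}_{F}$ for $(1)\Rightarrow(2)$, complementation for $(2)\Leftrightarrow(3)$, the interior/closure sandwich for $(4)$, and the layer-cake plus countability-of-atoms argument for $(4)\Rightarrow(1)$). All steps, including the two points you flag as delicate --- the double limit in $(1)\Rightarrow(2)$ and the almost-every-level continuity-set argument --- are handled correctly, and your observation that only separability of $\mathbb{T}$ is used is exactly what the paper needs to apply the theorem on $\mathbb{S}^{\infty}$ with $\rho_{\infty}$.
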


\begin{theorem}[The mapping theorem]
	\label{thm:framework-contimapping}
	Suppose $h:(\mathbb{T},\mathcal{T})\to (\mathbb{T}^\prime,\mathcal{T}^\prime)$ is a measurable function. Denote by $D_h$ the set of discontinuities of $h$. If $\mb{x}_n \tod \mb{x}$ and $\textnormal{P}(D_h)=0$, then $h(\mb{x}_n) \tod h(\mb{x})$.
\end{theorem}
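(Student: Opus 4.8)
The plan is to deduce the statement from the Portmanteau theorem (Theorem~\ref{thm:framework-portmanteau}) together with one elementary fact about the topological closure of $h^{-1}(F)$. First I would apply the equivalence of conditions 1) and 2) of Theorem~\ref{thm:framework-portmanteau} on $\mathbb{T}^\prime$: to conclude $h(\mb{x}_n)\tod h(\mb{x})$ it is enough to show
\[
\limsup_n \Prob\bigl(h(\mb{x}_n)\in F\bigr)\le \Prob\bigl(h(\mb{x})\in F\bigr)\quad\text{for every closed } F\in\mathcal{T}^\prime .
\]
Because $h$ is measurable, $h^{-1}(F)\in\mathcal{T}$ and $\Prob(h(\mb{x}_n)\in F)=\Prob(\mb{x}_n\in h^{-1}(F))$, so the task becomes a statement purely about the given convergence $\mb{x}_n\tod\mb{x}$ in $\mathbb{T}$.

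The core of the proof --- and the only place the hypothesis $\Prob(D_h)=0$ is used --- is the inclusion $\overline{h^{-1}(F)}\subseteq h^{-1}(F)\cup D_h$, with closure taken in $\mathbb{T}$. I would establish it directly: given $x\in\overline{h^{-1}(F)}$, pick $x_k\in h^{-1}(F)$ with $x_k\to x$; if $h$ is continuous at $x$ then $h(x_k)\to h(x)$, and since each $h(x_k)\in F$ with $F$ closed this forces $h(x)\in F$, i.e.\ $x\in h^{-1}(F)$; if $h$ is not continuous at $x$ then $x\in D_h$ by definition. Along the way I would record the measurability bookkeeping: $h^{-1}(F)$ is Borel since $h$ is measurable, $\overline{h^{-1}(F)}$ is closed hence Borel, and $D_h$ is an $F_\sigma$ subset of the metric space $\mathbb{T}$ hence Borel, so every probability below is well defined and $\Prob(\mb{x}\in D_h)=0$.

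With the inclusion in hand the estimate closes at once by applying condition 2) of Theorem~\ref{thm:framework-portmanteau} (the $\limsup$ inequality for closed sets) on $\mathbb{T}$ to the closed set $\overline{h^{-1}(F)}$:
\begin{align*}
\limsup_n \Prob\bigl(\mb{x}_n\in h^{-1}(F)\bigr)
&\le \limsup_n \Prob\bigl(\mb{x}_n\in \overline{h^{-1}(F)}\bigr)
\le \Prob\bigl(\mb{x}\in \overline{h^{-1}(F)}\bigr)\\
&\le \Prob\bigl(\mb{x}\in h^{-1}(F)\bigr)+\Prob\bigl(\mb{x}\in D_h\bigr),
\end{align*}
and the last term vanishes. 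Rewriting $\Prob(\mb{x}\in h^{-1}(F))=\Prob(h(\mb{x})\in F)$ yields precisely the required inequality, whence $h(\mb{x}_n)\tod h(\mb{x})$.

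I do not expect a genuine obstacle: once the Portmanteau theorem is available the whole argument is elementary point-set topology plus measure-theoretic bookkeeping. The one delicate point is the closure inclusion above, which is exactly where closedness of $F$ and the pointwise definition of continuity of $h$ enter; nothing beyond the separability of $\mathbb{T}$ and $\mathbb{T}^\prime$ (already assumed in the excerpt) is needed, and completeness plays no role.
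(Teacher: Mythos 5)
Your proof is correct; it is the standard argument for the continuous mapping theorem via the Portmanteau inequality for closed sets, hinging on the inclusion $\overline{h^{-1}(F)}\subseteq h^{-1}(F)\cup D_h$. The paper itself gives no proof of this theorem --- it is quoted as standard material from Billingsley's \emph{Convergence of Probability Measures} --- and your argument is essentially the one found there, so there is nothing to reconcile.
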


Let $\mb{a},\mb{a}_n$ be random elements of $\mathbb{T}$, $\mb{b},\mb{b}_n$ be random elements of $\mathbb{T}^\prime$, then 
$(\mb{a} \; \mb{b})^T$ and $(\mb{a}_n \; \mb{b}_n)^T$ are random elements of $\mathbb{T}\times \mathbb{T}^\prime$. Note that $\mathbb{T}\times \mathbb{T}^\prime$ is separable.

\begin{theorem}[Convergence in distribution for product spaces]
	\label{thm:framework-product}
If $\mb{a}$ is independent of $\mb{b}$ and $\mb{a}_n$ is independent of $\mb{b}_n$ for all $n\in \mathbb{N}_+$, then $(\mb{a}_n \; \mb{b}_n)^T\tod (\mb{a} \; \mb{b})^T$ if and only if $\mb{a}_n\tod \mb{a}$ and $\mb{b}_n \tod \mb{b}$.
\end{theorem}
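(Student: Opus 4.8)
The plan is to treat the two implications separately and to isolate exactly where independence enters.

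For the ``only if'' direction no independence is needed. The coordinate projections $p_1\colon\mathbb{T}\times\mathbb{T}^\prime\to\mathbb{T}$ and $p_2\colon\mathbb{T}\times\mathbb{T}^\prime\to\mathbb{T}^\prime$ are continuous everywhere, so their sets of discontinuities are empty; hence if $(\mb{a}_n\;\mb{b}_n)^T\tod(\mb{a}\;\mb{b})^T$, the mapping theorem (Theorem~\ref{thm:framework-contimapping}) gives $\mb{a}_n=p_1\big((\mb{a}_n\;\mb{b}_n)^T\big)\tod p_1\big((\mb{a}\;\mb{b})^T\big)=\mb{a}$, and likewise $\mb{b}_n\tod\mb{b}$.

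For the ``if'' direction I would verify, following Definition~\ref{def:framework-cid}, that $\EXP[h(\mb{a}_n,\mb{b}_n)]\to\EXP[h(\mb{a},\mb{b})]$ for every bounded continuous $h\colon\mathbb{T}\times\mathbb{T}^\prime\to\mathbb{R}$. Independence makes the joint law factor, so $\EXP[h(\mb{a}_n,\mb{b}_n)]=\int\phi_n\,\mathrm{d}\Law(\mb{b}_n)$ with $\phi_n(y)\triangleq\EXP[h(\mb{a}_n,y)]$, and $\EXP[h(\mb{a},\mb{b})]=\int\phi\,\mathrm{d}\Law(\mb{b})$ with $\phi(y)\triangleq\EXP[h(\mb{a},y)]$. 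First I would record three facts: (i) for each fixed $y$ the map $x\mapsto h(x,y)$ is bounded continuous, so $\mb{a}_n\tod\mb{a}$ forces $\phi_n(y)\to\phi(y)$ pointwise, with $\phi_n$ and $\phi$ all bounded by $\sup|h|$; (ii) $\phi$ is continuous, by dominated convergence applied to $h(\mb{a},\cdot)$, hence $\int\phi\,\mathrm{d}\Law(\mb{b}_n)\to\int\phi\,\mathrm{d}\Law(\mb{b})$ by $\mb{b}_n\tod\mb{b}$; (iii) the weakly convergent sequences $(\Law(\mb{a}_n))_n$ and $(\Law(\mb{b}_n))_n$ are tight, since the spaces in play are complete and separable (as they are in every application here) and a relatively compact sequence of laws on such a space is tight. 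Writing $\int\phi_n\,\mathrm{d}\Law(\mb{b}_n)-\int\phi\,\mathrm{d}\Law(\mb{b})=\int(\phi_n-\phi)\,\mathrm{d}\Law(\mb{b}_n)+\big(\int\phi\,\mathrm{d}\Law(\mb{b}_n)-\int\phi\,\mathrm{d}\Law(\mb{b})\big)$, the last bracket vanishes by (ii); for the first integral, tightness of $(\Law(\mb{b}_n))_n$ confines the $\mathrm{d}\Law(\mb{b}_n)$ mass to a fixed compact $K\subseteq\mathbb{T}^\prime$ up to arbitrarily small error, and on $K$ I would upgrade the pointwise convergence (i) to uniform convergence by checking that $(\phi_n)_n$ is equicontinuous on $K$ up to arbitrarily small error --- here tightness of $(\Law(\mb{a}_n))_n$ confines the integration defining $\phi_n$ to a compact $L\subseteq\mathbb{T}$, and $h$ is uniformly continuous on the compact set $L\times K$. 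This gives $\int(\phi_n-\phi)\,\mathrm{d}\Law(\mb{b}_n)\to0$ and completes the ``if'' direction. A slicker alternative that sidesteps this bookkeeping is the Skorokhod representation theorem: realize $(\mb{a}_n),\mb{a}$ with almost-sure convergence on one probability space and $(\mb{b}_n),\mb{b}$ on a second, transport both to the product of these spaces so the copies $\hat{\mb{a}}_n,\hat{\mb{b}}_n$ become independent --- hence jointly distributed as $(\mb{a}_n\;\mb{b}_n)^T$ by hypothesis --- while $(\hat{\mb{a}}_n,\hat{\mb{b}}_n)\toas(\hat{\mb{a}},\hat{\mb{b}})$ in the product metric, and then conclude by bounded convergence.

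I expect the ``if'' direction to be the main obstacle, and its heart is why independence cannot be dropped: in general the joint law of a pair is not pinned down by its marginals, so without a structural hypothesis the statement is false --- the same phenomenon underlying the counterexamples of Section~\ref{chapter:Qi-conve}. Technically the delicate step is that the pointwise convergence $\phi_n\to\phi$ does not by itself control $\int(\phi_n-\phi)\,\mathrm{d}\Law(\mb{b}_n)$, because the integrating measures $\Law(\mb{b}_n)$ are themselves moving and might carry mass into regions where $\phi_n$ has not yet settled; tightness is precisely what localizes the relevant mass to a compact set, on which equicontinuity promotes pointwise to uniform convergence. I would also verify the small preliminary point noted just before the statement --- that the product metric on $\mathbb{T}\times\mathbb{T}^\prime$ generates the product Borel $\sigma$-field and that $\mathbb{T}\times\mathbb{T}^\prime$ is separable --- so that Definition~\ref{def:framework-cid} genuinely applies on the product.
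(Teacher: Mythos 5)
Your proof is correct, but note that the paper does not actually prove this statement: it is imported as a known result, with the remark that it ``is adapted from Theorem 2.8 (ii) in \cite{conv}'' (Billingsley). So the relevant comparison is with Billingsley's argument, which is both shorter and more general: on a separable product space the rectangles $A\times B$, with $A$ an $\mb{a}$-continuity set and $B$ a $\mb{b}$-continuity set, form a convergence-determining class, and under independence $\Prob\big((\mb{a}_n\;\mb{b}_n)^T\in A\times B\big)=\Prob(\mb{a}_n\in A)\,\Prob(\mb{b}_n\in B)\to\Prob(\mb{a}\in A)\,\Prob(\mb{b}\in B)$ by the Portmanteau theorem; no tightness enters, so the result holds on arbitrary separable metric spaces exactly as stated. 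Your Fubini-plus-equicontinuity route is sound, but step (iii) quietly strengthens the hypotheses: on a separable metric space that is not complete, a Borel probability measure (hence a weakly convergent sequence of laws) need not be tight, so that version of your argument establishes the theorem only for Polish $\mathbb{T},\mathbb{T}^\prime$. As you observe, this covers every space the paper actually uses ($\mathbb{S}^m$ and $\mathbb{S}^\infty$ are complete and separable), but it is narrower than the statement. Your Skorokhod alternative avoids the issue entirely (the representation theorem only needs the limit law to have separable support, which is automatic here) and is the cleanest of the three; your ``only if'' direction via continuity of the projections and the mapping theorem is the standard one. The final bookkeeping point you flag is indeed worth recording once: $\mathbb{T}\times\mathbb{T}^\prime$ is separable and its Borel $\sigma$-field coincides with the product $\sigma$-field, which is what lets independence pin down the joint law as the product of the marginals in the first place.
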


Theorem \ref{thm:framework-product} is adapted from Theorem 2.8 (ii) in \cite{conv}.

Let $\mb{z},\mb{z}_n,n\in {\mathbb{N}_+}$ be random elements of $\mathbb{S}^\infty$.

\begin{theorem}[Finite-dimensional convergence]
	\label{thm:framework-finiteconv}
	$\mb{z}_n\tod \mb{z}$ if and only if $\pi_m(\mb{z}_n)\tod \pi_m(\mb{z})$ for any $m\in \mathbb{N}_+$.
\end{theorem}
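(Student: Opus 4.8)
The plan is to prove both implications separately, with the converse being the substantive one. For the forward direction, I would observe that for each fixed $m\in\mathbb{N}_+$ the projection $\pi_m:(\mathbb{S}^\infty,\rho_\infty)\to(\mathbb{S}^m,\rho_m)$ is continuous: if $\rho_\infty(x,y)$ is small then $\rho(x_i,y_i)$ is small for each $i\le m$, whence $\rho_m(\pi_m(x),\pi_m(y))$ is small. Its set of discontinuities is empty, so the mapping theorem (Theorem~\ref{thm:framework-contimapping}) applied to $\pi_m$ gives $\pi_m(\mb{z}_n)\tod\pi_m(\mb{z})$ as soon as $\mb{z}_n\tod\mb{z}$.

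For the converse, assume $\pi_m(\mb{z}_n)\tod\pi_m(\mb{z})$ for every $m$, and verify condition~3) of the Portmanteau theorem (Theorem~\ref{thm:framework-portmanteau}) for $\mb{z}_n\tod\mb{z}$, i.e. $\liminf_n\Prob(\mb{z}_n\in G)\ge\Prob(\mb{z}\in G)$ for every open $G\subseteq\mathbb{S}^\infty$. The topological input is that $\rho_\infty$ induces the product topology on $\mathbb{S}^\infty$, so that $G$ is a countable union of finite-dimensional open cylinders; collecting the first $N$ of them and lifting to a common dimension $M_N$ yields open sets $V_N\subseteq\mathbb{S}^{M_N}$ with $G_N:=\pi_{M_N}^{-1}(V_N)\uparrow G$. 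Then for each fixed $N$,
\[
\liminf_n\Prob(\mb{z}_n\in G)\ \ge\ \liminf_n\Prob\bigl(\pi_{M_N}(\mb{z}_n)\in V_N\bigr)\ \ge\ \Prob\bigl(\pi_{M_N}(\mb{z})\in V_N\bigr)\ =\ \Prob(\mb{z}\in G_N),
\]
the first inequality because $G\supseteq G_N=\pi_{M_N}^{-1}(V_N)$, the second by Portmanteau (condition~3) applied to the hypothesis $\pi_{M_N}(\mb{z}_n)\tod\pi_{M_N}(\mb{z})$ with the open set $V_N$. Letting $N\to\infty$ and using continuity from below of $\Law(\mb{z})$ gives $\liminf_n\Prob(\mb{z}_n\in G)\ge\Prob(\mb{z}\in G)$, which completes the proof.

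I expect the main obstacle to be the topological bookkeeping in the converse: justifying carefully that every open subset of $\mathbb{S}^\infty$ is a countable union of finite-dimensional open cylinders and that these can be organized into an increasing family $G_N=\pi_{M_N}^{-1}(V_N)$ with consistently chosen dimensions $M_N$, so that the finite-dimensional convergence hypothesis can be fed into Portmanteau on $\mathbb{S}^{M_N}$. An alternative route that sidesteps this bookkeeping is to establish uniform tightness of $(\mb{z}_n)_n$ first — each coordinate sequence $\pi_{\{i\}}(\mb{z}_n)$ converges in distribution, hence $\{\Law(\pi_{\{i\}}(\mb{z}_n)):n\}$ is tight, and a countable product of the corresponding compacta is compact in $\mathbb{S}^\infty$ — and then argue along subsequences: by Prokhorov's theorem every subsequence of $(\mb{z}_n)$ has a further subsequence converging in distribution to some $\mb{w}$, whose finite-dimensional laws agree with those of $\mb{z}$ by the already-proved forward direction and uniqueness of distributional limits on each $\mathbb{S}^m$; since $\mathcal{S}^\infty$ is generated by cylinders this forces $\Law(\mb{w})=\Law(\mb{z})$, and hence $\mb{z}_n\tod\mb{z}$.
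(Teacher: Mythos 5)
Your argument is correct, and it is essentially the proof the paper points to: the paper does not write the proof out but defers to Billingsley's Examples 1.2 and 2.4 for $\mathbb{R}^\infty$, whose standard argument is exactly your combination of continuity of $\pi_m$ plus the mapping theorem for necessity, and separability (hence second countability, hence every open set being an increasing countable union of finite-dimensional open cylinders) fed into the Portmanteau open-set condition for sufficiency. Your alternative tightness/Prokhorov route is also valid, but the first argument already matches the cited approach.
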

Theorem \ref{thm:framework-finiteconv} basically asserts that convergence in distribution for countably infinite dimensional random elements can be studied through their finite-dimensional projections. It is adapted from Example 1.2 and Example 2.4 in \cite{conv}. In \cite{conv}, the metric space under consideration is $\mathbb{R}^\infty$. However, as both $\mathbb{R}$ and $\mathbb{S}$ are separable, it is not difficult to adapt the proofs for $\mathbb{R}^\infty$ to a proof for Theorem \ref{thm:framework-finiteconv}. Note that $\pi_m(\mb{z})$ are random elements defined on $(\Omega,\mathcal{F},\Prob)$ taking values in $(\mathbb{S}^m,\mathcal{S}^m)$, and $\Prob[\pi_m(\mb{z}) \in A]=\Prob(\mb{z}\in A\times \mathbb{S}\times \mathbb{S}\times \dots)$ for every $A\in \mathcal{S}^m$. The same is true for $\pi_m(\mb{z}_n)$.

\subsubsection{Convergence of IPM}

As convergence in distribution is properly defined, we can use the theory to define convergence of IPMs. The idea is that IPM is convergent (thus justifiable) if and only if it can predict the limit distribution of the population dynamics of $\EA{n}$ for \emph{every} generation $k\in \mathbb{N}$ as the population size $n$ goes to infinity. It captures the limiting behaviors of real EAs.

\begin{definition}[Convergence of IPMs]
	\label{def:framework-convipm}
	An infinite population model $\EA{\infty}$ is convergent if and only if for every $k\in \mathbb{N}$, $\Qkn \tod \Qkinf$ as $n\to \infty$, where $\Qkn$, $\Qkinf$ and the underling $\Pkn$, $\Pkinf$ are generated according to (\ref{eqn:framework-qknpkn}), (\ref{eqn:framework-qkninf}), (\ref{eqn:framework-recurPkn}) and (\ref{eqn:framework-recurPkinf}).
\end{definition}

Definition \ref{def:framework-convipm} is essentially the core of our proposed framework. It defines the convergence of IPM and is rigorous and clear.

\subsection{Summary}
\label{chapter:framework-summary}

In this section, we built a framework to analyze the convergence of IPMs. The most significant feature of the framework is that we model the populations as random sequences, thereby unifying the ranges of the random elements in a common metric space. Then, we gave a rigorous definition for the convergence of IPMs based on the theory of convergence in distribution.

Our framework is general. It only requires that operators produce c.i.i.d. individuals. In fact, any EA and IPM satisfying this assumption can be put into the framework. However, to obtain meaningful results, the convergence of IPMs has to be proved. This may require extra analyses on IPM and the inner mechanisms of the operators. These analyses are presented in Section \ref{analysis}.

Finally, there is one thing worth discussing. In our framework, the expansion of operator is carried out by padding the finite population with c.i.i.d. individuals following the \emph{same} marginal distribution. Then a question naturally arises: why not pad the finite population with some other random elements, or just with the constant $0$? This idea deserves consideration. After all, if the expansion is conducted by padding $0$s, the requirement of c.i.i.d. can be discarded, and the framework and the convergence of IPMs stay the same. However, we did not choose this approach. The reason is that padding the population with c.i.i.d. individuals facilities analysis of the IPM. For example, in our analysis in Section \ref{analysis}, the sufficient conditions for the convergence of IPMs require us to consider $\operatorname{\Gamma}_m(\Qkn)$, where $\operatorname{\Gamma}$ is the operator under analysis. $\operatorname{\Gamma}_m$ uses the first $m$ elements of $\Qkn$ to generate new individuals. Now if $m>n$ and $\Qkn$ is expanded from $\Pkn$ by padding $0$s, $\operatorname{\Gamma}_m(\Qkn)$ does not make any sense because the $m$ individuals used by $\operatorname{\Gamma}_m$ have $(m-n)$ $0$s. This restricts our option in proving the convergence of IPMs.

\section{Analysis of the Simple EA}
\label{analysis}

In this section, we analyze the simple EA using our framework. In Section \ref{chapter:analy-sufficient}, we give sufficient conditions for the convergence of IPMs. To appreciate the necessity, consider the framework in Fig. \ref{fig:framework-pknqknqkinf}. To prove the convergence of IPM, by Definition \ref{def:framework-convipm}, we should check whether $\Qkn \tod \PP[Q]{k}{\infty}$ as $n\to \infty$ for every $k\in \mathbb{N}$. However, this direct approach is usually not viable. To manually check the convergence for all values of $k$ is wearisome and sometimes difficult. This is because as $k$ increases, the distributions of $\Qkn$ and $\Qkinf$ change. Therefore, the method needed to prove $\Qkn\tod \PP[Q]{k}{\infty}$ as $n\to \infty$ may be different from the method needed to prove $\PP[Q]{k+1}{n} \tod \PP[Q]{k+1}{\infty}$ as $n\to \infty$. Of course, after proving the cases for several values of $k$, it may be possible to discover some patterns in the proofs, which can be extended to cover other values of $k$, thus proving the convergence of the IPM. But this process is still tedious and uncertain.

In view of this, a ``smarter'' way to prove the convergence of IPM may be the following method. First, the convergence of IPM for \emph{one} iteration step for \emph{each} operator is proved. Then, the results are combined and extended to cover the whole population dynamics. The idea is that if the convergence holds for one generation number $k$, then it can be passed on automatically to all subsequent generations. For example, in Fig. \ref{fig:framework-pknqknqkinf}, consider the operators $\operatorname{G}_\infty$ and $\widetilde{\operatorname{G}}_n \circ \pi_n$. The first step is to prove that 
\begin{equation}
	\textnormal{if } \Qkn \tod \PP[Q]{k}{\infty} \textnormal{ as } n\to \infty, \textnormal{then } \PP[Q]{k}{\prime n} \tod \PP[Q]{k}{\prime \infty} \textnormal{ as } n\to \infty.\label{statement:analy-G}
\end{equation}
In other words, $\operatorname{G}_\infty$ can model $\widetilde{\operatorname{G}}_n \circ \pi_n$ for one iteration step. Then, after obtaining similar results for $\operatorname{H}_\infty$ and $\widetilde{\operatorname{H}}_n \circ \pi_n$, we combine the results together and the convergence of the overall IPM is proved.

However, this approach still seems difficult because we have to prove this pass-on relation (\ref{statement:analy-G}) holds for every $k$. In essence, this corresponds to whether the operators in IPM can be stacked together and iterated for any number of steps. This is the issue of the stacking of operators and iterating the algorithm. Therefore, in Section \ref{chapter:analy-sufficient}, we give sufficient conditions for this to hold. These conditions are important. If they hold, proving the convergence of the overall IPM can be broken down to proving the convergence of one iteration step of each operator in IPM. This greatly reduces the difficulty in deriving the proof.

To model real EAs, IPM has to be constructed reasonably. As shown in Section \ref{qiwrong}, exchangeability cannot yield the transition equation for the simple EA. This creates the research problem of finding a suitable modeling assumption to derive IPM. Therefore, in Section \ref{chapter:analy-iid}, we discuss the issue and propose to use i.i.d. as the modeling assumption in IPM.

Then, we use the sufficient conditions to prove the convergence of IPMs for various operators. The operators of mutation and $k$-ary recombination are readily analyzed in Section \ref{chapter:analy-mut} and Section \ref{chapter:analy-recomb}, respectively. In Section \ref{chapter:analy-sum}, we summarize this section and discuss our results.

\subsection{Sufficient Conditions for Convergence of IPMs}
\label{chapter:analy-sufficient}

To derive sufficient conditions for the convergence of the overall IPM, the core step is to derive conditions under which the operators in the IPM can be stacked and iterated.

As before, let $\EA{n}$ and $\EA{\infty}$ denote the EA with population size $n$ and the IPM under analysis, respectively. Let $\operatorname{\Gamma}$ be an operator in the EA, and $\operatorname{\Gamma}_n:\mathbb{M}^\infty \to \mathbb{M}^\infty$ and $\operatorname{\Gamma}_\infty:\mathbb{M}^\infty \to \mathbb{M}^\infty$ be its corresponding expanded operators in $\EA{n}$ and $\EA{\infty}$, respectively. Note that $\operatorname{\Gamma}_n$ and $\operatorname{\Gamma}_\infty$ generate random elements of $\mathbb{S}^\infty$. To give an example, $\operatorname{\Gamma}_n$ and $\operatorname{\Gamma}_\infty$ may correspond to $\pi_n \circ \widetilde{\operatorname{G}}_n$ and $\operatorname{G}_\infty$ in Fig. \ref{fig:framework-pknqknqkinf}, respectively.

We define a property under which $\operatorname{\Gamma}_\infty$ can be stacked with some other operator $\operatorname{\Psi}_\infty$ satisfying the same property without affecting the convergence of the overall IPM. In other words, for an EA using $\operatorname{\Psi}$ and $\operatorname{\Gamma}$ as its operators, we can prove the convergence of IPM by studying $\operatorname{\Psi}$ and $\operatorname{\Gamma}$ separately. We call this property ``the stacking property''. It is worth noting that if $\operatorname{\Phi}=\operatorname{\Gamma}$, then this property guarantees that $\operatorname{\Gamma}_\infty$ can be iterated for any number of times. Therefore it also resolves the issue of iterating the algorithm.

Let $\mb{A}_\alpha$ be random elements in $\mathbb{M}^\infty$ for $\alpha\in\mathbb{N}_+\cup \{\infty\}$. We have the following results.

\begin{definition}[The stacking property]
	\label{def:analy-stacking}
Given $\mathbb{U} \subset \mathbb{M}^\infty$, if for any converging sequence $\mb{A}_n \tod \mb{A}_\infty \in \mathbb{U}$, $\operatorname{\Gamma}_n(\mb{A}_n) \tod \operatorname{\Gamma}_\infty(\mb{A}_\infty)\in \mathbb{U}$ as $n\to \infty$ always holds, then we say that $\operatorname{\Gamma}_\infty$ has the stacking property on $\mathbb{U}$.
\end{definition}

\begin{theorem}
	\label{thm:analy-stacking}
	If $\operatorname{\Psi}_\infty$ and $\operatorname{\Gamma}_\infty$ have the stacking property on $\mathbb{U}$, then $\operatorname{\Psi}_\infty \circ \operatorname{\Gamma}_\infty$ has the stacking property on $\mathbb{U}$.
\end{theorem}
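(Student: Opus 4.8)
The plan is to prove the statement by a two-fold application of the stacking property, first to $\operatorname{\Gamma}_\infty$ and then to $\operatorname{\Psi}_\infty$. The crucial preliminary observation is that instantiating the composed operator $\operatorname{\Psi}\circ\operatorname{\Gamma}$ with population size $n$ yields exactly $\operatorname{\Psi}_n\circ\operatorname{\Gamma}_n$: the EA with population size $n$ first runs $\operatorname{\Gamma}$, whose instantiation is $\operatorname{\Gamma}_n$, and then runs $\operatorname{\Psi}$ on the result, whose instantiation is $\operatorname{\Psi}_n$; likewise $(\operatorname{\Psi}\circ\operatorname{\Gamma})_\infty=\operatorname{\Psi}_\infty\circ\operatorname{\Gamma}_\infty$. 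With this in hand, "$\operatorname{\Psi}_\infty\circ\operatorname{\Gamma}_\infty$ has the stacking property on $\mathbb{U}$" unpacks, via Definition \ref{def:analy-stacking}, to the claim that for every sequence $\mb{A}_n\tod\mb{A}_\infty\in\mathbb{U}$ we have $(\operatorname{\Psi}_n\circ\operatorname{\Gamma}_n)(\mb{A}_n)\tod(\operatorname{\Psi}_\infty\circ\operatorname{\Gamma}_\infty)(\mb{A}_\infty)\in\mathbb{U}$.

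First I would fix an arbitrary converging sequence $\mb{A}_n\tod\mb{A}_\infty$ with $\mb{A}_\infty\in\mathbb{U}$, all $\mb{A}_\alpha$ being random elements of $\mathbb{M}^\infty$. Since $\operatorname{\Gamma}_\infty$ has the stacking property on $\mathbb{U}$, Definition \ref{def:analy-stacking} gives at once that $\operatorname{\Gamma}_n(\mb{A}_n)\tod\operatorname{\Gamma}_\infty(\mb{A}_\infty)$ as $n\to\infty$ and, moreover, that $\operatorname{\Gamma}_\infty(\mb{A}_\infty)\in\mathbb{U}$. I would then set $\mb{B}_n=\operatorname{\Gamma}_n(\mb{A}_n)$ for $n\in\mathbb{N}_+$ and $\mb{B}_\infty=\operatorname{\Gamma}_\infty(\mb{A}_\infty)$; each $\mb{B}_\alpha$ lies in $\mathbb{M}^\infty$ because $\operatorname{\Gamma}_n$ and $\operatorname{\Gamma}_\infty$ map into $\mathbb{M}^\infty$. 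Hence $(\mb{B}_n)$ is a converging sequence with $\mb{B}_n\tod\mb{B}_\infty\in\mathbb{U}$, which makes it an admissible input for the stacking property of $\operatorname{\Psi}_\infty$.

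Applying Definition \ref{def:analy-stacking} a second time, now to $\operatorname{\Psi}_\infty$ and the sequence $(\mb{B}_n)$, yields $\operatorname{\Psi}_n(\mb{B}_n)\tod\operatorname{\Psi}_\infty(\mb{B}_\infty)$ as $n\to\infty$ together with $\operatorname{\Psi}_\infty(\mb{B}_\infty)\in\mathbb{U}$. Substituting back the definitions of $\mb{B}_n$ and $\mb{B}_\infty$ and using the identities $(\operatorname{\Psi}\circ\operatorname{\Gamma})_n=\operatorname{\Psi}_n\circ\operatorname{\Gamma}_n$ and $(\operatorname{\Psi}\circ\operatorname{\Gamma})_\infty=\operatorname{\Psi}_\infty\circ\operatorname{\Gamma}_\infty$, this reads $(\operatorname{\Psi}_n\circ\operatorname{\Gamma}_n)(\mb{A}_n)\tod(\operatorname{\Psi}_\infty\circ\operatorname{\Gamma}_\infty)(\mb{A}_\infty)\in\mathbb{U}$, which is precisely what Definition \ref{def:analy-stacking} demands of $\operatorname{\Psi}_\infty\circ\operatorname{\Gamma}_\infty$. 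As $(\mb{A}_n)$ was arbitrary, the theorem follows.

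There is no deep obstacle here: the argument is essentially a chaining of the hypothesis with itself. The only points demanding care are bookkeeping ones. One must check that the intermediate sequence $\mb{B}_n=\operatorname{\Gamma}_n(\mb{A}_n)$ again consists of random elements of $\mathbb{M}^\infty$ and, more importantly, that its limit $\mb{B}_\infty$ lies in $\mathbb{U}$ — this is exactly the closure clause "$\operatorname{\Gamma}_n(\mb{A}_n)\tod\operatorname{\Gamma}_\infty(\mb{A}_\infty)\in\mathbb{U}$" built into Definition \ref{def:analy-stacking}, and it is why that definition carries the "$\in\mathbb{U}$" clause rather than asserting mere convergence in distribution; without it the second application of the stacking property would not be licensed. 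One should also state explicitly that instantiation commutes with composition, so that the composed operator genuinely acts as $\operatorname{\Psi}_n\circ\operatorname{\Gamma}_n$ at population size $n$ and as $\operatorname{\Psi}_\infty\circ\operatorname{\Gamma}_\infty$ in the infinite-population model; this is a consequence of the abstraction in Section \ref{chapter:framework-model-abstraction} rather than something needing a separate proof.
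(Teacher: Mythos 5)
Your proposal is correct and follows essentially the same two-step chaining argument as the paper's own proof: apply the stacking property of $\operatorname{\Gamma}_\infty$ to obtain the intermediate converging sequence $\operatorname{\Gamma}_n(\mb{A}_n)\tod\operatorname{\Gamma}_\infty(\mb{A}_\infty)\in\mathbb{U}$, then apply the stacking property of $\operatorname{\Psi}_\infty$ to that sequence. The extra bookkeeping you include (that instantiation commutes with composition and that the closure clause $\in\mathbb{U}$ licenses the second application) is left implicit in the paper but is consistent with its argument.
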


\begin{proof}
	For any converging sequence $\mb{A}_n\tod \mb{A}_\infty\in \mathbb{U}\subset \mathbb{M}^\infty$, because $\operatorname{\Gamma}_\infty$ has the stacking property on $\mathbb{U}$, we have $\operatorname{\Gamma}_n(\mb{A}_n) \tod \operatorname{\Gamma}_\infty(\mb{A}_\infty)\in \mathbb{U}$. Then, $(\operatorname{\Gamma}_n(\mb{A}_n))_n$ is also a converging sequence. Since $\operatorname{\Psi}_\infty$ has the stacking property on $\mathbb{U}$, then by definition we immediately have $(\operatorname{\Psi}_n \circ \operatorname{\Gamma}_n)(\mb{A}_n) \tod (\operatorname{\Psi}_\infty \circ \operatorname{\Gamma}_\infty)(\mb{A}_\infty)\in \mathbb{U}$.
\end{proof}

By Theorem \ref{thm:analy-stacking}, any composition of $\operatorname{\Psi}_\infty$ and $\operatorname{\Gamma}_\infty$ has the stacking property on $\mathbb{U}$. In particular, $(\operatorname{\Gamma}_\infty)^m$ has the stacking property on $\mathbb{U}$. The stacking property essentially guarantees that the convergence on $\mathbb{U}$ can be passed on to subsequent generations.

\begin{theorem}[Sufficient condition 1]
	\label{thm:analy-sufficient1}
	For an EA consisting of a single operator $\operatorname{\Gamma}$, let $\operatorname{\Gamma}$ be modeled by $\operatorname{\Gamma}_\infty$ in the IPM $\EA{\infty}$ and $\operatorname{\Gamma}_\infty$ have the stacking property on some space $\mathbb{U}\subset \mathbb{M}^\infty$. If the initial populations of both EA and $\EA{\infty}$ follow the same distribution $\Prob_{\mb{X}}$ for some $\mb{X}\in\mathbb{U}$, then $\EA{\infty}$ converges.
\end{theorem}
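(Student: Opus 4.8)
The plan is to prove the statement by induction on the generation index $k$, with Definition~\ref{def:analy-stacking} and Theorem~\ref{thm:analy-stacking} doing all the real work. By Definition~\ref{def:framework-convipm} it suffices to show that $\PP[Q]{k}{n}\tod\PP[Q]{k}{\infty}$ as $n\to\infty$ for every $k\in\mathbb{N}$. I would strengthen this and prove by induction the two-part statement that, for every $k$, \emph{both} $\PP[Q]{k}{n}\tod\PP[Q]{k}{\infty}$ and $\PP[Q]{k}{\infty}\in\mathbb{U}$. Carrying the membership $\PP[Q]{k}{\infty}\in\mathbb{U}$ along is essential rather than cosmetic: the stacking property of $\operatorname{\Gamma}_\infty$ may only be invoked on a converging sequence whose limit lies in $\mathbb{U}$, so without it the induction cannot be advanced.

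For the base case $k=0$, the hypothesis gives $\PP[Q]{0}{\infty}=\mb{X}\in\mathbb{U}$, so the membership holds; for the convergence, the initial population $\PP{0}{n}$ of $\EA{n}$ has the law of $\pi_n(\mb{X})$ and $\PP[Q]{0}{n}=(\PP{0}{n},0,0,\dots)$ by the zero-padding convention of Section~\ref{chapter:framework-model-model}, so for each fixed $m$ the projection $\pi_m(\PP[Q]{0}{n})$ has the law of $\pi_m(\mb{X})$ as soon as $n\ge m$ (the padding touching only coordinates beyond index $n$); hence the finite-dimensional projections trivially converge and Theorem~\ref{thm:framework-finiteconv} gives $\PP[Q]{0}{n}\tod\PP[Q]{0}{\infty}$. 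For the inductive step, assume $\PP[Q]{k}{n}\tod\PP[Q]{k}{\infty}\in\mathbb{U}$. Unwinding the framework recurrences (\ref{eqn:framework-recurPkn}), (\ref{eqn:framework-pknqkn}) and (\ref{eqn:framework-qknpkn}) for the single-operator EA shows that one expanded step of $\EA{n}$ acts as $\PP[Q]{k+1}{n}=\operatorname{\Gamma}_n(\PP[Q]{k}{n})$, where $\operatorname{\Gamma}_n:\mathbb{M}^\infty\to\mathbb{M}^\infty$ first truncates to the first $n$ coordinates via $\pi_n$ and then applies the expanded operator $\widetilde{\operatorname{\Gamma}}_n$ — precisely the operator appearing in Definition~\ref{def:analy-stacking} — while (\ref{eqn:framework-recurPkinf}) and (\ref{eqn:framework-qkninf}) give $\PP[Q]{k+1}{\infty}=\operatorname{\Gamma}_\infty(\PP[Q]{k}{\infty})$. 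Since $\operatorname{\Gamma}_\infty$ has the stacking property on $\mathbb{U}$ and $\PP[Q]{k}{n}\tod\PP[Q]{k}{\infty}\in\mathbb{U}$, Definition~\ref{def:analy-stacking} yields $\PP[Q]{k+1}{n}=\operatorname{\Gamma}_n(\PP[Q]{k}{n})\tod\operatorname{\Gamma}_\infty(\PP[Q]{k}{\infty})=\PP[Q]{k+1}{\infty}\in\mathbb{U}$, which closes the induction, and Definition~\ref{def:framework-convipm} then gives that $\EA{\infty}$ converges. An equivalent route is to apply Theorem~\ref{thm:analy-stacking} $k$ times to conclude that $(\operatorname{\Gamma}_\infty)^k$ has the stacking property on $\mathbb{U}$, and then apply it once to the base-case convergence $\PP[Q]{0}{n}\tod\PP[Q]{0}{\infty}\in\mathbb{U}$, obtaining $\PP[Q]{k}{n}=(\operatorname{\Gamma}_n)^k(\PP[Q]{0}{n})\tod(\operatorname{\Gamma}_\infty)^k(\PP[Q]{0}{\infty})=\PP[Q]{k}{\infty}$ for all $k$ in one stroke.

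I do not expect a genuine obstacle here: the substantive content has already been isolated into the notion of the stacking property and into Theorem~\ref{thm:analy-stacking}, and this ``sufficient condition'' is essentially their repackaging as a convergence statement. The three points that still require care are (i) keeping $\PP[Q]{k}{\infty}\in\mathbb{U}$ inside the induction hypothesis so that the stacking property remains applicable at each step; (ii) handling the base case, where the finite, zero-padded $\PP[Q]{0}{n}$ must be reconciled with the genuinely infinite $\PP[Q]{0}{\infty}$ — which is exactly the job of Theorem~\ref{thm:framework-finiteconv}, the padding affecting only coordinates beyond index $n$; and (iii) correctly identifying one expanded iteration of $\EA{n}$ with the $\mathbb{M}^\infty\to\mathbb{M}^\infty$ operator $\operatorname{\Gamma}_n$ to which Definition~\ref{def:analy-stacking} refers, keeping it distinct from the finite-dimensional maps bearing cognate names in Section~\ref{framework}.
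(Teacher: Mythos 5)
Your proposal is correct and takes essentially the same route as the paper: the paper's proof simply notes that the $k$th populations are $(\operatorname{\Gamma}_n)^k(\mb{X})$ and $(\operatorname{\Gamma}_\infty)^k(\mb{X})$, invokes Theorem~\ref{thm:analy-stacking} to get the stacking property of $(\operatorname{\Gamma}_\infty)^k$, and applies it to the constant sequence $(\mb{X},\mb{X},\dots)$ — exactly the ``equivalent route'' you mention at the end. Your inductive formulation and the explicit reconciliation of the zero-padded $\PP[Q]{0}{n}$ with $\mb{X}$ via Theorem~\ref{thm:framework-finiteconv} is a slightly more careful rendering of the same argument, not a different one.
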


\begin{proof}
	Note that for $\EA{n}$ and $\EA{\infty}$, the $k$th populations they generate are $(\operatorname{\Gamma}_n)^k(\mb{X})$ and $(\operatorname{\Gamma}_\infty)^k(\mb{X})$, respectively. By Theorem \ref{thm:analy-stacking}, $(\operatorname{\Gamma}_\infty)^k$ has the stacking property on $\mathbb{U}$. Because the sequence $(\mb{X}, \mb{X},\dots)$ converges to $\mb{X}\in \mathbb{U}$, by Definition \ref{def:analy-stacking}, $(\operatorname{\Gamma}_n)^k(\mb{X})\tod (\operatorname{\Gamma}_\infty)^k(\mb{X})\in \mathbb{U}$ as $n\to \infty$. Since this holds for any $k\in \mathbb{N}$, by Definition \ref{def:framework-convipm}, $\EA{\infty}$ converges.
\end{proof}

By Theorem \ref{thm:analy-stacking} and Theorem \ref{thm:analy-sufficient1}, we can prove the convergence of the overall IPM by proving that the operators in the IPM have the stacking property. Comparing with (\ref{statement:analy-G}), it is clear that the stacking property is a sufficient condition. This is because the stacking property requires that $(\operatorname{\Gamma}_n(\mb{A}_n))_n$ converges to a point in $\mathbb{U}$ for \emph{any} converging sequence $(\mb{A}_n)_n$ satisfying $(\mb{A}_n)_n\tod \mb{A}_\infty \in\mathbb{U}$, while (\ref{statement:analy-G}) only requires the convergence to hold for the \emph{specific} converging sequence $(\Qkn)_n$. Since $(\Qkn)_n$ is generated by the algorithm, it may have special characteristics regarding converging rate, distributions, etc. On the other hand, checking the stacking property may be easier than proving (\ref{statement:analy-G}). This is because the stacking property is independent of the generation number $k$.

Another point worth discussing is the introduction of $\mathbb{U}$ in Definition \ref{def:analy-stacking}. Of course, if we omit $\mathbb{U}$ (or equivalently let $\mathbb{U}=\mathbb{M}^\infty$), the stacking property will become ``stronger'' because if it holds, the convergence of the IPM is proved for the EA starting from \emph{any} initial population. However, in that case the condition is so restricted that the stacking property cannot be proved for many operators.

In Definition \ref{def:analy-stacking}, it is required that $\operatorname{\Gamma}_n(\mb{A}_n) \tod \operatorname{\Gamma}_\infty(\mb{A}_\infty)\in \mathbb{U}$ as $n\to \infty$. The sequence under investigation is $(\operatorname{\Gamma}_n(\mb{A}_n))_n$, which is a sequence of changing operators $(\operatorname{\Gamma}_n)_n$ on a sequence of changing inputs $(\mb{A}_n)_n$. As both the operators and the inputs change, the convergence of $(\operatorname{\Gamma}_n(\mb{A}_n))_n$ may still be difficult to prove. Therefore, in the following, we further derive two sufficient conditions for the stacking property.

First, let $\mb{B}_{\alpha,\beta}=\operatorname{\Gamma}_\beta(\mb{A}_\alpha)$, where $\alpha, \beta \in \mathbb{N}_+ \cup \{\infty \}$. Then, we have the following sufficient conditions for the stacking property.

\begin{theorem}[Sufficient condition 2]
	\label{thm:analy-sufficient2}
	For a space $\mathbb{U}$ and all converging sequences $\mb{A}_n \tod \mb{A}_\infty\in \mathbb{U}$, if the following two conditions
	\begin{enumerate}
		\item $\exists M\in \mathbb{N}_+$, such that for all $m>M$, $\mb{B}_{n,m}\tod \mb{B}_{\infty,m}$ uniformly as $n\to \infty$, i.e. $\sup\limits_{m>M} \distd(\mb{B}_{n,m},\mb{B}_{\infty,m}) \to 0$ as $n\to \infty$, \label{thm:analy-sufficient2-condi1}
		\item $\mb{B}_{\infty,m}\tod \mb{B}_{\infty,\infty} \in \mathbb{U}$ as $m\to \infty$, \label{thm:analy-sufficient2-condi2} 
	\end{enumerate}
	are both met, then $\operatorname{\Gamma}_\infty$ has the stacking property on $\mathbb{U}$.
\end{theorem}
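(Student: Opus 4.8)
The plan is to derive the stacking property of $\operatorname{\Gamma}_\infty$ on $\mathbb{U}$ directly from Definition~\ref{def:analy-stacking} by a diagonal argument in the Prokhorov metric. Since $\mathbb{S}^\infty$ is separable, convergence in distribution there is metrized by $\distd$ (recalled just before Definition~\ref{def:prokhorov}); thus $\mb{x}_n\tod\mb{x}$ is equivalent to $\distd(\mb{x}_n,\mb{x})\to 0$, and we may work entirely with distances. Fix an arbitrary converging sequence $\mb{A}_n\tod\mb{A}_\infty\in\mathbb{U}$, for which Conditions~\ref{thm:analy-sufficient2-condi1} and~\ref{thm:analy-sufficient2-condi2} are assumed to hold, and write $\mb{B}_{\alpha,\beta}=\operatorname{\Gamma}_\beta(\mb{A}_\alpha)$ as in the statement, so that $\operatorname{\Gamma}_n(\mb{A}_n)=\mb{B}_{n,n}$ and $\operatorname{\Gamma}_\infty(\mb{A}_\infty)=\mb{B}_{\infty,\infty}$. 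By Definition~\ref{def:analy-stacking} it then suffices to prove $\distd(\mb{B}_{n,n},\mb{B}_{\infty,\infty})\to 0$ together with $\mb{B}_{\infty,\infty}\in\mathbb{U}$; the membership is part of Condition~\ref{thm:analy-sufficient2-condi2}.

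For the convergence, the key step is the triangle inequality for $\distd$ along the diagonal: for every $n\in\mathbb{N}_+$,
\[
\distd(\mb{B}_{n,n},\mb{B}_{\infty,\infty})\;\le\;\distd(\mb{B}_{n,n},\mb{B}_{\infty,n})\;+\;\distd(\mb{B}_{\infty,n},\mb{B}_{\infty,\infty}).
\]
The second summand tends to $0$ as $n\to\infty$: Condition~\ref{thm:analy-sufficient2-condi2} says $\mb{B}_{\infty,m}\tod\mb{B}_{\infty,\infty}$, i.e.\ $\distd(\mb{B}_{\infty,m},\mb{B}_{\infty,\infty})\to 0$ as $m\to\infty$, and substituting $m=n$ gives the claim. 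For the first summand, once $n>M$ (which holds for all large $n$) we have $\distd(\mb{B}_{n,n},\mb{B}_{\infty,n})\le\sup_{m>M}\distd(\mb{B}_{n,m},\mb{B}_{\infty,m})$, and the right-hand side tends to $0$ as $n\to\infty$ by the uniform convergence in Condition~\ref{thm:analy-sufficient2-condi1}. Hence $\distd(\mb{B}_{n,n},\mb{B}_{\infty,\infty})\to 0$, i.e.\ $\operatorname{\Gamma}_n(\mb{A}_n)\tod\operatorname{\Gamma}_\infty(\mb{A}_\infty)\in\mathbb{U}$; as $(\mb{A}_n)_n$ was an arbitrary converging sequence into $\mathbb{U}$, $\operatorname{\Gamma}_\infty$ has the stacking property on $\mathbb{U}$.

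The argument is essentially the standard fact that if $\distd(\mb{B}_{n,m},\mb{B}_{\infty,m})\to 0$ \emph{uniformly} in $m$ and $\mb{B}_{\infty,m}\to\mb{B}_{\infty,\infty}$, then the diagonal $\mb{B}_{n,n}$ converges to $\mb{B}_{\infty,\infty}$, so I do not anticipate a serious obstacle. The one point that needs care in the write-up is that the uniformity in Condition~\ref{thm:analy-sufficient2-condi1} is indispensable: the diagonal term $\distd(\mb{B}_{n,n},\mb{B}_{\infty,n})$ has its second index moving with $n$, so mere pointwise-in-$m$ convergence would not control it, whereas the bound by $\sup_{m>M}$ does. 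It should also be noted that the finitely many indices $m\le M$ never enter the estimate, since on the diagonal we only use $\mb{B}_{n,n}$ for $n>M$, and that no further tools (continuous mapping, tightness, Theorem~\ref{thm:framework-finiteconv}, etc.) are required.
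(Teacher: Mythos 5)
Your proposal is correct and coincides with the paper's own argument: the same triangle inequality $\distd(\mb{B}_{n,n},\mb{B}_{\infty,\infty})\leq \distd(\mb{B}_{n,n},\mb{B}_{\infty,n})+\distd(\mb{B}_{\infty,n},\mb{B}_{\infty,\infty})$ along the diagonal, with the first term controlled by the uniform bound of Condition 1 (for $n>M$) and the second by Condition 2, the paper simply phrasing it with explicit $\epsilon/2$ thresholds and $l>M\vee N\vee\widetilde{M}$. Your remarks on why uniformity is indispensable and why the indices $m\leq M$ never enter are accurate.
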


\begin{theorem}[Sufficient condition 3]
	\label{thm:analy-sufficient3}
	For a space $\mathbb{U}$ and all converging sequences $\mb{A}_n \tod \mb{A}_\infty\in \mathbb{U}$, if the following two conditions
	\begin{enumerate}
		\item $\exists N\in \mathbb{N}_+$, such that for all $n>N$, $\mb{B}_{n,m}\tod \mb{B}_{n,\infty}$ uniformly as $m\to \infty$, i.e. $\sup\limits_{n>N} \distd(\mb{B}_{n,m},\mb{B}_{n,\infty}) \to 0$ as $m\to \infty$, \label{thm:analy-sufficient3-condi1}
		\item $\mb{B}_{n,\infty}\tod \mb{B}_{\infty,\infty} \in \mathbb{U}$ as $n\to \infty$, \label{thm:analy-sufficient3-condi2} 
	\end{enumerate}
	are both met, then $\operatorname{\Gamma}_\infty$ has the stacking property on $\mathbb{U}$.
\end{theorem}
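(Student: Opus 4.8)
The plan is to reduce the statement to a single triangle-inequality estimate in the Prokhorov metric $\distd$, using the fact recalled after Definition \ref{def:prokhorov} that $\distd$ is a metric metrizing convergence in distribution on the separable metric space $\mathbb{S}^\infty$. Fix an arbitrary converging sequence $\mb{A}_n \tod \mb{A}_\infty \in \mathbb{U}$. By Definition \ref{def:analy-stacking}, what must be shown is $\operatorname{\Gamma}_n(\mb{A}_n) \tod \operatorname{\Gamma}_\infty(\mb{A}_\infty)$ together with $\operatorname{\Gamma}_\infty(\mb{A}_\infty) \in \mathbb{U}$. Writing $\mb{B}_{\alpha,\beta}=\operatorname{\Gamma}_\beta(\mb{A}_\alpha)$, these amount to $\mb{B}_{n,n} \tod \mb{B}_{\infty,\infty}$ and $\mb{B}_{\infty,\infty}\in\mathbb{U}$; the membership is immediate from condition \ref{thm:analy-sufficient3-condi2}), so only the weak convergence requires work.

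First I would invoke the triangle inequality for $\distd$ to write, for every $n$,
\[
\distd(\mb{B}_{n,n},\mb{B}_{\infty,\infty}) \leq \distd(\mb{B}_{n,n},\mb{B}_{n,\infty}) + \distd(\mb{B}_{n,\infty},\mb{B}_{\infty,\infty}),
\]
and then show that each summand tends to $0$ as $n\to\infty$. The second summand does so because condition \ref{thm:analy-sufficient3-condi2}) asserts $\mb{B}_{n,\infty}\tod\mb{B}_{\infty,\infty}$, equivalently $\distd(\mb{B}_{n,\infty},\mb{B}_{\infty,\infty})\to 0$. For the first summand I would run a diagonal argument off condition \ref{thm:analy-sufficient3-condi1}): given $\epsilon>0$ choose $M_\epsilon$ so that $\sup_{n'>N}\distd(\mb{B}_{n',m},\mb{B}_{n',\infty})<\epsilon$ for all $m>M_\epsilon$; then for any $n>\max\{N,M_\epsilon\}$, taking $m=n$ gives $\distd(\mb{B}_{n,n},\mb{B}_{n,\infty})\leq\sup_{n'>N}\distd(\mb{B}_{n',n},\mb{B}_{n',\infty})<\epsilon$. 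Hence $\distd(\mb{B}_{n,n},\mb{B}_{n,\infty})\to 0$, and combining the two bounds yields $\distd(\mb{B}_{n,n},\mb{B}_{\infty,\infty})\to 0$, i.e. $\operatorname{\Gamma}_n(\mb{A}_n)\tod\operatorname{\Gamma}_\infty(\mb{A}_\infty)$. Since the converging sequence $\mb{A}_n\tod\mb{A}_\infty$ was arbitrary, Definition \ref{def:analy-stacking} is satisfied and $\operatorname{\Gamma}_\infty$ has the stacking property on $\mathbb{U}$.

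I do not anticipate a genuine obstacle: the argument mirrors the one for Theorem \ref{thm:analy-sufficient2} with the population-size index and the truncation index interchanged, and it uses nothing beyond $\distd$ being a metric for weak convergence. The one delicate point is the uniformity in hypothesis \ref{thm:analy-sufficient3-condi1}): it is exactly what makes the diagonal substitution $m=n$ legitimate, since without uniformity the rate of convergence in $m$ could worsen as $n$ grows and the bound on the first summand would fail. Concretely, care is only needed to apply \ref{thm:analy-sufficient3-condi1}) to indices larger than $N$ and to observe that the diagonal index $m=n$ eventually exceeds any prescribed $M_\epsilon$.
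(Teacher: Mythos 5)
Your proof is correct and is essentially the paper's own argument: the paper proves Theorem \ref{thm:analy-sufficient2} by the same Prokhorov-metric triangle inequality and diagonal substitution, then obtains Theorem \ref{thm:analy-sufficient3} by noting the two are symmetric in $m$ and $n$, which is exactly the index swap you carry out explicitly. Your added remarks on the role of the uniformity hypothesis and on the membership $\mb{B}_{\infty,\infty}\in\mathbb{U}$ are consistent with, and slightly more explicit than, what the paper writes.
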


In the following, we prove Theorem \ref{thm:analy-sufficient2}. Since Theorem \ref{thm:analy-sufficient2} and Theorem \ref{thm:analy-sufficient3} are symmetric in $m$ and $n$, proving one of them leads to the other. Recall that $\rho_d$ is the Prokhorov metric (Definition \ref{def:prokhorov}) and $\vee$ gets the maximal in the expression.

\begin{proof}
	$\forall \epsilon>0$, by condition \ref{thm:analy-sufficient2-condi1} in Theorem \ref{thm:analy-sufficient2}, $\exists N$ s.t. $\sup\limits_{m>M} \distd(\mb{B}_{n,m},\mb{B}_{\infty,m})<\frac{1}{2}\epsilon$ for all $n > N$. By condition \ref{thm:analy-sufficient2-condi2} in Theorem \ref{thm:analy-sufficient2}, $\exists \widetilde{M}$ s.t. $\distd(\mb{B}_{\infty,m},\mb{B}_{\infty,\infty})<\frac{1}{2}\epsilon$ for all $m > \widetilde{M}$. Now for all $l > M\vee N\vee \widetilde{M}$,
	\begin{align*}
	\distd(\mb{B}_{l,l},\mb{B}_{\infty,\infty})&\leq \distd(\mb{B}_{l,l},\mb{B}_{\infty,l})+\distd(\mb{B}_{\infty,l},\mb{B}_{\infty,\infty})\\
	&\leq\frac{1}{2}\epsilon+\frac{1}{2}\epsilon=\epsilon.
	\end{align*}
	Therefore, $\mb{B}_{n,n} \tod \mb{B}_{\infty,\infty}$ as $n\to \infty$.
\end{proof}

To understand these two theorems, consider the relationships between $\mb{A}_\alpha$ and $\mb{B}_{\alpha,\beta}$ illustrated by Fig. \ref{fig:analy-sufficient}. In the figure, the solid arrow represents the premise in Definition \ref{def:analy-stacking}, i.e. $\mb{A}_n\tod \mb{A}_\infty \in \mathbb{U}$ as $n\to \infty$. The double line arrow represents the direction to be proved for the stacking property on $\mathbb{U}$, i.e. $\mb{B}_{n,n}\tod \mb{B}_{\infty,\infty} \in \mathbb{U}$ as $n\to \infty$. The dashed arrows are the directions to be checked for Theorem \ref{thm:analy-sufficient2} to hold. The wavy arrows are the directions to be checked for Theorem \ref{thm:analy-sufficient3} to hold.

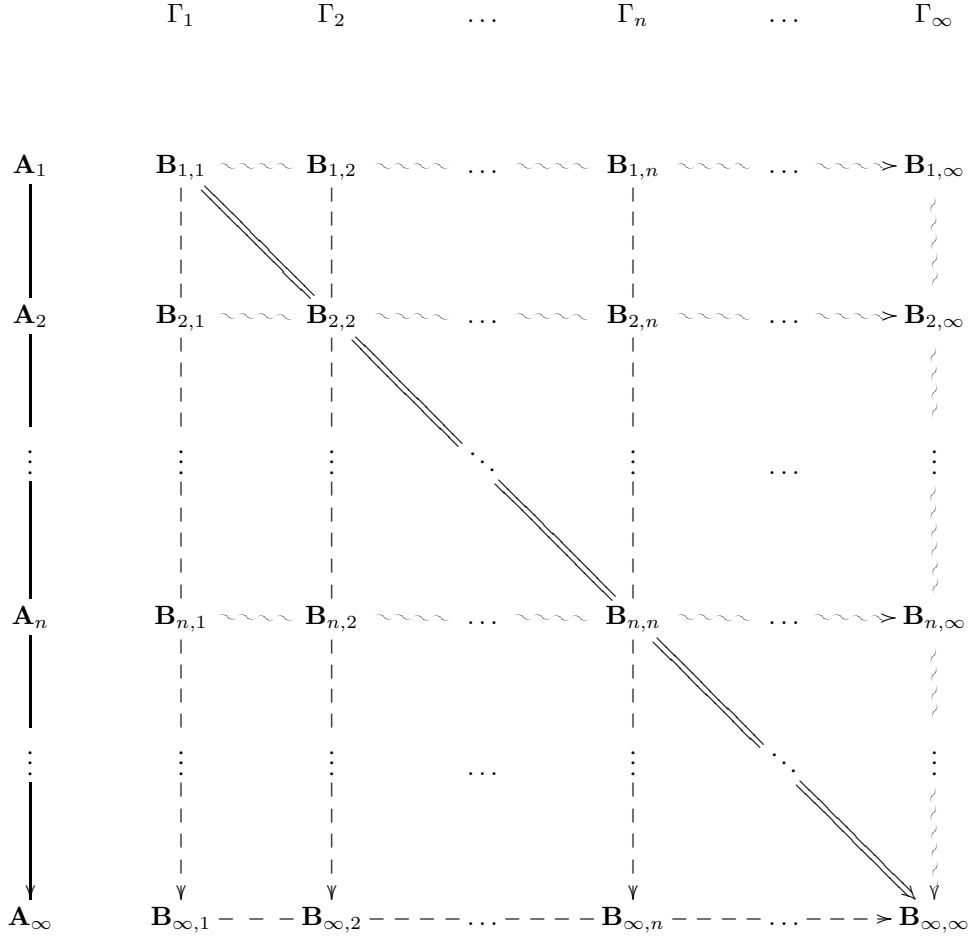
\begin{figure*}[ht]
	\centering
	\[
	\xymatrix@!
	{
		&	\operatorname{\Gamma}_1 &	\operatorname{\Gamma}_2	&\dots&\operatorname{\Gamma}_n&	\dots	&	\operatorname{\Gamma}_\infty\\
		\mb{A}_1 \ar@{-}[d]	&	\mb{B}_{1,1} \ar@{~}[r] \ar@{--}[d] \ar@{=}[rd]	&	\mb{B}_{1,2} \ar@{~}[r]\ar@{--}[d]	&\dots\ar@{~}[r]&\mb{B}_{1,n} \ar@{--}[d]\ar@{~}[r]	&	\dots\ar@{~>}[r]	&	\mb{B}_{1,\infty}\ar@{~}[d]	\\
		\mb{A}_2 \ar@{-}[d]	&	\mb{B}_{2,1} \ar@{--}[d]\ar@{~}[r]	&	\mb{B}_{2,2} \ar@{~}[r]\ar@{--}[d] \ar@{=}[rd]	&\dots\ar@{~}[r]&\mb{B}_{2,n} \ar@{--}[d]\ar@{~}[r]	&\dots	\ar@{~>}[r]	&\mb{B}_{2,\infty}\ar@{~}[d]		\\
		\vdots \ar@{-}[d]	&	\vdots \ar@{--}[d]	&	\vdots \ar@{--}[d]	&\ddots \ar@{=}[rd]	&\vdots \ar@{--}[d]	&	\dots&	\vdots\ar@{~}[d]\\
		\mb{A}_n \ar@{-}[d]	&	\mb{B}_{n,1} \ar@{~}[r]\ar@{--}[d]	&	\mb{B}_{n,2}\ar@{~}[r] \ar@{--}[d]&\dots\ar@{~}[r]&\mb{B}_{n,n}\ar@{~}[r] \ar@{--}[d]	 \ar@{=}[rd]	&	\dots\ar@{~>}[r]	&	\mb{B}_{n,\infty}\ar@{~}[d]	\\
		\vdots \ar@{->}[d]	&	\vdots \ar@{-->}[d]	&	\vdots \ar@{-->}[d]	&\dots&\vdots \ar@{-->}[d]	&	\ddots \ar@{=>}[rd]	&	\vdots\ar@{~>}[d]\\	
		\mb{A}_\infty&	\mb{B}_{\infty,1} \ar@{--}[r]	&	\mb{B}_{\infty,2} \ar@{--}[r]	&\dots \ar@{--}[r]	&\mb{B}_{\infty,n} \ar@{--}[r]	&	\dots \ar@{-->}[r]	&	\mb{B}_{\infty,\infty}	\\
	}
	\]
	\caption{Relationships between $\mb{A}_\alpha$ and $\mb{B}_{\alpha,\beta}$}
	\label{fig:analy-sufficient}
\end{figure*}

Now it is clear that Theorem \ref{thm:analy-sufficient2} and Theorem \ref{thm:analy-sufficient3} bring benefits. For example, for Theorem \ref{thm:analy-sufficient2}, instead of proving the convergence for a sequence generated by changing operators and inputs ($\mb{B}_{n,n}\tod \mb{B}_{\infty,\infty}$), this sufficient condition considers the convergence of sequences generated by the \emph{same} operator on changing inputs ($\mb{B}_{n,m}\tod \mb{B}_{\infty,m}$) and of the sequence generated by changing operators on the \emph{same} input ($\mb{B}_{\infty,m}\tod \mb{B}_{\infty,\infty}$).

The reason we introduce $M$ and $N$ in Theorem \ref{thm:analy-sufficient2} and Theorem \ref{thm:analy-sufficient3} respectively is to exclude some of the starting columns and rows in Fig. \ref{fig:analy-sufficient}, if necessary. This is useful in proving the convergence of the IPM of the $k$-ary recombination operator.

\subsection{The I.I.D. Assumption}
\label{chapter:analy-iid}

In this section, we address the issue of how to construct IPM. This issue also corresponds to how to choose the space $\mathbb{U}$ for the stacking property.

Before introducing the i.i.d. assumption, let us give an example. Consider the space $\mathbb{U}=\{\mb{x}\in \mathbb{M}^\infty|\Prob[\mb{x}=(c,c,\dots)]=1 \textnormal{ for some } c\in \mathbb{S}\}$. If the initial population follows some distribution from $\mathbb{U}$, then the population consists of all identical individuals. If an EA with proportionate selection and crossover operates on this initial population, then all subsequent populations stay the same as the initial population. An IPM of this EA can be easily constructed, and it can be easily proved that the stacking property holds as long as the EA chooses its initial population from $\mathbb{U}$. However, this is not a very interesting case. This is because $\mathbb{U}$ is too small to model real EAs.

On the other hand, if $\mathbb{U}=\{\mb{x}\in \mathbb{M}^\infty|\mb{x} \textnormal{ is exchangeable}\}$, $\mathbb{U}$ may be too big to derive meaningful results. This can be seen from our analysis in Section \ref{qiwrong} which shows that under exchangeability it is not possible to derive transition equations of marginal distributions for the simple EA.

Therefore, choosing $\mathbb{U}$ should strike a balance between the capacity and the complexity of the IPM. In the following analysis, we choose $\mathbb{U}$ to be $\UI=\{\mb{x}\in \mathbb{M}^\infty|\mb{x} \textnormal{ is i.i.d.} \}$. IPMs of EAs are constructed using the i.i.d. assumption, and we prove the convergence of the overall IPM by proving that the operators in the IPM have the stacking property on $\UI$.

We choose $\UI$ for the following reasons. First, in the real world, many EAs generate i.i.d. initial populations. Therefore this assumption is realistic. Secondly, i.i.d. random elements have the same marginal distributions. Therefore IPM can be described by transition equations of marginal distributions. Finally, there are abundant literature on the converging laws and limit theorems of i.i.d. sequences. Therefore, the difficulty in constructing IPM can be greatly reduced compared with using other modeling assumptions.

In the following, we show how to construct IPM under the i.i.d. assumption. This process also relates to condition \ref{thm:analy-sufficient2-condi2} in Theorem \ref{thm:analy-sufficient2}. It essentially describes how the IPM generates new populations.

Let the operator in the EA be $\operatorname{\Gamma}$, and the corresponding operator in $\EA{m}$ be $\operatorname{\Gamma}_m:\mathbb{M}^\infty \to \mathbb{M}^\infty$. Recall that in our framework we only study EAs consisting of c.i.i.d. operators, therefore $\operatorname{\Gamma}_m$ generates c.i.i.d. outputs by using the first $m$ elements of its input. The process that $\operatorname{\Gamma}_m$ generates \emph{each} output can be described by the conditional p.d.f. $f_{\operatorname{\Gamma}_m}(x|y_1,y_2,\dots,y_m)$. Let $\mb{a}=(\mb{a}_i)_{i=1}^\infty \in \mathbb{M}^\infty$ be the input and $\mb{b}=(\mb{b}_i)_{i=1}^\infty=\operatorname{\Gamma}_m(\mb{a})$ be the output, then the distribution of $\mb{b}$ can be completely described by its \emph{finite-dimensional} p.d.f.s
\begin{align}
\label{eqn:analy-GammaM}
&f_{\pi_l(\mb{b})}(x_1,\dots,x_l)=\nonumber\\
&\iint\limits_{\mathbb{S}^m} \! \prod\limits_{i=1}^{l}f_{\operatorname{\Gamma}_m}(x_i|y_1,\dots,y_m) \cdot f_{\pi_m(\mb{a})} (y_1,\dots,y_m)\,\mathrm{d}y_1\dots \mathrm{d}y_m
\end{align}
for every $l\in \mathbb{N}_+$. 

To derive the IPM $\operatorname{\Gamma}_\infty$ for $\operatorname{\Gamma}$, consider the case when $l=1$ and $\mb{a}\in \UI$ in (\ref{eqn:analy-GammaM}). Noting that in this case $f_{\pi_m(\mb{a})} (y_1,\dots,y_m) = \prod\limits_{i=1}^{m} f_{\mb{a}_1}(y_i)$, we have 
\begin{equation}
\label{eqn:analy-iidGammaM}
f_{\mb{b}_1}(x)=\iint\limits_{\mathbb{S}^m} \! f_{\operatorname{\Gamma}_m}(x|y_1,\dots,y_m) \cdot\prod\limits_{i=1}^{m} f_{\mb{a}_1}(y_i) \, \mathrm{d}y_1\dots \mathrm{d}y_m.
\end{equation}

Now taking $m\to \infty$, (\ref{eqn:analy-iidGammaM}) in the limit becomes the transition equation describing how $\operatorname{\Gamma}_\infty$ generates \emph{each} new individual. Let the transition equation be
\begin{equation}
\label{eqn:analy-trans}
	f_{\mb{b}_1}=\operatorname{T}_\Gamma [f_{\mb{a}_1}],
\end{equation}
and let $\mb{c}=(\mb{c}_i)_{i=1}^\infty=\operatorname{\Gamma}_\infty(\mb{a})$. Then how $\operatorname{\Gamma}_\infty$ generates $l$ individuals can be described by the finite-dimensional p.d.f.s of $\mb{c}$:
\begin{equation}
\label{eqn:analy-GammaInf}
f_{\pi_l(\mb{c})}(x_1,\dots,x_l)=\prod\limits_{i=1}^{l}\operatorname{T}_\Gamma [f_{\mb{a}_1}](x_i)
\end{equation}
for every $l\in \mathbb{N}_+$. Overall, (\ref{eqn:analy-GammaInf}) describes the mapping $\operatorname{\Gamma}_\infty:\UI \to \UI$.

To better understand the construction, it is important to realize that for $\operatorname{\Gamma}_\infty$ \emph{both} the input and the output are i.i.d. In other words, $\operatorname{\Gamma}_\infty$ generates i.i.d. population dynamics to simulate the real population dynamics produced by $\Gamma$, only that the transition equation in $\operatorname{\Gamma}_\infty$ is derived by mimicking how $\operatorname{\Gamma}$ generates each new individual on i.i.d. inputs and taking the population size to infinity. In fact, if the stacking property on $\UI$ is proved and the initial population is i.i.d., $\operatorname{\Gamma}_\infty$ will always take i.i.d. inputs and produce i.i.d. outputs. The behaviors of $\operatorname{\Gamma}_\infty$ on $\UI$ is well-defined. On the other hand, $\operatorname{\Gamma}_\infty(\mb{A}\notin \UI)$ is not defined in the construction. This leaves us freedom. We can define $\operatorname{\Gamma}_\infty(\mb{A}\notin \UI)$ freely to facilitate proving the stacking property of $\operatorname{\Gamma}_\infty$. In particular, $\mb{B}_{n, \infty}$ for $n\in \mathbb{N}_+$ in Fig. \ref{fig:analy-sufficient} can be defined freely to facilitate the analysis. 

In fact, under the i.i.d. assumption, deriving the transition equation for most operators is the easy part. The more difficult part is to prove the stacking property of $\operatorname{\Gamma}_\infty$ on $\UI$. To give an example, consider the transition equation (\ref{eqn:Qi-trans}) constructed by Qi et al. in \cite{Qi1}, which models the joint effects of proportionate selection and mutation. As our analysis in Section \ref{qiwrong} shows, it does not hold under the assumption of exchangeability. However, if the modeling assumption is i.i.d., the transition equation can be immediately proved (see our analysis in Section \ref{qiwrong}). This also applies to the transition equation built by the same authors for the uniform crossover operator (in Theorem 1 of \cite{Qi2}), where the transition equation is in fact constructed under the i.i.d. assumption. Therefore, in the following analyses, we do not refer to the explicit form of the transition equation, unless it is needed. We only assume that the transition equation is successfully constructed, and it has the form (\ref{eqn:analy-trans}) which is derived from (\ref{eqn:analy-iidGammaM}) as $m\to \infty$.

The construction of the IPM also relates partly to condition \ref{thm:analy-sufficient2-condi2} in Theorem \ref{thm:analy-sufficient2}. Comparing with this condition, it can be seen that for a successfully constructed $\operatorname{\Gamma}_\infty$, the following two facts are proved in the construction.
\begin{enumerate}
	\item $\mb{B}_{\infty,m}\tompw \mb{B}_{\infty,\infty}$ as $m\to \infty$.
	\item $\mb{B}_{\infty,\infty}\in \UI$.
\end{enumerate}
Of course, these two facts are not sufficient for this condition to hold. One still needs to prove $\mb{B}_{\infty,m}\tod \mb{B}_{\infty,\infty}$ as $m\to \infty$. In other words, one has to consider convergence of finite dimensional distributions.

Finally, we sometimes use $x$ for $x_1,\dots,x_l$ if $l$ is clear in the context. For example (\ref{eqn:analy-GammaM}) can be rewritten as \[f_{\pi_l(\mb{b})}(x)=\iint\limits_{\mathbb{S}^m} \! \prod\limits_{i=1}^{l} f_{\operatorname{\Gamma}_m}(x_i|y) \cdot f_{\pi_m(\mb{a})} (y)\, \mathrm{d}y.\]

\subsection{Analysis of the Mutation Operator}
\label{chapter:analy-mut}

Having derived sufficient conditions for the stacking property and constructed the IPM, we prove the convergence of the IPM of the mutation operator first. Mutation adds an i.i.d. random vector to each individual in the population. If the current population is $\mb{A}\in \mathbb{M}^\infty$, then the population after mutation satisfies $\Law[\mb{B}=\operatorname{\Gamma}_m(\mb{A})]=\Law(\mb{A}+\mb{X})$ for all $m\in \mathbb{N}_+$, where $\mb{X} \in \UI$ is a random element decided by the mutation operator. As the content of the mutation operator does not depend on $m$, we just write $\operatorname{\Gamma}$ to represent $\operatorname{\Gamma}_m$. To give an example, $\mb{X}$ may be the sequence $(\mb{x}_1,\mb{x}_2,\dots)$ with all $\mb{x}_i \in \mathbb{M}$ mutually independent and $\mb{x}_i \sim N(0,\identity_d)$ for all $i\in \mathbb{N}_+$, where $N(a,B)$ is the multivariate normal distribution with mean $a$ and covariance matrix $B$, and $\identity_d$ is the $d$-dimensional identity matrix. Note that every time $\operatorname{\Gamma}$ is invoked, it generates perturbations independently. For example, let $\mb{A}_1$ and $\mb{A}_2$ be two populations, then we can write $\operatorname{\Gamma}(\mb{A}_i)=\mb{A}_i + \mb{X}_i$ for $i=1,2$ satisfying $\Law(\mb{X}_1)=\Law(\mb{X}_2)=\Law(\mb{X})$ and $\{\mb{X}_i\}_{i=1,2}$ are mutually independent and independent from $\{\mb{A}_i\}_{i=1,2}$.

Next, consider $\operatorname{\Gamma}_\infty$. Recall that as an IPM, $\operatorname{\Gamma}_\infty$ simulates real population dynamics by taking i.i.d. inputs and producing i.i.d. outputs. If the marginal p.d.f.s of $\mb{A}$ and $\mb{X}$ are $f_{\mb{a}}$ and $f_{\mb{x}}$, respectively, then $\operatorname{\Gamma}_\infty(\mb{A})$ generates i.i.d. individuals whose p.d.f.s are $f_{\mb{a}}*f_{\mb{x}}$, where $*$ stands for convolution. Given the construction, we can prove the stacking property of $\operatorname{\Gamma}_\infty$.

\begin{theorem}[Mutation]
	\label{thm:analy-mut}
	Let $\operatorname{\Gamma}$ be the mutation operator, and $\operatorname{\Gamma}_\infty$ be the corresponding operator in the IPM constructed under the i.i.d. assumption, then $\operatorname{\Gamma}_\infty$ has the stacking property on $\UI$.
\end{theorem}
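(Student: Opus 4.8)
The plan is to verify Definition~\ref{def:analy-stacking} directly: given an arbitrary sequence $\mb{A}_n \tod \mb{A}_\infty\in\UI$, I would show both that $\operatorname{\Gamma}_\infty(\mb{A}_\infty)\in\UI$ and that $\operatorname{\Gamma}_n(\mb{A}_n)\tod\operatorname{\Gamma}_\infty(\mb{A}_\infty)$ as $n\to\infty$. The structural fact to exploit is that mutation is ``uniform'' in the population size: $\operatorname{\Gamma}_n$ does not depend on $n$, and for \emph{every} input $\mb{A}\in\mathbb{M}^\infty$ one has $\Law[\operatorname{\Gamma}_n(\mb{A})]=\Law(\mb{A}+\mb{X}_n)$ with $\mb{X}_n\in\UI$, $\Law(\mb{X}_n)=\Law(\mb{X})$, and $\mb{X}_n$ independent of $\mb{A}$; likewise, extending $\operatorname{\Gamma}_\infty$ off $\UI$ by the same rule (which the freedom noted in Section~\ref{chapter:analy-iid} permits), $\Law[\operatorname{\Gamma}_\infty(\mb{A})]=\Law(\mb{A}+\mb{X})$ with $\mb{X}\in\UI$ independent of $\mb{A}$. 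In particular the argument will never need $\mb{A}_n\in\UI$ --- only the limit has to be i.i.d.

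For the closure claim I would observe that $\mb{A}_\infty\in\UI$, $\mb{X}\in\UI$, and the two sequences are independent of each other, so $\mb{A}_\infty+\mb{X}$ has mutually independent coordinates, each with marginal density $f_{\mb{a}}*f_{\mb{x}}$; hence $\operatorname{\Gamma}_\infty(\mb{A}_\infty)\in\UI$.

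For the convergence I would first realise everything on a single probability space so that $\operatorname{\Gamma}_n(\mb{A}_n)=\mb{A}_n+\mb{X}_n$ and $\operatorname{\Gamma}_\infty(\mb{A}_\infty)=\mb{A}_\infty+\mb{X}_\infty$, with $\mb{X}_n,\mb{X}_\infty$ distributed as $\mb{X}$, $\mb{X}_n$ independent of $\mb{A}_n$, and $\mb{X}_\infty$ independent of $\mb{A}_\infty$; this changes none of the laws involved. Then $\mb{X}_n\tod\mb{X}_\infty$ holds trivially and $\mb{A}_n\tod\mb{A}_\infty$ by assumption, so Theorem~\ref{thm:framework-product} gives $(\mb{A}_n \; \mb{X}_n)^T\tod(\mb{A}_\infty \; \mb{X}_\infty)^T$ in the separable space $\mathbb{S}^\infty\times\mathbb{S}^\infty$. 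Finally, the coordinatewise addition map $(\mb{a},\mb{x})\mapsto\mb{a}+\mb{x}$ from $\mathbb{S}^\infty\times\mathbb{S}^\infty$ to $\mathbb{S}^\infty$ is everywhere continuous (coordinatewise limits pass through addition in $\mathbb{R}^d$, and coordinatewise convergence is equivalent to $\rho_\infty$-convergence), so the mapping theorem (Theorem~\ref{thm:framework-contimapping}) yields $\operatorname{\Gamma}_n(\mb{A}_n)\tod\operatorname{\Gamma}_\infty(\mb{A}_\infty)$, which together with the closure step is exactly the stacking property on $\UI$. A slightly slicker variant that avoids discussing $\rho_\infty$-continuity is to invoke Theorem~\ref{thm:framework-finiteconv} first and reduce to proving $\pi_l(\mb{A}_n+\mb{X}_n)\tod\pi_l(\mb{A}_\infty+\mb{X}_\infty)$ for each $l$, where the addition map acts on $\mathbb{R}^{dl}$ and continuity is immediate.

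I do not expect a genuine obstacle here: mutation is precisely the case in which the operator is a single fixed translation-by-independent-noise map for all $n$ and all inputs, so the heavier apparatus of Theorems~\ref{thm:analy-sufficient2} and~\ref{thm:analy-sufficient3} is unnecessary. The only points that need a little care are the continuity of coordinatewise addition on the infinite-product space $\mathbb{S}^\infty$ (cleanly circumvented by the finite-dimensional reduction of Theorem~\ref{thm:framework-finiteconv}) and making explicit, in the closure step, that $\operatorname{\Gamma}_\infty(\mb{A}_\infty)\in\UI$ relies on the independence of the mutation noise $\mb{X}$ from $\mb{A}_\infty$ together with the i.i.d. property of both.
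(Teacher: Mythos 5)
Your proposal is correct and is essentially the same argument as the paper's: both reduce to showing $\mb{A}_n+\mb{X}_n\tod\mb{A}_\infty+\mb{X}_\infty$ via finite-dimensional projections (Theorem~\ref{thm:framework-finiteconv}), convergence on product spaces for independent components (Theorem~\ref{thm:framework-product}), and the mapping theorem applied to coordinatewise addition. The only cosmetic difference is that you verify Definition~\ref{def:analy-stacking} directly, whereas the paper routes through Theorem~\ref{thm:analy-sufficient2}; since $\operatorname{\Gamma}_m$ does not depend on $m$, condition 2 there is trivially satisfied and condition 1 collapses to exactly the statement you prove, so the two framings coincide.
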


\begin{proof}
	We use the notations and premises in Theorem \ref{thm:analy-sufficient2}. Refer to Fig. \ref{fig:analy-sufficient}. In particular, the sequence $(\mb{A}_n)$ and the limit $\mb{A}_\infty$ are given and $\mb{A}_n \tod \mb{A}_\infty\in \UI$ as $n\to \infty$.
	
	Apparently,
	\begin{align*}
	[\operatorname{\Gamma}_m(\mb{A}_\infty)]_{m=1}^\infty &=[\operatorname{\Gamma}(\mb{A}_\infty),\operatorname{\Gamma}(\mb{A}_\infty),\dots]\\
	&\tod \operatorname{\Gamma}(\mb{A}_\infty)=\operatorname{\Gamma}_\infty(\mb{A}_\infty)\in \UI.
	\end{align*}
	Therefore, condition \ref{thm:analy-sufficient2-condi2} in Theorem \ref{thm:analy-sufficient2} is satisfied.
	
	Noting that condition \ref{thm:analy-sufficient2-condi1} in Theorem \ref{thm:analy-sufficient2} is equivalent to $\operatorname{\Gamma}(\mb{A}_n)\tod \operatorname{\Gamma}(\mb{A}_\infty)$, we prove this condition by proving that $\pi_i[\operatorname{\Gamma}(\mb{A}_n)]\tod \pi_i[\operatorname{\Gamma}(\mb{A}_\infty)]$ for all $i\in \mathbb{N}_+$. Then by Theorem \ref{thm:framework-finiteconv},  condition \ref{thm:analy-sufficient2-condi1} in Theorem \ref{thm:analy-sufficient2} holds. Then, as both conditions in Theorem \ref{thm:analy-sufficient2} are satisfied, this theorem is proved.
	
	Now, we prove $\pi_i[\operatorname{\Gamma}(\mb{A}_n)]\tod \pi_i[\operatorname{\Gamma}(\mb{A}_\infty)]$ for all $i\in \mathbb{N}_+$. First, note that $\operatorname{\Gamma}(\mb{A}_\alpha)=\mb{A}_\alpha+\mb{X}_\alpha$ for all $\alpha\in \mathbb{N}\cup \{\infty\}$. $\{\mb{X}_\alpha\in \mathbb{M}^\infty\}$ are i.i.d. and independent from $\{\mb{A}_\alpha\in \mathbb{M}^\infty\}$. In addition, for every $\alpha$, $\Law(\mb{X}_\alpha)=\Law(\mb{X})$.
	
	Since $\Law(\mb{X}_\alpha)=\Law(\mb{X})$, it is apparent that $\mb{X}_n \tod \mb{X}_\infty$. Then by Theorem \ref{thm:framework-finiteconv}, we have $\pi_i(\mb{X}_n) \tod \pi_i(\mb{X}_\infty)$ and $\pi_i(\mb{A}_n) \tod \pi_i(\mb{A}_\infty)$.
	
	Consider the product space $\mathbb{S}^i \times \mathbb{S}^i$. It is both separable and complete. Since $\pi_i(\mb{A}_\alpha)$ and $\pi_i(\mb{X}_\alpha)$ are independent, by Theorem \ref{thm:framework-product}, it follows that
	\begin{equation}
			\begin{bmatrix}
			\pi_i(\mb{A}_n)\\
			\pi_i(\mb{X}_n)
			\end{bmatrix}
			\tod
			\begin{bmatrix}
			\pi_i(\mb{A}_\infty)\\
			\pi_i(\mb{X}_\infty)
			\end{bmatrix}. \label{eqn:analy-convpi}
	\end{equation}
	
	Note that 
	\begin{equation}
	\pi_i[\operatorname{\Gamma}(\mb{A}_\alpha)]=
	\begin{bmatrix}
	\identity & \identity
	\end{bmatrix}
	\begin{bmatrix}
	\pi_i(\mb{A}_\alpha)\\
	\pi_i(\mb{X}_\alpha)
	\end{bmatrix}
	=h(	\begin{bmatrix}
	\pi_i(\mb{A}_\alpha)\\
	\pi_i(\mb{X}_\alpha)
	\end{bmatrix}
	), \label{eqn:analy-convh}	
	\end{equation}
	where $\identity$ is the identity matrix of appropriate dimension and $h:\mathbb{S}^i \times \mathbb{S}^i\to \mathbb{S}^i$ is a function satisfying $h(
	\begin{bmatrix}
	x\\
	y
	\end{bmatrix}
	)=
	\begin{bmatrix}
	I & I
	\end{bmatrix}
	\begin{bmatrix}
	x\\
	y
	\end{bmatrix}$. Apparently $h$ is continuous. Then by (\ref{eqn:analy-convpi}), (\ref{eqn:analy-convh}) and Theorem \ref{thm:framework-contimapping}, $\pi_i[\operatorname{\Gamma}(\mb{A}_n)]\tod \pi_i[\operatorname{\Gamma}(\mb{A}_\infty)]$ for any $i\in \mathbb{N}_+$.
\end{proof}

In the proof, we concatenate the input ($\mb{A}_n$) and the randomness ($\mb{X}_n$) of the mutation operator in a common product space, and represent $\operatorname{\Gamma}$ as a continuous function in that space. This technique is also used when analyzing other operators. 

\subsection{Analysis of $k$-ary Recombination}
\label{chapter:analy-recomb}

Consider the $k$-ary recombination operator and denote it by $\operatorname{\Gamma}$. In $\EA{m}$, the operator is denoted by $\operatorname{\Gamma}_m$. $\operatorname{\Gamma}_m$ works as follows. To generate a new individual, it first samples $k$ individuals from the current $m$-sized population randomly \emph{with} replacement. Assume the current population consists of $\{\mb{x}_i\}_{x=1}^m$, and the selected $k$ parents are $\{\mb{y}_i\}_{i=1}^k$, then $\{\mb{y}_i\}_{i=1}^k$ follows the probability:
\begin{equation}
\label{eqn:analy-parent}
	\Prob(\mb{y}_i=\mb{x}_j)=\frac{1}{m} \textnormal{ for all } i\in \{1,\dots,k\},j\in\{1,\dots,m\}.
\end{equation}

After the $k$ parents are selected, $\operatorname{\Gamma}_m$ produces a new individual $\mb{x}$ following the formula
\begin{equation}
\label{eqn:analy-recomb}
\mb{x}=\sum_{i=1}^{k} \mb{U}_i\mb{y}_i,
\end{equation} 
where $\{\mb{U}_i\}_{i=1}^k$ are random elements of $\mathbb{R}^{d\times d}$ (recall that $\mb{x}$ and $\mb{y}_i$ are random elements of $\mathbb{S}=\mathbb{R}^d$ modeling individuals in our framework). $\{\mb{U}_i\}_{i=1}^k$ are also independent of $\{\mb{y}_i\}_i$, and the \emph{joint} distribution of $(\mb{U}_i)_{i}$ is decided by the inner mechanism of $\operatorname{\Gamma}$. Overall, $\operatorname{\Gamma}_m$ generates the next population by repeatedly using this procedure to generate new individuals independently.

Our formulation seems strange at first sight, but it covers many real world recombination operators. For example, consider $k=2$ and $\mb{U}_1=\mb{U}_2=\frac{1}{2}\identity$. This operator is the crossover operator taking the mean of its two parents. On the other hand, if $k=2$ and the distributions of $\mb{U}_1$ and $\mb{U}_2$ satisfy
\[
\left\{
\begin{matrix}
\mb{U}_1=\operatorname{Diag}(\mb{s}_1,\mb{s}_2,\dots,\mb{s}_d)\\
\mb{U}_2=\identity-\mb{U}_1
\end{matrix}\right.
,\]
where $\operatorname{Diag}$ constructs a diagonal matrix from its inputs, $\{\mb{s}_i\}$ are i.i.d. random variables taking values in $\{0,1\}$ satisfying $\Prob(\mb{s}_i=0)=\Prob(\mb{s}_i=1)=1/2$, then this operator is the uniform crossover operator which sets value at each position from the two parents with probability $\frac{1}{2}$.

Consider the IPM $\operatorname{\Gamma}_\infty$. As stated in Section \ref{chapter:analy-iid}, we do not give the explicit form of the transition equation in $\operatorname{\Gamma}_\infty$. We assume that the IPM is successfully constructed, and the transition equation is derived by taking $m\to \infty$ in (\ref{eqn:analy-iidGammaM}). The reason for this approach is not only because deriving the transition equation is generally easier than proving the convergence of the IPM, but also the formulation in (\ref{eqn:analy-parent}) and (\ref{eqn:analy-recomb}) encompasses many real world $k$-ary recombination operators. We do not delve into details of the mechanisms of these operators and derive a transition equation for each one of them. Instead, our approach is general in that as long as the IPM is successfully constructed, our analysis on the convergence of the IPM can always be applied.

The following theorem is the primary result of our analysis for the $k$-ary recombination operator.

\begin{theorem}[$k$-ary recombination]
	\label{thm:analy-recomb}
	Let $\operatorname{\Gamma}$ be the $k$-ary recombination operator, and $\operatorname{\Gamma}_\infty$ be the corresponding operator in the IPM constructed under the i.i.d. assumption, then $\operatorname{\Gamma}_\infty$ has the stacking property on $\UI$.
\end{theorem}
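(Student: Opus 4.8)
The plan is to apply Sufficient condition 2 (Theorem~\ref{thm:analy-sufficient2}) with $\mathbb{U}=\UI$. Fix a converging sequence $\mb{A}_n\tod\mb{A}_\infty\in\UI$, write $\mu$ for the common marginal law of $\mb{A}_\infty$, and recall $\mb{B}_{\alpha,\beta}=\operatorname{\Gamma}_\beta(\mb{A}_\alpha)$. Concretely, $\operatorname{\Gamma}_m$ produces each output individual by drawing indices $I_1,\dots,I_k$ i.i.d.\ uniform on $\{1,\dots,m\}$ and matrices $(\mb{U}_1,\dots,\mb{U}_k)$ from the operator's joint law, independent of the population and of the indices, and returning $\sum_{i=1}^k\mb{U}_i(\mb{A})_{I_i}$; successive outputs use fresh mutually independent copies of this randomization. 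For $\mb{A}\in\UI$ with marginal $\nu$, $\operatorname{\Gamma}_\infty(\mb{A})$ is the i.i.d.\ sequence whose marginal is the law of $\sum_{i=1}^k\mb{U}_i\mb{Y}_i$ with $\mb{Y}_1,\dots,\mb{Y}_k$ i.i.d.\ $\sim\nu$ independent of $(\mb{U}_i)_i$, i.e.\ exactly $\operatorname{T}_\Gamma[\nu]$ obtained by letting $m\to\infty$ in (\ref{eqn:analy-iidGammaM}). Throughout, convergence in distribution of these $\mathbb{S}^\infty$-valued elements is reduced to convergence of finite coordinate blocks via Theorem~\ref{thm:framework-finiteconv}.

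For condition~\ref{thm:analy-sufficient2-condi2} of Theorem~\ref{thm:analy-sufficient2}, namely $\mb{B}_{\infty,m}=\operatorname{\Gamma}_m(\mb{A}_\infty)\tod\operatorname{\Gamma}_\infty(\mb{A}_\infty)\in\UI$ as $m\to\infty$, fix $l$ and inspect the first $l$ output individuals of $\operatorname{\Gamma}_m(\mb{A}_\infty)$: they are built from $l$ independent generation events, hence from $kl$ i.i.d.-uniform indices in $\{1,\dots,m\}$ applied to the i.i.d.\ pool $\mb{A}_\infty$. The probability that these $kl$ indices fail to be pairwise distinct is at most $\binom{kl}{2}/m\to0$; on the distinctness event the selected parents are genuinely $kl$ i.i.d.\ $\sim\mu$ variables, independent of the $l$ independent $\mb{U}$-blocks. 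A coupling argument then gives $\pi_l(\operatorname{\Gamma}_m(\mb{A}_\infty))\tod\pi_l(\operatorname{\Gamma}_\infty(\mb{A}_\infty))$, the limiting finite-dimensional laws being products, so the limit lies in $\UI$ with marginal $\operatorname{T}_\Gamma[\mu]$; Theorem~\ref{thm:framework-finiteconv} upgrades this to $\operatorname{\Gamma}_m(\mb{A}_\infty)\tod\operatorname{\Gamma}_\infty(\mb{A}_\infty)\in\UI$.

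For condition~\ref{thm:analy-sufficient2-condi1} of Theorem~\ref{thm:analy-sufficient2}, note first that for each fixed $m$ the map $\pi_l\circ\operatorname{\Gamma}_m$ sends $\pi_m(\mb{A})$, together with the independent randomization $(I,\mb{U})$, to a vector obtained by selecting $kl$ input coordinates and forming fixed linear combinations, a function continuous in $\pi_m(\mb{A})$ once the randomization is frozen; since $\pi_m(\mb{A}_n)\tod\pi_m(\mb{A}_\infty)$ and the randomization is independent, Theorems~\ref{thm:framework-product} and \ref{thm:framework-contimapping} — the concatenate-input-and-randomness technique already used in the proof of Theorem~\ref{thm:analy-mut} — yield $\operatorname{\Gamma}_m(\mb{A}_n)\tod\operatorname{\Gamma}_m(\mb{A}_\infty)$ as $n\to\infty$ for each fixed $m$. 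The substantive step, and the one I expect to be the main obstacle, is to make this uniform over $m>M$: because $\operatorname{\Gamma}_m$ can sample parents from arbitrarily deep coordinates of $\mb{A}_n$ whereas $\rho_\infty$ controls late coordinates only weakly, one must use a Skorokhod coupling of $\mb{A}_n\to\mb{A}_\infty$ driven by common indices and matrices, bound the per-individual discrepancy $\sum_i\mb{U}_i\big[(\mb{A}_n)_{I_i}-(\mb{A}_\infty)_{I_i}\big]$ using the tightness of the coordinates of $\mb{A}_n$ forced uniformly in $n$ by $\mb{A}_n\tod\mb{A}_\infty$ together with the vanishing collision probabilities, and exploit the parameter $M$ to discard the finitely many short-population columns where this estimate leaves no room. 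This is precisely where recombination is harder than mutation, for which $\operatorname{\Gamma}_m$ does not depend on $m$ and the uniformity is automatic. With both conditions of Theorem~\ref{thm:analy-sufficient2} established, $\operatorname{\Gamma}_\infty$ has the stacking property on $\UI$, and Theorems~\ref{thm:analy-stacking} and \ref{thm:analy-sufficient1} then deliver convergence of the IPM whenever the initial population is i.i.d.
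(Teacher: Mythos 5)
Your high-level toolkit (finite-dimensional reduction via Theorem~\ref{thm:framework-finiteconv}, a collision-probability argument for the $m\to\infty$ limit on i.i.d.\ input, and concatenating the input with the operator's internal randomness to exploit continuity) is the same as the paper's, but you route everything through Theorem~\ref{thm:analy-sufficient2}, and that choice is a dead end. Condition~\ref{thm:analy-sufficient2-condi1} of Theorem~\ref{thm:analy-sufficient2} demands $\sup_{m>M}\distd(\operatorname{\Gamma}_m(\mb{A}_n),\operatorname{\Gamma}_m(\mb{A}_\infty))\to 0$ as $n\to\infty$ for \emph{every} converging sequence $\mb{A}_n\tod\mb{A}_\infty\in\UI$. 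You correctly flag this as the main obstacle, but you only gesture at a Skorokhod-coupling/tightness argument; no such argument can succeed, because the condition is simply false for recombination. Take $\mb{A}_\infty$ i.i.d.\ with marginal $\mu$, and let $\mb{A}_n$ agree with $\mb{A}_\infty$ on its first $n$ coordinates but equal a fixed constant $c$ thereafter (the framework itself produces constant-padded tails, e.g.\ $\PP[Q]{0}{n}$). Then $\pi_l(\mb{A}_n)=\pi_l(\mb{A}_\infty)$ for $n\ge l$, so $\mb{A}_n\tod\mb{A}_\infty\in\UI$ by Theorem~\ref{thm:framework-finiteconv}; yet for $m\gg n$ the operator $\operatorname{\Gamma}_m$ samples its parents almost surely from the constant tail of $\mb{A}_n$, so the law of the first offspring of $\operatorname{\Gamma}_m(\mb{A}_n)$ is close to that of $(\sum_i\mb{U}_i)c$ while that of $\operatorname{\Gamma}_m(\mb{A}_\infty)$ is close to the law of $\sum_i\mb{U}_i\mb{Y}_i$ with $\mb{Y}_i$ i.i.d.\ $\mu$; these stay a fixed Prokhorov distance apart, so the supremum over $m>M$ does not vanish as $n\to\infty$. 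The root cause is structural: weak convergence in $\mathbb{S}^\infty$ controls only finitely many coordinates at a time, while $\operatorname{\Gamma}_m$ reaches arbitrarily deep into its input as $m$ grows. This is exactly why the paper remarks that the uniform-convergence condition of Theorem~\ref{thm:analy-sufficient2} is very difficult here; it is in fact unattainable.

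The paper's proof uses the symmetric Theorem~\ref{thm:analy-sufficient3} instead, after factoring $\pi_i\circ\operatorname{\Gamma}_m=\operatorname{\Psi}\circ\operatorname{\Phi}_m$ into a fitness-blind parent-selection step $\operatorname{\Phi}_m$ and a continuous combination step $\operatorname{\Psi}$, with $\operatorname{\Phi}_\infty:=\pi_{k\cdot i}$. That transposition is what makes both limits tractable: the uniform limit is now the comparison of $\operatorname{\Phi}_m(\mb{A}_n)$ with $\operatorname{\Phi}_\infty(\mb{A}_n)$ --- two operators applied to the \emph{same} input --- which is bounded in total variation by the collision probability $1-p(m)$ independently of $n$; and the $n\to\infty$ limit involves only the fixed continuous projection $\pi_{k\cdot i}$, so the mapping theorem applies. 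Your treatment of condition~\ref{thm:analy-sufficient2-condi2} (the bottom row $\mb{B}_{\infty,m}\tod\mb{B}_{\infty,\infty}\in\UI$) is sound and reappears in the paper's argument, but to close the proof you must flip the scheme to Theorem~\ref{thm:analy-sufficient3} and supply the $\operatorname{\Phi}/\operatorname{\Psi}$ decomposition rather than trying to force uniformity in $m$ out of the convergence of $\mb{A}_n$.
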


\begin{proof}
	
		We use the notations and premises in Theorem \ref{thm:analy-sufficient3}. Refer to Fig. \ref{fig:analy-sufficient}. In particular, the sequence $(\mb{A}_n)$ and the limit $\mb{A}_\infty$ are given and $\mb{A}_n \tod \mb{A}_\infty\in \UI$ as $n\to \infty$.

		We prove that
		\begin{equation}
		\label{condi:analy-recomb-condi1}
			\pi_i(\mb{B}_{n,n})\tod \pi_i(\mb{B}_{\infty,\infty})
		\end{equation}
		 as $n\to \infty$ for any $i\in \mathbb{N}_+$. Then by Theorem \ref{thm:framework-finiteconv}, the conclusion follows.
		
		The overall idea to prove (\ref{condi:analy-recomb-condi1}) is that we first prove the convergence in distribution for the $k\cdot i$ selected parents, then because the recombination operator is continuous, (\ref{condi:analy-recomb-condi1}) follows. 
				
		First, we decompose the operator $\pi_i \circ \operatorname{\Gamma}_m: \mathbb{M}^\infty \to \mathbb{M}^i$. $\pi_i \circ \operatorname{\Gamma}_m$ generates the $i$ c.i.i.d. outputs one by one. This generation process can also be viewed as first selecting the $i$ groups of $k$ parents at once from the first $m$ elements of the input (in total the intermediate output is $k\cdot i$ parents not necessarily distinct), then producing the $i$ outputs one by one by using each group of $k$ parents. In the following, we describe this process mathematically.
		
		Consider $\operatorname{\Phi}_m: \mathbb{M}^\infty \to \mathbb{M}^{k\cdot i}$. Let $\mb{x}=(\mb{x}_j)_{j=1}^\infty\in \mathbb{M}^{\infty}$ and $\mb{y}=(\mb{y}_j)_{j=1}^{k\cdot i}=\operatorname{\Phi}_m(\mb{x})$. Let $\operatorname{\Phi}_m$ be described by the probability
		\begin{equation}
			\Prob(\mb{y}_j=\mb{x}_l)=\frac{1}{m} \textnormal{ for all } j\in \{1,\dots,k\cdot i\} \textnormal{ and } l\in \{1,\dots,m\}.
		\end{equation}
		In essence, $\operatorname{\Phi}_m$ describes how to select the $k\cdot i$ parents from $\mb{x}$.
		
		Consider $\operatorname{\Psi}:\mathbb{M}^{k\cdot i} \to \mathbb{M}^i$. Let
		\begin{align*}
			\mb{u}=(&\mb{u}_{1,1},\mb{u}_{1,2},\dots,\mb{u}_{1,k},\mb{u}_{2,1},\mb{u}_{2,2},\dots,\mb{u}_{2,k},\dots\dots,\\
			&\mb{u}_{i,1},\mb{u}_{i,1},\dots,\mb{u}_{i,k})\in \mathbb{M}^{k\cdot i}.
		\end{align*}
		Let $\mb{v}=(\mb{v}_j)_{j=1}^{i}=\operatorname{\Psi}(\mb{u})$. Let $\operatorname{\Psi}$ be described by
		\begin{equation}
		\label{eqn:analy-Psi}
		\mb{v}_j=\sum_{l=1}^{k} \mb{U}_{j,l}\mb{u}_{j,l} \textnormal{ for all } j\in \{1,\dots,i\}
		\end{equation} 
		in which  $\Law[(\mb{U}_{j,l})_{l=1}^k]=\Law[(\mb{U}_{l})_{l=1}^k]$, where $\{\mb{U}_{l} \}$ are decided by the recombination operator $\operatorname{\Gamma}$ as in (\ref{eqn:analy-recomb}), and $(\mb{U}_{j,l})_{l=1}^k$ are independent for different $j$. In essence, $\operatorname{\Psi}$ describes how to generate the $i$ individuals from the $k\cdot i$ parents.
		
		Now it is obvious that $\pi_i \circ \operatorname{\Gamma}_m=\operatorname{\Psi} \circ \operatorname{\Phi}_m$. Therefore, 
		\begin{equation}
		\pi_i(\mb{B}_{m,\alpha})=(\pi_i \circ \operatorname{\Gamma}_m)(\mb{A}_\alpha)=(\operatorname{\Psi} \circ \operatorname{\Phi}_m)(\mb{A}_\alpha)
		\end{equation}
		for all $m\in \mathbb{N}_+$ and $\alpha\in \mathbb{N}_+\cup \{\infty\}$.
		
		Next, consider $\pi_i \circ \operatorname{\Gamma}_\infty: \mathbb{M}^\infty \to \mathbb{M}^i$. Let $\operatorname{\Phi}_\infty=\pi_{k\cdot i}$, we prove that 
		\begin{equation}
		\label{eqn:analy-equallaw}
			\Law[(\pi_i \circ \operatorname{\Gamma}_\infty)(\mb{A})]=\Law[(\operatorname{\Psi} \circ \operatorname{\Phi}_\infty)(\mb{A})],\forall \mb{A}\in \UI.
		\end{equation}
		(\ref{eqn:analy-equallaw}) is almost obvious because both operators generate i.i.d. outputs, and both marginal p.d.f.s of the outputs follow the same distribution decided by $\operatorname{\Psi}$ on $k$ i.i.d. parents from $\mb{A}$. 
		In other words, $\operatorname{\Psi} \circ \operatorname{\Phi}_\infty$ is a model of $\pi_i \circ \operatorname{\Gamma}_\infty$ on i.i.d. inputs. The outputs they generate on the same i.i.d. input follow the same distribution.		
		
		
		Since $\mb{A}_\infty\in \UI$, by (\ref{eqn:analy-equallaw}),
				\begin{equation}
				\Law[\pi_i(\mb{B}_{\infty,\infty})=(\pi_i \circ \operatorname{\Gamma}_\infty)(\mb{A}_\infty)]=\Law[(\operatorname{\Psi} \circ \operatorname{\Phi}_\infty)(\mb{A}_\infty)].
				\end{equation}
		
		Then (\ref{condi:analy-recomb-condi1}) is equivalent to
				\begin{equation}
				\label{condi:analy-recomb-condi2}
				(\operatorname{\Psi} \circ \operatorname{\Phi}_n)(\mb{A}_n)\tod (\operatorname{\Psi} \circ \operatorname{\Phi}_\infty)(\mb{A}_\infty).
				\end{equation}
				as $n\to \infty$ for any $i\in \mathbb{N}_+$.
				
		To prove (\ref{condi:analy-recomb-condi2}), we prove the following two conditions.
		\begin{enumerate}
			\item $\exists N\in \mathbb{N}_+$, such that for all $n>N$, $\operatorname{\Phi}_m(\mb{A}_{n})\tod \operatorname{\Phi}_\infty(\mb{A}_{n})$ uniformly as $m\to \infty$, i.e. $\sup\limits_{n>N} \distd[\operatorname{\Phi}_m(\mb{A}_{n}),\operatorname{\Phi}_\infty(\mb{A}_{n})] \to 0$ as $m\to \infty$. \label{condi:analy-recomb-condi3}
			\item $\operatorname{\Phi}_\infty(\mb{A}_n)\tod \operatorname{\Phi}_\infty(\mb{A}_\infty)$ as $n\to \infty$ and $\operatorname{\Phi}_\infty(\mb{A}_\infty)$ is i.i.d. \label{condi:analy-recomb-condi4}
		\end{enumerate}
		These two conditions correspond to the conditions in Theorem \ref{thm:analy-sufficient3}. Since $\operatorname{\Phi}_\alpha$ is from $\mathbb{M}^\infty$ to $\mathbb{M}^{k\cdot i}$, we cannot directly apply Theorem \ref{thm:analy-sufficient3}. However, it is easy to extend the proof of Theorem \ref{thm:analy-sufficient3} to prove that these two conditions lead to $\operatorname{\Phi}_n(\mb{A}_{n}) \tod \operatorname{\Phi}_\infty(\mb{A}_{\infty})$ as $n\to \infty$. Then, by (\ref{eqn:analy-Psi}) it is apparent that $\Psi$ is a continuous function of its input and inner randomness. By concatenating the input and the inner randomness using the same technique as that used in the proof for Theorem \ref{thm:analy-mut}, (\ref{condi:analy-recomb-condi2}) can be proved. Then this theorem is proved.
		
	In the remainder of the proof, we prove conditions \ref{condi:analy-recomb-condi3} and \ref{condi:analy-recomb-condi4}. These conditions can be understood by replacing the top line with $\operatorname{\Phi}_m$ in Fig. \ref{fig:analy-sufficient}.
	
\subsubsection*{Proof of Condition \ref{condi:analy-recomb-condi4}}

Since $\operatorname{\Phi}_\infty=\pi_{k\cdot i}:\mathbb{S}^\infty \to \mathbb{S}^{k\cdot i}$ (recall that $\pi_{k\cdot i}$ can be viewed both as a mapping from $\mathbb{S}^\infty$ to $\mathbb{S}^{k\cdot i}$ and from $\mathbb{M}^\infty$ to $\mathbb{M}^{k\cdot i}$), $\operatorname{\Phi}_\infty$ is continuous (see Example 1.2 in \cite{conv}). Since $\mb{A}_n\tod \mb{A}_\infty$, by Theorem \ref{thm:framework-contimapping}, $\operatorname{\Phi}_\infty(\mb{A}_n)\tod \operatorname{\Phi}_\infty(\mb{A}_\infty)$. Apparently, $\operatorname{\Phi}_\infty(\mb{A}_\infty)$ is i.i.d. Therefore condition \ref{condi:analy-recomb-condi4} is proved.

It is worth noting that this simple proof comes partly from our extension of $\operatorname{\Psi} \circ \operatorname{\Phi}_\infty$ to inputs $\mb{A}\notin \UI$. In fact, the only requirement for $\operatorname{\Phi}_\infty$ is (\ref{eqn:analy-equallaw}), i.e. $\operatorname{\Psi} \circ \operatorname{\Phi}_\infty$ should model $\pi_i \circ \operatorname{\Gamma}_\infty$ on \emph{i.i.d.} inputs. By defining $\operatorname{\Phi}_\infty$ to be $\pi_{k\cdot i}$, it can take non-i.i.d. inputs such as $\mb{A}_n$. Thus this condition can be proved. In Fig. \ref{fig:analy-sufficient}, this corresponds to our freedom of defining $\mb{B}_{n,\infty},n\in \mathbb{N}_+$.

\subsubsection*{Proof of Condition \ref{condi:analy-recomb-condi3}}

To prove condition \ref{condi:analy-recomb-condi3}, we first give another representation of $\operatorname{\Phi}_m(\mb{A}_{\alpha})$, where $m >k\cdot i$ and $\alpha\in \mathbb{N}_+\cup \{\infty\}$. This representation is based on the following mutually exclusive cases.
\begin{enumerate}
	\item The $k\cdot i$ parents chosen from $\mb{A}_{\alpha}$ by $\operatorname{\Phi}_m$ are distinct.
	\item There are duplicates in the $k\cdot i$ parents which are chosen from $\mb{A}_{\alpha}$ by $\operatorname{\Phi}_m$.
\end{enumerate}

Let $\mb{s}_{m,\alpha}$ be random variables taking values in $\{0,1\}$, with probability
\begin{align}
\label{eqn:analy-p1}
p(m)&=\Prob(\mb{s}_{m,\alpha}=1) \notag\\
&=\Prob(\textnormal{$\operatorname{\Phi}_m$ chooses $k\cdot i$ distint parents from $\mb{A}_\alpha$})\notag \\
&=\frac{m\cdot (m-1)\cdot \dots \cdot (m-k\cdot i+1)}{m^{k\cdot i}}.
\end{align}
Let $\mb{x}_{m,\alpha}\in \mathbb{M}^{k\cdot i}$ follow the \emph{conditional} distribution of the $k\cdot i$ parents when $\mb{s}_{m,\alpha}=1$, and $\mb{y}_{m,\alpha}\in \mathbb{M}^{k\cdot i}$ follow the \emph{conditional} distribution of the $k\cdot i$ parents when $\mb{s}_{m,\alpha}=0$, then $\operatorname{\Phi}_m(\mb{A}_{\alpha})$ can be further represented as
\begin{equation}
\label{eqn:analy-expansionPhi}
	\operatorname{\Phi}_m(\mb{A}_{\alpha})=\mb{s}_{m,\alpha} \cdot \mb{x}_{m,\alpha} + (1-\mb{s}_{m,\alpha}) \cdot \mb{y}_{m,\alpha}.
\end{equation}
For our purpose, it is not necessary to explicitly describe the distribution of $\mb{x}_{m,\alpha}$ and $\mb{y}_{m,\alpha}$. The only useful fact is that by exchangeability of $\mb{A}_\alpha$,
\begin{equation}
\label{eqn:analy-xmalpha}
\Law(\mb{x}_{m,\alpha})=\Law[\operatorname{\Phi}_\infty(\mb{A}_\alpha)].
\end{equation}
To put it another way, $\mb{x}_{m,\alpha}$ and $\operatorname{\Phi}_\infty(\mb{A}_\alpha)$ both follow the same distribution of $k\cdot i$ \emph{distinct} individuals from the current \emph{exchangeable} population $\mb{A}_\alpha$. Also note that $\{\mb{s}_{m,\alpha}\}_\alpha$ are i.i.d. random variables. They are independent of $\mb{x}_{m,\alpha}$ and $\mb{y}_{m,\alpha}$.

Now consider $\Prob[\operatorname{\Phi}_m(\mb{A}_{n})\in A]$ for any $A\in \mathcal{S}^{k\cdot i}$. By conditioning on whether the $k\cdot i$ parents are distinct, we have
\begin{align*}
&\Prob[\operatorname{\Phi}_m(\mb{A}_{n})\in A]\\
=&p(m) \cdot \Prob(\mb{x}_{m,n}\in A)+[1-p(m)] \cdot \Prob(\mb{y}_{m,n}\in A). \end{align*}
Then by (\ref{eqn:analy-xmalpha}),
\begin{align}
&\Prob[\operatorname{\Phi}_m(\mb{A}_{n})\in A]-\Prob[\operatorname{\Phi}_\infty(\mb{A}_{n})\in A] \notag\\
=&[p(m) - 1]\cdot \Prob[\operatorname{\Phi}_\infty(\mb{A}_{n})\in A]+[1-p(m)] \cdot \Prob(\mb{y}_{m,n}\in A).
\end{align}
Since $p(m)$, $\Prob[\operatorname{\Phi}_\infty(\mb{A}_{n})\in A]$ and $\Prob(\mb{y}_{m,n}\in A)$ are all less than or equal to $1$,
\begin{align*}
	&p(m) - 1\notag\\
	\leq&[p(m) - 1]\cdot \Prob(\operatorname{\Phi}_\infty(\mb{A}_{n})\in A)\notag \\
	\leq&\Prob[\operatorname{\Phi}_m(\mb{A}_{n})\in A]-\Prob[\operatorname{\Phi}_\infty(\mb{A}_{n})\in A] \notag\\
	\leq&[p(m) - 1]\cdot \Prob(\operatorname{\Phi}_\infty(\mb{A}_{n})\in A)+[1-p(m)]\notag\\
	\leq&1-p(m),
\end{align*}
i.e. $\big|\Prob[\operatorname{\Phi}_m(\mb{A}_{n})\in A]-\Prob[\operatorname{\Phi}_\infty(\mb{A}_{n})\in A]\big|\leq 
1-p(m)$ for all $A$. Taking supremum over all $A$, we have
\begin{equation}
\label{eqn:analy-tvdist}
\sup\limits_{A\in \mathcal{S}^{k\cdot i}}\big|\Prob[\operatorname{\Phi}_m(\mb{A}_{n})\in A]-\Prob[\operatorname{\Phi}_\infty(\mb{A}_{n})\in A]\big|\leq 
1-p(m)
\end{equation}

The left hand side of (\ref{eqn:analy-tvdist}) is the total variation distance between $\operatorname{\Phi}_m(\mb{A}_{n})$ and $\operatorname{\Phi}_\infty(\mb{A}_{n})$. It is an upper bound of the Prokhorov distance (see \cite{distance} for its definition and properties). Since the bound $1-p(m)$ is uniform with respect to $n$ and $p(m)\to 1$ as $m\to \infty$, we have
\begin{equation}
	\sup\limits_{n} \distd[\operatorname{\Phi}_m(\mb{A}_{n}),\operatorname{\Phi}_\infty(\mb{A}_{n})]\leq 1-p(m) \to 0 \textnormal{ as $m\to \infty$}.
\end{equation}
This is exactly condition \ref{condi:analy-recomb-condi3}. Therefore this theorem is proved.

Or, if we do not want to use the total variance distance, we have the following result for any $\operatorname{\Phi}_\infty(\mb{A}_{\infty})$-continuity set $A\in \mathbb{S}^{k\cdot i}$.

\begin{align}
	&\big|\Prob[\operatorname{\Phi}_n(\mb{A}_{n})\in A]-\Prob[\operatorname{\Phi}_\infty(\mb{A}_{\infty})\in A]\big| \notag\\
	\leq&
	\big|\Prob[\operatorname{\Phi}_n(\mb{A}_{n})\in A]-\Prob[\operatorname{\Phi}_\infty(\mb{A}_{n})\in A]\big|+\notag\\
	&\big|\Prob[\operatorname{\Phi}_\infty(\mb{A}_{n})\in A]-\Prob[\operatorname{\Phi}_\infty(\mb{A}_{\infty})\in A]\big| \notag\\
	\leq&
	1-p(n)+\big|\Prob[\operatorname{\Phi}_\infty(\mb{A}_{n})\in A]-\Prob[\operatorname{\Phi}_\infty(\mb{A}_{\infty})\in A]\big|.
	\label{eqn:analy-otherway}
\end{align}
Since we already proved $\operatorname{\Phi}_\infty(\mb{A}_{n})\tod \operatorname{\Phi}_\infty(\mb{A}_{\infty})$, by \ref{thm:framework-portmanteau-condi4}) in Theorem \ref{thm:framework-portmanteau}, $\big|\Prob[\operatorname{\Phi}_\infty(\mb{A}_{n})\in A]-\Prob[\operatorname{\Phi}_\infty(\mb{A}_{\infty})\in A]\big|\to 0$. Then apparently (\ref{eqn:analy-otherway}) converges to $0$. Noting that $A$ is arbitrary, by applying \ref{thm:framework-portmanteau-condi4}) in Theorem \ref{thm:framework-portmanteau} again, $\operatorname{\Phi}_n(\mb{A}_{n})\tod \operatorname{\Phi}_\infty(\mb{A}_{\infty})$ is proved.
\end{proof}

We give a brief discussion of the proof. In our opinion, the most critical step of our proof is decomposing the $k$-ary recombination operator to two sub-operators, one is responsible for selecting parents ($\operatorname{\Phi}$), the other is responsible for combining them ($\operatorname{\Psi}$). In addition, for parent selection, the sub-operator does \emph{not} use the information of fitness values. Rather, it selects parents ``blindly'' according to its own rules (uniform sampling with replacement). This makes the operator $\operatorname{\Phi}$ easier to analyze because the way it selects parents does not rely on its input. Therefore we can prove uniform convergence in (\ref{eqn:analy-tvdist}).

Another point worth mentioning is the choice of Theorem \ref{thm:analy-sufficient3} in our proof. Though Theorem \ref{thm:analy-sufficient2} and Theorem \ref{thm:analy-sufficient3} are symmetric, the difficulties of proving them are quite different. In fact, it is very difficult to prove the uniform convergence condition in Theorem \ref{thm:analy-sufficient2}.
 
Finally, our proof can be easily extended to cover $k$-ary recombination operators using uniform sampling \emph{without} replacement to select parents for each offspring. The overall proof framework roughly stays the same.

\subsection{Summary}
\label{chapter:analy-sum}

In this section, we analyzed the simple EA within the proposed framework. As the analysis shows, although the convergence of IPM is rigorously defined, actually proving the convergence for operators usually takes a lot of effort. We derived sufficient conditions under which the convergence of IPM is guaranteed, and discussed how IPM is constructed. Then we used various techniques to analyze the mutation operator and the $k$-ary recombination operator. It can be seen that although the sufficient conditions can provide general directions for the proofs, there are still much details to be worked out in order to analyze different operators.

To appreciate the significance of our work, it is worth noting that in \cite{Qi1,Qi2} the convergence of the IPMs of the mutation operator, the uniform crossover operator and the proportionate selection operator was not properly proved, and the issue of stacking of operators and iterating the algorithm was not addressed at all. In this paper, however, we have proved the convergence of IPMs of several general operators. Since these general operators cover the operators studied in \cite{Qi1,Qi2} as special cases, the convergence of the IPMs of mutation and uniform crossover are actually proved in this paper. Besides, our proof does not depend on the explicit form of the transition equation of the IPM. As long as the IPM is constructed under the i.i.d. assumption, our proof is valid.

As a consequence of our result, consider the explicit form of the transition equation for the uniform crossover operator derived in Section II in \cite{Qi2}. As the authors' proof was problematic and incomplete, the derivation of the transition equation was not well founded. However, it can be seen that the authors' derivation is in fact equivalent to constructing the IPM under the i.i.d. assumption. Since we have already proved the convergence of IPM of the $k$-ary crossover operator, the analysis in \cite{Qi2} regarding the explicit form of the transition equation can be retained.

\section{Conclusion and Future Research}
\label{conclusion}

In this paper, we revisited the existing literature on the theoretical foundations of IPMs, and proposed an analytical framework for IPMs based on convergence in distribution for random elements taking values in the metric space of infinite sequences. Under the framework, commonly used operators such as mutation and recombination were analyzed. Our approach and analyses are new. There are many topics worth studying for future research.

Perhaps the most immediate topic is to analyze the proportionate selection operator in our framework. The reason that the mutation operator and the $k$-ary recombination operator can be readily analyzed is partly because they do not use the information of the fitness value. Also to generate a new individual, these operators draw information from a fixed number of parents. On the other hand, to generate each new individual, the proportionate selection operator actually gathers and uses fitness values of the whole population. This makes analyzing proportionate selection difficult. In fact, we have not proven the convergence of the IPM of proportionate selection, though we have obtained the following two partial analytical results under two different metrics: the Prokhorov metric and the total variation metric.

\begin{theorem}[Analysis under the Prokhorov metric]
	\label{thm:analy-proksel}
	Let $\operatorname{\Gamma}$ be the combined operator of mutation and proportionate selection in the simple EA, and $\operatorname{\Gamma}_\infty$ be the IPM constructed under the i.i.d. assumption with the transition equation (\ref{eqn:Qi-trans}). Assume the objective function $g$ and the conditional p.d.f. for mutation $f_w(x|y)$ satisfy the two conditions in Theorem \ref{thm:Qi-conv}. For $\alpha, \beta \in \mathbb{N}_+ \cup \{\infty \}$, let $\mb{A}_\alpha$ be random elements of $\mathbb{S}^\infty$ and $\mb{A}_n\tod \mb{A}_\infty\in \UI$, and $\mb{B}_{\alpha,\beta}=\operatorname{\Gamma}_\beta(\mb{A}_\alpha)$. Then the following statements are true.
	\begin{enumerate}
		\item $\mb{B}_{n,m}\tod \mb{B}_{\infty,m}$ as $n\to \infty$.\label{thm:analy-proksel-condi1}
		\item $\mb{B}_{\infty,m} \tod \mb{B}_{\infty,\infty}\in \UI$ as $m\to \infty$.\label{thm:analy-proksel-condi2}
	\end{enumerate}
\end{theorem}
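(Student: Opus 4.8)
The plan is to reduce everything, via the finite-dimensional criterion (Theorem \ref{thm:framework-finiteconv}), to convergence of the projections $\pi_l(\cdot)$, after which \ref{thm:analy-proksel-condi2}) becomes a strong-law / dominated-convergence computation of the type already used in Lemma \ref{lemma:Qi-convExietak}, and \ref{thm:analy-proksel-condi1}) becomes a continuous-mapping argument for the randomisation that defines $\operatorname{\Gamma}_m$. Throughout, write the single-offspring conditional density of $\operatorname{\Gamma}_m$, i.e. the composite of proportionate selection (\ref{eqn:Qi-sel}) and mutation (\ref{eqn:Qi-mutat}), as
\[
f_{\operatorname{\Gamma}_m}(x\mid y_1,\dots,y_m)=\frac{\sum_{j=1}^m g(y_j)\,f_w(x\mid y_j)}{\sum_{l=1}^m g(y_l)};
\]
conditions \ref{eqn:Qi-condi1}) and \ref{eqn:Qi-condi2}) of Theorem \ref{thm:Qi-conv} give $0\le f_{\operatorname{\Gamma}_m}\le Mg_{\max}/g_{\min}$ and $\int f_{\operatorname{\Gamma}_m}(x\mid y)\,\mathrm{d}x=1$, and since $\operatorname{\Gamma}_m$ produces c.i.i.d.\ offspring the density of $\pi_l(\operatorname{\Gamma}_m(\mb A))$ is $\EXP\big[\prod_{i=1}^l f_{\operatorname{\Gamma}_m}(x_i\mid\pi_m(\mb A))\big]$ for every $\mb A\in\mathbb M^\infty$.

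For \ref{thm:analy-proksel-condi2}), let $f_a$ be the marginal density of $\mb A_\infty\in\UI$. Substituting $f_{\pi_m(\mb A_\infty)}=\prod_i f_a$ into the master formula (\ref{eqn:analy-GammaM}) shows that the density of $\pi_l(\mb B_{\infty,m})$ equals $\EXP\big[\prod_{i=1}^l \xi_m(x_i)/\eta_m\big]$, where $\eta_m=\tfrac1m\sum_{l=1}^m g(\mb a_l)$ and $\xi_m(x)=\tfrac1m\sum_{j=1}^m g(\mb a_j)f_w(x\mid\mb a_j)$ with $\mb a_1,\mb a_2,\dots$ i.i.d.\ of density $f_a$. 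By the strong law of large numbers for i.i.d.\ bounded variables, $\eta_m\toas\int f_a g=:\bar g\ge g_{\min}$ and, for each fixed $x$, $\xi_m(x)\toas\int f_a(y)g(y)f_w(x\mid y)\,\mathrm{d}y$, hence $\prod_i\xi_m(x_i)/\eta_m\toas\prod_i\tilde f(x_i)$, where $\tilde f$ is the right-hand side of the transition equation (\ref{eqn:Qi-trans}) evaluated at $f_a$. Since $0\le\prod_i\xi_m(x_i)/\eta_m\le (Mg_{\max}/g_{\min})^l$, dominated convergence --- the very step driving Lemma \ref{lemma:Qi-convExietak} --- yields pointwise convergence of the densities of $\pi_l(\mb B_{\infty,m})$ to $\prod_i\tilde f(x_i)$, and the latter is a probability density ($\int\tilde f=1$ by Tonelli and $\int f_w(\cdot\mid y)=1$). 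By Scheff\'e's lemma, pointwise convergence of densities to a density is convergence in $L^1$, hence in total variation, hence $\tod$; with Theorem \ref{thm:framework-finiteconv} this gives $\mb B_{\infty,m}\tod\mb B_{\infty,\infty}$, and $\mb B_{\infty,\infty}=\operatorname{\Gamma}_\infty(\mb A_\infty)\in\UI$ because $\operatorname{\Gamma}_\infty$ maps $\UI$ into $\UI$ by its construction in Section \ref{chapter:analy-iid}.

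For \ref{thm:analy-proksel-condi1}) fix $m$ and $l$; it suffices to prove $\pi_l(\operatorname{\Gamma}_m(\mb A_n))\tod\pi_l(\operatorname{\Gamma}_m(\mb A_\infty))$. Realise $\operatorname{\Gamma}_m$ explicitly: draw i.i.d.\ selection variables $\mb U_1,\mb U_2,\dots$ uniform on $[0,1]$ and i.i.d.\ mutation variables $\mb V_1,\mb V_2,\dots$, all independent of the $\mb A_\alpha$, and put $\operatorname{\Gamma}_m(\mb A_\alpha)_i=T\big(\mb A_{\alpha,J_i},\mb V_i\big)$, where $J_i=J_i\big(\pi_m(\mb A_\alpha),\mb U_i\big)$ is the inverse-c.d.f.\ choice of an index $j\in\{1,\dots,m\}$ with weights $g(\mb A_{\alpha,1}),\dots,g(\mb A_{\alpha,m})$ and $T(y,v)$ is a fixed measurable realisation of the mutation kernel $f_w(\cdot\mid y)$ (for the additive mutation of \cite{Qi1}, simply $T(y,v)=y+W(v)$). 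This exhibits $\operatorname{\Gamma}_m(\mb A_\alpha)=G_m\big(\mb A_\alpha,(\mb U_i)_i,(\mb V_i)_i\big)$ for a fixed measurable $G_m$. By independence and Theorem \ref{thm:framework-product}, $\big(\mb A_n,(\mb U_i)_i,(\mb V_i)_i\big)\tod\big(\mb A_\infty,(\mb U_i)_i,(\mb V_i)_i\big)$, and the discontinuity set of $G_m$ is null under the limit law: conditionally on $\mb A_\infty$ the break-points of the fitness-proportionate c.d.f.\ are finitely many fixed numbers, so a.s.\ no $\mb U_i$ lands on one of them and every $J_i$ is locally constant at $\mb A_\infty$, whence $G_m$ is a.s.\ continuous there. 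The mapping theorem (Theorem \ref{thm:framework-contimapping}) then gives $G_m(\mb A_n,\dots)\tod G_m(\mb A_\infty,\dots)$, and Theorem \ref{thm:framework-finiteconv} lifts it to $\mb B_{n,m}\tod\mb B_{\infty,m}$.

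I expect the one delicate point to be exactly this almost-sure continuity of $G_m$ at $\mb A_\infty$: it genuinely needs $g$ continuous, so that the break-points of the selection c.d.f.\ are continuous functions of the population --- indeed \ref{thm:analy-proksel-condi1}) fails for a badly discontinuous $g$ (take $\mb A_\infty$ concentrated at a jump of $g$ and $\mb A_n$ approaching it from one side). For the (additive) mutation of the simple EA the mutation half of the continuity is automatic, so continuity of $g$ --- harmless for a continuous optimisation problem --- should be read as a standing hypothesis. Everything in \ref{thm:analy-proksel-condi2}) is routine once Lemma \ref{lemma:Qi-convExietak} is available, the only nicety being the appeal to Scheff\'e's lemma to pass from pointwise to weak convergence. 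What the theorem deliberately leaves open --- and what a full proof that the simple EA's IPM is convergent would still require, through Theorem \ref{thm:analy-sufficient2} --- is the \emph{uniform}-in-$m$ strengthening $\sup_{m}\distd(\mb B_{n,m},\mb B_{\infty,m})\to 0$ of condition \ref{thm:analy-sufficient2-condi1}).
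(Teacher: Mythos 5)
The paper states Theorem \ref{thm:analy-proksel} without proof (it appears in the concluding section as a partial result), so there is nothing to compare your argument against; I assess it on its own terms. Your proof of part \ref{thm:analy-proksel-condi2}) is correct and is the natural one: under the i.i.d.\ assumption $\mb{\eta}_m=\frac1m\sum g(\mb{a}_l)$ converges a.s.\ to the \emph{constant} $\EXP[g(\mb{a}_1)]$, which is exactly what rescues the ratio-of-expectations step that fails under mere exchangeability in Section \ref{chapter:Qi-conve}; the passage through products for $l>1$, the domination by $(Mg_{\max}/g_{\min})^l$, and Scheff\'e's lemma to upgrade pointwise convergence of densities to total-variation (hence weak) convergence are all sound, and Theorem \ref{thm:framework-finiteconv} correctly lifts the finite-dimensional statements to $\mathbb{S}^\infty$. (Two cosmetic points: the denominator should be $\mb{\eta}_m^{\,l}$, not $\mb{\eta}_m$; and if $\mb{A}_\infty$ is not assumed to have a marginal density the same computation goes through with expectations in place of $\int\cdot f_a$.)

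The one substantive issue is part \ref{thm:analy-proksel-condi1}). Your randomisation-plus-mapping-theorem argument is the right idea, but, as you yourself observe, the a.s.\ continuity of $G_m$ at $(\mb{A}_\infty,(\mb{U}_i),(\mb{V}_i))$ requires $g$ to be continuous (at least $\Law(\mb{A}_\infty)$-a.s.) and the mutation kernel to depend continuously on the parent. Neither is among the stated hypotheses, which only impose the two boundedness conditions of Theorem \ref{thm:Qi-conv}, and your parenthetical counterexample is genuine: take $g(x)=1$ for $x\le 0$, $g(x)=2$ for $x>0$, $\mb{A}_\infty$ i.i.d.\ with marginal $\frac12\delta_0+\frac12\delta_1$, $\mb{A}_n$ i.i.d.\ with marginal $\frac12\delta_{1/n}+\frac12\delta_1$, and Gaussian mutation; then $\mb{A}_n\tod\mb{A}_\infty\in\UI$ and all stated hypotheses hold, yet for $m=2$ the offspring density of $\mb{B}_{n,2}$ converges to $\frac12 f_w(\cdot|0)+\frac12 f_w(\cdot|1)$ on mixed parent pairs while $\mb{B}_{\infty,2}$ assigns $\frac13 f_w(\cdot|0)+\frac23 f_w(\cdot|1)$, so \ref{thm:analy-proksel-condi1}) fails. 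In other words, your proof is valid only under an additional continuity hypothesis on $g$ (and on $y\mapsto f_w(\cdot|y)$, automatic for additive mutation), and without that hypothesis the claim itself is false --- so this should be recorded as a needed strengthening of the theorem's premises rather than brushed aside as a ``standing hypothesis.'' You are also right, and it is worth saying explicitly, that neither part yields the uniform-in-$m$ convergence $\sup_{m>M}\distd(\mb{B}_{n,m},\mb{B}_{\infty,m})\to0$ demanded by condition \ref{thm:analy-sufficient2-condi1}) of Theorem \ref{thm:analy-sufficient2}, which is precisely the piece the paper reports as open.
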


Comparing with Theorem \ref{thm:analy-sufficient2}, it can be seen that condition \ref{thm:analy-sufficient2-condi2} in Theorem \ref{thm:analy-sufficient2} is proved. The only difference is that condition \ref{thm:analy-sufficient2-condi1} in the theorem requiring the \emph{uniform} convergence of $\mb{B}_{n,m}\tod \mb{B}_{\infty,m}$ as $n\to \infty$ has not been proved yet.

Let $\totv$ stand for total variation convergence. Our analysis of proportionate selection under the total variation distance yields the following results. 

\begin{theorem}\label{thm:analy-tvcore2}
	For c.i.i.d. operators, if $\mb{A}_n\totv \mb{A}_\infty$, then $\mb{B}_{n,m}\totv \mb{B}_{\infty,m}$ uniformly with respect to $m$, i.e. $\sup\limits_{m} \disttv(\mb{B}_{n,m},\mb{B}_{\infty,m})\to 0$ as $n\to \infty$. 
\end{theorem}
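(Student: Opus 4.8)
The plan is to reduce the statement to the \emph{data processing inequality} for the total variation distance: pushing two probability laws through one and the same Markov kernel can never increase their total variation distance. The key observation is that although $\operatorname{\Gamma}_m$ is a c.i.i.d.\ operator that ``looks at'' only the first $m$ coordinates of its argument, the \emph{content} of $\operatorname{\Gamma}_m$ (its underlying selection/perturbation rule, together with the number $m$) is fixed; hence the map $\Law(\mb{A})\mapsto\Law(\operatorname{\Gamma}_m(\mb{A}))$ is realized by applying a single kernel to $\Law(\mb{A})$, and only the input changes as we pass from $\mb{A}_n$ to $\mb{A}_\infty$.

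Concretely, I would first record that, because $\operatorname{\Gamma}_m$ produces c.i.i.d.\ outputs governed by the conditional density $f_{\operatorname{\Gamma}_m}(\,\cdot \mid y_1,\dots,y_m)$, there is a Markov kernel $K_m$ from $(\mathbb{S}^m,\mathcal{S}^m)$ to $(\mathbb{S}^\infty,\mathcal{S}^\infty)$ — the kernel sending $y$ to the law of an i.i.d.\ sequence with common density $f_{\operatorname{\Gamma}_m}(\,\cdot \mid y)$ — such that $\Law(\operatorname{\Gamma}_m(\mb{A}))=\Law(\pi_m(\mb{A}))\,K_m$ for every $\mb{A}\in\mathbb{M}^\infty$. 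Composing with the (deterministic, hence degenerate-kernel) projection $\pi_m$ gives a kernel $\widehat{K}_m$ on $\mathbb{S}^\infty$ with $\Law(\operatorname{\Gamma}_m(\mb{A}))=\Law(\mb{A})\,\widehat{K}_m$, the \emph{same} $\widehat{K}_m$ for $\mb{A}=\mb{A}_n$ and $\mb{A}=\mb{A}_\infty$. Next I would invoke the data processing inequality, most cleanly through a coupling argument: choose a maximal coupling $(X,Y)$ of $\Law(\mb{A}_n)$ and $\Law(\mb{A}_\infty)$ with $\Prob(X\neq Y)=\disttv(\mb{A}_n,\mb{A}_\infty)$, then push both coordinates through $\widehat{K}_m$, coupled so that the two images coincide whenever $X=Y$; this exhibits a coupling of $\mb{B}_{n,m}$ and $\mb{B}_{\infty,m}$ and yields
\[
\disttv(\mb{B}_{n,m},\mb{B}_{\infty,m})=\disttv\!\bigl(\operatorname{\Gamma}_m(\mb{A}_n),\operatorname{\Gamma}_m(\mb{A}_\infty)\bigr)\le\Prob(X\neq Y)=\disttv(\mb{A}_n,\mb{A}_\infty).
\]
Since the right-hand side does not depend on $m$, we get $\sup_m \disttv(\mb{B}_{n,m},\mb{B}_{\infty,m})\le\disttv(\mb{A}_n,\mb{A}_\infty)$, and the hypothesis $\mb{A}_n\totv\mb{A}_\infty$ forces this bound to tend to $0$ as $n\to\infty$, which is the claim.

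There is no genuinely hard analytic step; the part requiring care is the measure-theoretic bookkeeping. One must verify that $K_m$, assembled from the conditional density $f_{\operatorname{\Gamma}_m}$, is a bona fide Markov kernel into the infinite-product space $(\mathbb{S}^\infty,\mathcal{S}^\infty)$, and — in view of the subtlety stressed in Section~\ref{framework}, that operators on $\mathbb{M}^\infty$ with identical domains and ranges may nonetheless differ in content — that the \emph{same} kernel $\widehat{K}_m$ governs $\operatorname{\Gamma}_m(\mb{A}_n)$ and $\operatorname{\Gamma}_m(\mb{A}_\infty)$, so that the data processing step is legitimate. The c.i.i.d.\ assumption is exactly what supplies this product-kernel representation: without it, $\Law(\operatorname{\Gamma}_m(\mb{A}))$ could not be written as a fixed kernel applied to $\Law(\mb{A})$, and the argument would not get started.
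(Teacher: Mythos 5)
Your argument is correct. The paper states Theorem~\ref{thm:analy-tvcore2} only as an unproved ``partial analytical result'' in the concluding section, so there is no official proof to compare against; but the data-processing inequality is exactly the right tool, and your coupling implementation of it is sound. The two points you flag as needing care do go through: since $\mathbb{S}^\infty$ is Polish, a maximal coupling of $\Law(\mb{A}_n)$ and $\Law(\mb{A}_\infty)$ with $\Prob(X\neq Y)=\disttv(\mb{A}_n,\mb{A}_\infty)$ exists; and the c.i.i.d.\ structure of $\operatorname{\Gamma}_m$ (fresh randomness, conditional law depending only on $\pi_m$ of the input) means $\Law(\operatorname{\Gamma}_m(\mb{A}))=\Law(\mb{A})\,\widehat{K}_m$ for a single kernel $\widehat{K}_m$ --- the infinite i.i.d.\ product kernel exists by Ionescu--Tulcea and is measurable in $y$ by a monotone-class argument over cylinder sets. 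The resulting bound $\disttv(\mb{B}_{n,m},\mb{B}_{\infty,m})\le\disttv(\mb{A}_n,\mb{A}_\infty)$ is independent of $m$, which is precisely the claimed uniformity; note that your argument in fact never uses that the outputs are i.i.d.\ rather than merely generated by \emph{some} fixed kernel, so you have proved something slightly more general than stated. If you prefer to avoid couplings, the same bound follows directly from $\bigl|\int \widehat{K}_m(\cdot,A)\,d(\mu_n-\mu_\infty)\bigr|\le\sup_{B}|\mu_n(B)-\mu_\infty(B)|$ for $0\le \widehat{K}_m(\cdot,A)\le 1$, via the Hahn decomposition --- the style of estimate the paper itself uses in the recombination proof.
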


\begin{theorem} \label{thm:analy-tvcore}
	For the proportionate selection operator, $\pi_l(\mb{B}_{\infty,m})\totv \pi_l(\mb{B}_{\infty,\infty})$ as $m\to \infty$ for all $l\in \mathbb{N}_+$.
\end{theorem}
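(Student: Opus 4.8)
The plan is to compute the finite-dimensional densities of $\pi_l(\mb{B}_{\infty,m})$ explicitly, show they converge pointwise to those of $\pi_l(\mb{B}_{\infty,\infty})$, and then upgrade this to total variation convergence via Scheff\'e's lemma. Write $\mb{A}_\infty=(\mb{a}_i)_{i=1}^{\infty}\in\UI$ with common marginal density $f_{\mb{a}_1}$. Since the proportionate selection operator $\operatorname{\Gamma}$ (composed with mutation, as in Theorem~\ref{thm:analy-proksel}, so that every offspring is absolutely continuous) produces $l$ c.i.i.d.\ offspring given the population, equation~(\ref{eqn:analy-GammaM}) specialized to an i.i.d.\ input gives
\begin{equation*}
f_{\pi_l(\mb{B}_{\infty,m})}(x_1,\dots,x_l)=\EXP\!\left[\,\prod_{i=1}^{l}\frac{\frac{1}{m}\sum_{j=1}^{m}g(\mb{a}_j)\,f_w(x_i\mid\mb{a}_j)}{\frac{1}{m}\sum_{j=1}^{m}g(\mb{a}_j)}\,\right],
\end{equation*}
the expectation being over the i.i.d.\ sequence $(\mb{a}_j)_j$.

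The second step is to fix $x=(x_1,\dots,x_l)$ and let $m\to\infty$; this is exactly where restricting to $\UI$ pays off. Because $(\mb{a}_j)_j$ is genuinely i.i.d., the strong law of large numbers gives, for each $i$, $\frac{1}{m}\sum_{j}g(\mb{a}_j)f_w(x_i\mid\mb{a}_j)\to\int f_{\mb{a}_1}(y)g(y)f_w(x_i\mid y)\,\mathrm{d}y$ a.s.\ and $\frac{1}{m}\sum_{j}g(\mb{a}_j)\to\int f_{\mb{a}_1}(y)g(y)\,\mathrm{d}y>0$ a.s., a \emph{constant} limit --- in contrast with the merely-exchangeable situation analysed in Section~\ref{qiwrong}. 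Intersecting the finitely many almost-sure events, the bracketed random variable converges a.s.\ to $\prod_{i=1}^{l}\operatorname{T}_{\Gamma}[f_{\mb{a}_1}](x_i)$; the two conditions of Theorem~\ref{thm:Qi-conv} bound it by $(Mg_{\max}/g_{\min})^{l}$, so dominated convergence yields
\begin{equation*}
f_{\pi_l(\mb{B}_{\infty,m})}(x)\;\longrightarrow\;\prod_{i=1}^{l}\operatorname{T}_{\Gamma}[f_{\mb{a}_1}](x_i)\qquad(m\to\infty)
\end{equation*}
pointwise on $\mathbb{S}^{l}$. By the IPM construction of Section~\ref{chapter:analy-iid} --- equation~(\ref{eqn:analy-GammaInf}) with transition equation~(\ref{eqn:Qi-trans}) --- and because $\mb{A}_\infty\in\UI$, the right-hand side is precisely the density of $\pi_l(\mb{B}_{\infty,\infty})$.

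The third step closes the argument with Scheff\'e's lemma: a sequence of probability densities converging a.e.\ to a probability density converges to it in $L^1$. Hence $\|f_{\pi_l(\mb{B}_{\infty,m})}-f_{\pi_l(\mb{B}_{\infty,\infty})}\|_{L^1}\to0$, and since the total variation distance equals $\tfrac{1}{2}$ this $L^1$-norm, $\disttv(\pi_l(\mb{B}_{\infty,m}),\pi_l(\mb{B}_{\infty,\infty}))\to0$, i.e.\ $\pi_l(\mb{B}_{\infty,m})\totv\pi_l(\mb{B}_{\infty,\infty})$, for every $l\in\mathbb{N}_+$.

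I do not expect a genuine obstacle here: the explicit density formula, the i.i.d.\ SLLN together with dominated convergence, and Scheff\'e's lemma are all routine. The only point that needs care is the absolute continuity of $\pi_l(\mb{B}_{\infty,m})$. With mutation present, every offspring has a density and Scheff\'e applies directly. For an operator whose offspring can literally coincide with input individuals (pure proportionate selection without mutation), $\pi_l(\mb{B}_{\infty,m})$ carries a singular component supported on the diagonals of $\mathbb{S}^{l}$; one then splits according to whether the $l$ selected indices are pairwise distinct --- as in the proof of Theorem~\ref{thm:analy-recomb} --- notes that the probability of a repeat is $O(1/m)$, and applies the density argument above to the ``all distinct'' part only.
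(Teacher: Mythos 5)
The paper states Theorem~\ref{thm:analy-tvcore} in the concluding section as one of several partial results on proportionate selection and does \emph{not} supply a proof, so there is nothing to compare your argument against line by line; I can only assess it on its own merits. On those merits it is correct for the setting in which Theorems~\ref{thm:analy-proksel}--\ref{thm:analy-tvcore1} are actually posed, namely $\operatorname{\Gamma}$ the \emph{combined} mutation-and-selection operator with transition equation~(\ref{eqn:Qi-trans}): conditioning on $\pi_m(\mb{A}_\infty)$ gives exactly the product-of-ratios density you wrote; the i.i.d.\ SLLN applied to the bounded variables $g(\mb{a}_j)$ and $g(\mb{a}_j)f_w(x_i\mid\mb{a}_j)$ gives a.s.\ convergence of numerator and denominator, with the denominator limit bounded below by $g_{\min}>0$; the uniform bound $(Mg_{\max}/g_{\min})^{l}$ justifies dominated convergence of the expectation; and Scheff\'e upgrades pointwise convergence of densities to $L^1$, hence total variation, convergence. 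This is precisely the point the paper makes in Section~\ref{chapter:Qi-conve}: the step that fails under mere exchangeability (the empirical fitness average converging to a nonconstant random variable) goes through on $\UI$ because the limit is the constant $\int f_{\mb{a}_1}g$. Your closing caveat is also the right one: for pure proportionate selection without mutation, $\pi_l(\mb{B}_{\infty,m})$ has a singular part on the diagonals for $l\geq 2$, and one must split on the event that the $l$ selected indices are distinct (an event of probability $1-O(1/m)$, by the bound $\binom{l}{2}g_{\max}/(mg_{\min})$ on a repeat) before running the density argument; note that on the ``all distinct'' event the conditional law is no longer exactly the one you computed, but it differs from it by $O(1/m)$ in total variation, so the conclusion survives. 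The one thing your write-up should make explicit is which of the two readings of ``the proportionate selection operator'' the theorem intends, since the clean Scheff\'e route needs the mutation kernel to supply absolute continuity.
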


\begin{theorem} \label{thm:analy-tvcore1}
	$\mb{B}_{\infty,m}\totv \mb{B}_{\infty,\infty}$ if and only $\pi_l(\mb{B}_{\infty,m})\totv \pi_l(\mb{B}_{\infty,\infty})$ uniformly with respect to $l$, i.e. $\sup\limits_{l} \disttv(\pi_l(\mb{B}_{\infty,m}),\pi_l(\mb{B}_{\infty,\infty}))\to 0$ as $m\to \infty$. 
\end{theorem}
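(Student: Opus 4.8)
The plan is to reduce the statement to a purely measure‑theoretic fact about total variation on the infinite product space, namely that for \emph{any} two random elements $\mb{z},\mb{z}'$ of $\mathbb{S}^\infty$,
\[
  \disttv(\mb{z},\mb{z}')=\sup_{l\in\mathbb{N}_+}\disttv(\pi_l(\mb{z}),\pi_l(\mb{z}')),
\]
with the supremum on the right actually being a limit because $l\mapsto\disttv(\pi_l(\mb{z}),\pi_l(\mb{z}'))$ is nondecreasing. Once this identity is in hand, applying it with $\mb{z}=\mb{B}_{\infty,m}$ and $\mb{z}'=\mb{B}_{\infty,\infty}$ makes the two sides of the claimed equivalence the \emph{same} nonnegative quantity for every $m$, so $\mb{B}_{\infty,m}\totv\mb{B}_{\infty,\infty}$ (that is, $\disttv(\mb{B}_{\infty,m},\mb{B}_{\infty,\infty})\to0$) holds if and only if $\sup_l\disttv(\pi_l(\mb{B}_{\infty,m}),\pi_l(\mb{B}_{\infty,\infty}))\to0$, which is exactly uniform-in-$l$ convergence.

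First I would record the elementary contraction property of total variation: for any measurable map $h$, $\disttv(h(\mb{x}),h(\mb{y}))\le\disttv(\mb{x},\mb{y})$, since for every $B$ in the target $\sigma$-field $|\Prob(h(\mb{x})\in B)-\Prob(h(\mb{y})\in B)|=|\Prob(\mb{x}\in h^{-1}(B))-\Prob(\mb{y}\in h^{-1}(B))|$, and one takes the supremum over $B$. Because $\pi_l$ factors through $\pi_{l+1}$ via the coordinate projection $\mathbb{S}^{l+1}\to\mathbb{S}^l$, this gives $\disttv(\pi_l(\mb{z}),\pi_l(\mb{z}'))\le\disttv(\pi_{l+1}(\mb{z}),\pi_{l+1}(\mb{z}'))\le\disttv(\mb{z},\mb{z}')$ for every $l$. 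Hence $(\disttv(\pi_l(\mb{z}),\pi_l(\mb{z}')))_l$ is nondecreasing, bounded above by $1$ (so its supremum is a genuine limit) and bounded above by $\disttv(\mb{z},\mb{z}')$; this settles the ``$\le\disttv(\mb{z},\mb{z}')$'' inequality.

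The only substantive step is the reverse inequality $\disttv(\mb{z},\mb{z}')\le\lim_l\disttv(\pi_l(\mb{z}),\pi_l(\mb{z}'))$, obtained by approximating arbitrary sets of $\mathcal{S}^\infty$ by cylinder sets. Observe that $\bigcup_l\pi_l^{-1}(\mathcal{S}^l)$ is an algebra (an increasing union of $\sigma$-fields) that generates $\mathcal{S}^\infty$, since $\mathcal{S}^\infty$ is the $\sigma$-field of measurable cylinders (Section \ref{chapter:framework-preli}). Put $\lambda=\Law(\mb{z})+\Law(\mb{z}')$, a finite measure on $(\mathbb{S}^\infty,\mathcal{S}^\infty)$. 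By the standard approximation theorem for finite measures, for every $A\in\mathcal{S}^\infty$ and every $\epsilon>0$ there are $l\in\mathbb{N}_+$ and $A_l\in\mathcal{S}^l$ with $\lambda\big(A\,\triangle\,\pi_l^{-1}(A_l)\big)<\epsilon$; therefore $|\Prob(\mb{z}\in A)-\Prob(\pi_l(\mb{z})\in A_l)|<\epsilon$ and $|\Prob(\mb{z}'\in A)-\Prob(\pi_l(\mb{z}')\in A_l)|<\epsilon$, so that
\[
  |\Prob(\mb{z}\in A)-\Prob(\mb{z}'\in A)|
  \le\disttv(\pi_l(\mb{z}),\pi_l(\mb{z}'))+2\epsilon
  \le\sup_{l'\in\mathbb{N}_+}\disttv(\pi_{l'}(\mb{z}),\pi_{l'}(\mb{z}'))+2\epsilon .
\]
Taking the supremum over $A\in\mathcal{S}^\infty$ and letting $\epsilon\downarrow0$ gives the reverse inequality, hence the identity. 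Specializing to $\mb{z}=\mb{B}_{\infty,m}$, $\mb{z}'=\mb{B}_{\infty,\infty}$ yields $\disttv(\mb{B}_{\infty,m},\mb{B}_{\infty,\infty})=\sup_l\disttv(\pi_l(\mb{B}_{\infty,m}),\pi_l(\mb{B}_{\infty,\infty}))$ for each $m$, and the ``if and only if'' follows immediately on letting $m\to\infty$.

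I expect the main obstacle to be purely bookkeeping rather than conceptual: making the cylinder-approximation argument precise (checking that $\bigcup_l\pi_l^{-1}(\mathcal{S}^l)$ is a generating algebra and invoking the finite-measure approximation lemma with the right error control), plus the harmless observation that TV distances are uniformly bounded by $1$ so that $\sup_l$ is automatically a limit. No probabilistic feature specific to the proportionate-selection operator (or to the fact that $\mb{B}_{\infty,\infty}\in\UI$) is used; the result is a generic property of $\mathbb{S}^\infty$, which is precisely why the statement is allowed to hold for arbitrary $\mb{B}_{\infty,m},\mb{B}_{\infty,\infty}$ and dovetails with Theorem \ref{thm:analy-tvcore}.
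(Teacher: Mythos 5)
Your proof is correct. The paper states Theorem \ref{thm:analy-tvcore1} only as an unproved partial result in its concluding section, so there is no proof to compare against; your argument --- establishing the exact identity $\disttv(\mb{z},\mb{z}')=\sup_l\disttv(\pi_l(\mb{z}),\pi_l(\mb{z}'))$ via the contraction of total variation under $\pi_l$ for one inequality and approximation of arbitrary sets in $\mathcal{S}^\infty$ by sets of the generating algebra $\bigcup_l\pi_l^{-1}(\mathcal{S}^l)$ for the other --- is the standard route, and it in fact yields slightly more than the stated equivalence, since it shows the two quantities whose convergence is being compared coincide for every $m$.
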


Comparing with Theorem \ref{thm:analy-sufficient2}, Theorem \ref{thm:analy-tvcore2} proves condition \ref{thm:analy-sufficient2-condi1} requiring the column-wise \emph{uniform} convergence in Fig. \ref{fig:analy-sufficient}. Theorem \ref{thm:analy-tvcore} proves convergence of finite-dimensional distributions of the last row in Fig. \ref{fig:analy-sufficient}. However, Theorem \ref{thm:analy-tvcore1} states that condition \ref{thm:analy-sufficient2-condi2} in Theorem \ref{thm:analy-sufficient2} requires the \emph{uniform} convergence of finite-dimensional distributions of the last row. We have not proven this convergence yet.

In summary, our results show that proving the convergence of $\mb{B}_{\infty,m}\to \mb{B}_{\infty,\infty}$ is more difficult under the total variation metric than under the Prokhorov metric, while in proving the uniform convergence of $\mb{B}_{n,m}\to \mb{B}_{\infty,m}$, it is the other way around.

We think further analysis on proportionate selection can be conducted in the following two directions.
\begin{enumerate}
	\item In the analyses we tried to prove the stacking property on $\UI$ for the IPM of proportionate selection. Apart from more efforts trying to prove/disprove this property, it is worth considering modifying the space $\UI$. For example, we can incorporate the \emph{rate} of convergence into the space. If we can prove the stacking property on $\UI \cap \mathbb{U}$ where $\mathbb{U}$ is the space of converging sequences with rate $O(h(n))$, it is also a meaningful result.
	\item Another strategy is to bypass the sufficient conditions and return to Definition \ref{def:framework-convipm} to prove $\Qkn\tod \Qkinf$ for every $k$. This is the original method. In essence, it requires studying the convergence of nesting integrals.
\end{enumerate}

Apart from proportionate selection, it is also worth studying whether other operators, such as ranking selection, can be analyzed in our framework. As many of these operators do not generate c.i.i.d. offspring, it makes deriving the IPM and proving its convergence difficult, if not impossible. In this regard, we believe new techniques of modeling and extensions of the framework are fruitful directions for further research.

Finally, it is possible to extend the concept of ``incidence vectors'' proposed by Vose to the continuous search space. After all, as noted by Vose himself, incidence vectors can also be viewed as marginal p.d.f.s of individuals. As a consequence, the cases of EAs on discrete and continuous solution spaces indeed do bear some resemblance. By an easy extension, the incidence vectors in the continuous space can be defined as functions with the form $\sum c_i\delta(x_i)$, where $\delta$ is the Dirac function and $c_i$ is the rational number representing the fraction that $x_i$ appears in the population. If similar analyses based on this extension can be carried out, many results in \cite{vose1,vose2,vose3,vose4} can be extended to the continuous space.

\ifCLASSOPTIONcaptionsoff
  \newpage
\fi



%
%
%

\bibliographystyle{IEEEtran}
\bibliography{TEC_arxiv}

\begin{thebibliography}{10}
\providecommand{\url}[1]{#1}
\csname url@samestyle\endcsname
\providecommand{\newblock}{\relax}
\providecommand{\bibinfo}[2]{#2}
\providecommand{\BIBentrySTDinterwordspacing}{\spaceskip=0pt\relax}
\providecommand{\BIBentryALTinterwordstretchfactor}{4}
\providecommand{\BIBentryALTinterwordspacing}{\spaceskip=\fontdimen2\font plus
\BIBentryALTinterwordstretchfactor\fontdimen3\font minus
  \fontdimen4\font\relax}
\providecommand{\BIBforeignlanguage}[2]{{%
\expandafter\ifx\csname l@#1\endcsname\relax
\typeout{** WARNING: IEEEtran.bst: No hyphenation pattern has been}%
\typeout{** loaded for the language `#1'. Using the pattern for}%
\typeout{** the default language instead.}%
\else
\language=\csname l@#1\endcsname
\fi
#2}}
\providecommand{\BIBdecl}{\relax}
\BIBdecl

\bibitem{Qi1}
X.~Qi and F.~Palmieri, ``Theoretical analysis of evolutionary algorithms with
  an infinite population size in continuous space. {P}art {I}: Basic properties
  of selection and mutation,'' \emph{IEEE Transactions on Neural Networks},
  vol.~5, no.~1, pp. 102--119, 1994.

\bibitem{Qi2}
------, ``Theoretical analysis of evolutionary algorithms with an infinite
  population size in continuous space. {P}art {II}: Analysis of the
  diversification role of crossover,'' \emph{IEEE Transactions on Neural
  Networks}, vol.~5, no.~1, pp. 120--129, 1994.

\bibitem{Qi3}
G.~Yong, Q.~Xiaofeng, and F.~Palmieri, ``Comments on ``theoretical analysis of
  evolutionary algorithms with an infinite population size in continuous space.
  {I}. basic properties of selection and mutation'' [with reply],'' \emph{IEEE
  Transactions on Neural Networks}, vol.~9, no.~2, pp. 341--343, 1998.

\bibitem{vose1}
A.~Nix and M.~D. Vose, ``Modeling genetic algorithms with markov chains,''
  \emph{Annals of Mathematics and Artificial Intelligence}, vol.~5, no.~1, pp.
  79--88, 1992.

\bibitem{vose2}
M.~D. Vose, ``What are genetic algorithms? a mathematical prespective,'' in
  \emph{Evolutionary Algorithms}, ser. The IMA Volumes in Mathematics and its
  Applications, L.~Davis, K.~De~Jong, M.~Vose, and L.~Whitley, Eds.\hskip 1em
  plus 0.5em minus 0.4em\relax New York: Springer, 1999, vol. 111, pp.
  251--276.

\bibitem{vose3}
------, \emph{The Simple Genetic Algorithm : Foundations and Theory}.\hskip 1em
  plus 0.5em minus 0.4em\relax Cambridge, Mass.; London;: MIT Press, 1999.

\bibitem{vose4}
------, ``Infinite population {GA} tutorial,'' The University of Tennessee,
  Knoxville, Tech. Rep. ut-cs-04-533, 2004.

\bibitem{exch1}
R.~L. Taylor, P.~Z. Daffer, and R.~F. Patterson, \emph{Limit theorems for sums
  of exchangeable random variables}.\hskip 1em plus 0.5em minus 0.4em\relax
  Rowman \& Allanheld, 1985.

\bibitem{prob}
S.~C. Port, \emph{Theoretical probability for applications}.\hskip 1em plus
  0.5em minus 0.4em\relax New York: John Wiley \& Sons, 1994.

\bibitem{conv}
P.~Billingsley, \emph{Convergence of Probability Measures}, 2nd~ed.\hskip 1em
  plus 0.5em minus 0.4em\relax New York: John Wiley \& Sons, 1999.

\bibitem{foundation}
O.~Kallenberg, \emph{Foundations of modern probability}, 2nd~ed.\hskip 1em plus
  0.5em minus 0.4em\relax New York: Springer, 2002.

\bibitem{dudley}
R.~M. Dudley, \emph{Real Analysis and Probability}, 2nd~ed.\hskip 1em plus
  0.5em minus 0.4em\relax Cambridge University Press, 2002.

\bibitem{distance}
A.~L. Gibbs and F.~E. Su, ``On choosing and bounding probability metrics,''
  \emph{International Statistical Review}, vol.~70, no.~3, pp. 419--435, 2002.

\end{thebibliography}

%

\begin{IEEEbiography}{Bo Song}
Biography text here.
\end{IEEEbiography}

\begin{IEEEbiography}{Victor O.K. Li}
Biography text here.
\end{IEEEbiography}





\end{document}